\theoremstyle{plain}
\newtheorem{theorem}{Theorem}[section]
\newtheorem{proposition}[theorem]{Proposition}
\newtheorem{lemma}[theorem]{Lemma}
\theoremstyle{definition}
\newtheorem{definition}[theorem]{Definition}
\theoremstyle{remark}
\newtheorem{remark}[theorem]{Remark}
\newenvironment{myitemize}{\begin{list}{$\bullet$}
{\setlength{\topsep}{1mm}
\setlength{\itemsep}{0.25mm}
\setlength{\parsep}{0.25mm}
\setlength{\itemindent}{0mm}
\setlength{\partopsep}{0mm}
\setlength{\labelwidth}{15mm}
\setlength{\leftmargin}{4mm}}}{\end{list}}
\definecolor{lightred}{RGB}{227,74,51}
\definecolor{lightblue}{RGB}{67,162,202}
\icmltitlerunning{Boosting Reinforcement Learning with Strongly Delayed Feedback Through Auxiliary Short Delays}
\begin{document}

\twocolumn[
\icmltitle{Boosting Reinforcement Learning with Strongly Delayed Feedback \\ Through Auxiliary Short Delays}



\icmlsetsymbol{equal}{*}

\begin{icmlauthorlist}
\icmlauthor{Qingyuan Wu}{liverpool}
\icmlauthor{Simon Sinong Zhan}{northwest}
\icmlauthor{Yixuan Wang}{northwest}
\icmlauthor{Yuhui Wang}{kaust}\\
\icmlauthor{Chung-Wei Lin}{ntu_tw}
\icmlauthor{Chen Lv}{ntu_sig}
\icmlauthor{Qi Zhu}{northwest}
\icmlauthor{Jürgen Schmidhuber}{kaust,idisa}
\icmlauthor{Chao Huang}{liverpool,southampton}
\end{icmlauthorlist}

\icmlaffiliation{liverpool}{The University of Liverpool}
\icmlaffiliation{northwest}{Northwestern University}
\icmlaffiliation{kaust}{AI Initiative, King Abdullah University of Science and Technology}
\icmlaffiliation{idisa}{The Swiss AI Lab IDSIA/USI/SUPSI}
\icmlaffiliation{ntu_tw}{National Taiwan University}
\icmlaffiliation{ntu_sig}{Nanyang Technological University}
\icmlaffiliation{southampton}{The University of Southampton}

\icmlcorrespondingauthor{Chao Huang}{chao.huang@soton.ac.uk}

\icmlkeywords{Machine Learning, ICML}

\vskip 0.3in
]



\printAffiliationsAndNotice{}  

\begin{abstract}
Reinforcement learning (RL) is challenging in the common case of delays between events and their sensory perceptions. State-of-the-art (SOTA) state augmentation techniques either suffer from state space explosion or performance degeneration in stochastic environments. To address these challenges, we present a novel \textit{Auxiliary-Delayed Reinforcement Learning (AD-RL)} method that leverages auxiliary tasks involving short delays to accelerate RL with long delays, without compromising performance in stochastic environments. Specifically, AD-RL learns a value function for short delays and uses bootstrapping and policy improvement techniques to adjust it for long delays. We theoretically show that this can greatly reduce the sample complexity. On deterministic and stochastic benchmarks, our method significantly outperforms the SOTAs in both sample efficiency and policy performance.
Code is available at \href{https://github.com/QingyuanWuNothing/AD-RL}{{https://github.com/QingyuanWuNothing/AD-RL}}.
\end{abstract}

\section{Introduction}
\label{sec:introduction}

Reinforcement learning (RL) has already proved its mettle in complex tasks such as Backgammon~\citep{tesauro1994td}, Go~\citep{silver2018general}, MOBA Game~\citep{berner2019dota}, building control~\cite{xu2021learning,xu2022accelerate}, and various cyber-physical systems~\citep{wang2023joint,wang2023enforcing,zhan2024state}. 
Most of the above RL settings assume that the agent's interaction with the environment is instantaneous, which means that the agent can always execute commands without delay and gather feedback from the environment right away.
However, the persistent presence of delays in real-world applications significantly hampers agents' efficiency, performance, and safety if not handled properly (e.g., introducing estimation error~\citep{quadrotor_delay} and losing reproducibility~\citep{arm_delay} in practical robotic tasks).
Delay also needs to be considered in many stochastic settings such as financial markets~\citep{hasbrouck2013low} and weather forecasting~\citep{fathi2022big}.
Thus, addressing delays in RL algorithms is crucial for their deployment in real-world timing-sensitive tasks.

Delays in RL can be primarily divided into three categories: observation delay, action delay, and reward delay~\citep{firoiu2018human}, depending on where the delay occurs. Among them, observation delay receives considerable attention due to the application-wise generality and the technique-wise challenge: it has been proved to be a superset of action delay~\citep{delay_mdp, revisiting_augment}, and unlike well-studied reward delay~\citep{han2022off, kim2020automatic}, it disrupts the Markovian property of systems (i.e., the underlying dynamics depend on an unobserved state and the sequence of actions). 
In this work, we focus on \emph{non-anonymous and constant observation delay} under finite Markov Decision Process (MDP) settings, where the delay is known to the agent and always a constant number of time steps (details in Section~\ref{research_problem}), as in most existing works~\citep{memoryless, chen2021delay}.

\begin{figure*}[t]
    \vskip -0.15in
    \centering
    \scalebox{1.0}{
    \centerline{
        \subfigure[Deterministic MDP.]{\includegraphics[width=0.25\linewidth]{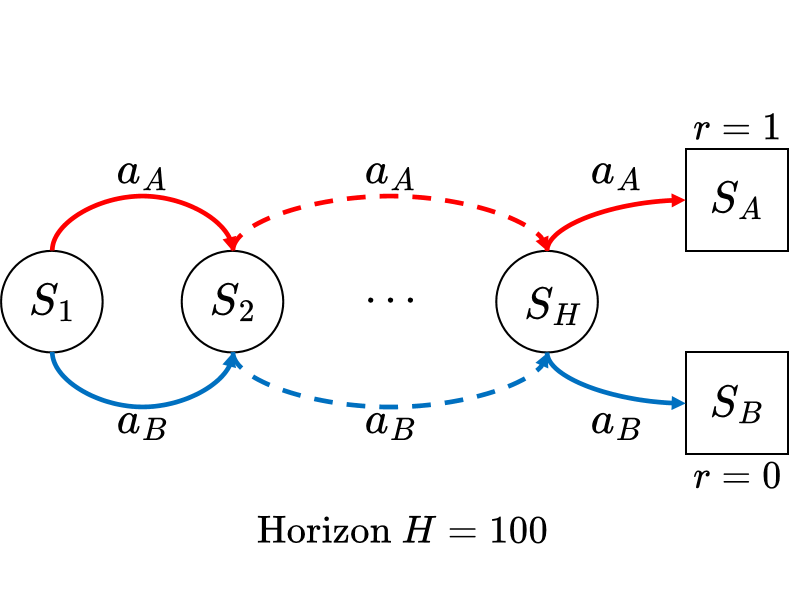}\label{fig:de_motivating_example}}
        \subfigure[Shorter $\Delta^\tau$ is better.]{\includegraphics[width=0.25\linewidth]{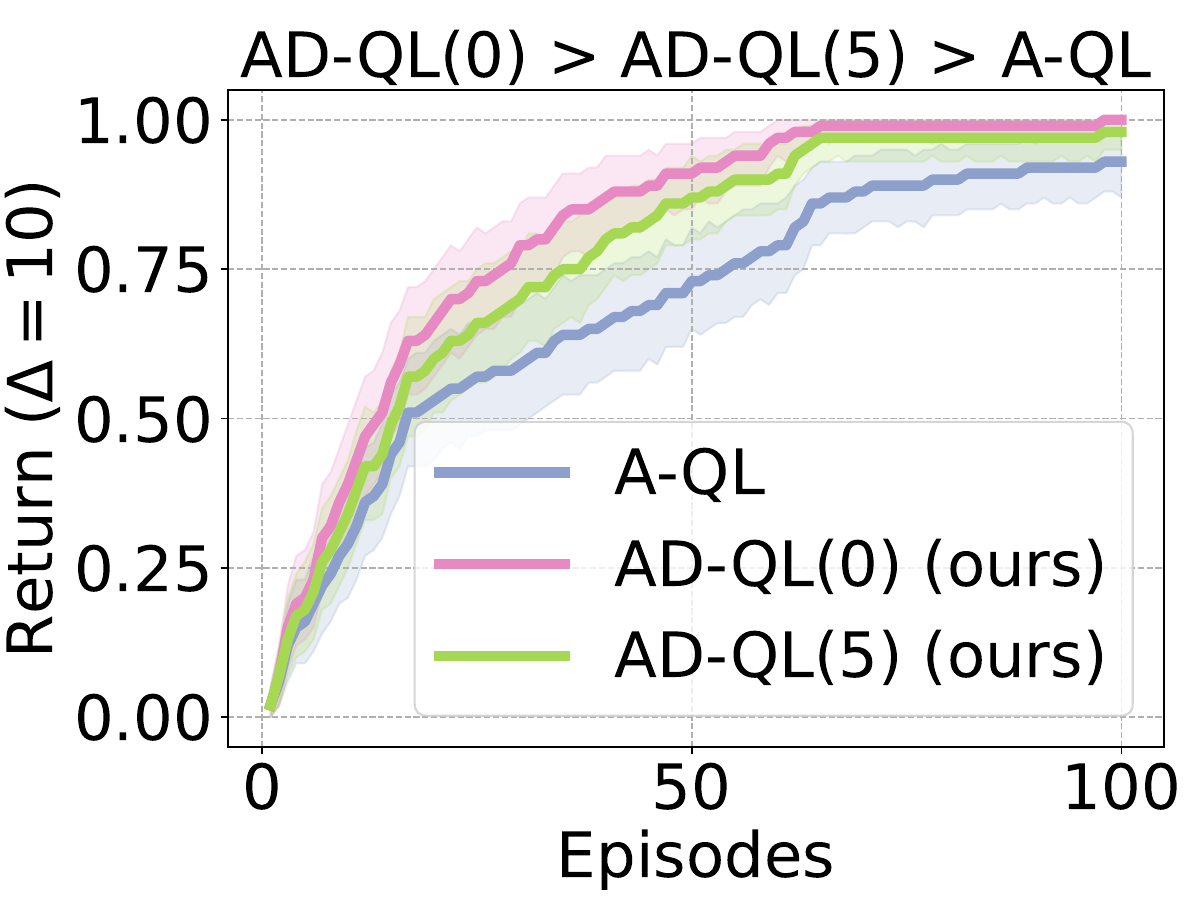}\label{fig:de_aux_delay_impact}}
        \subfigure[Stochastic MDP ($p=0.1$).]{\includegraphics[width=0.25\linewidth]{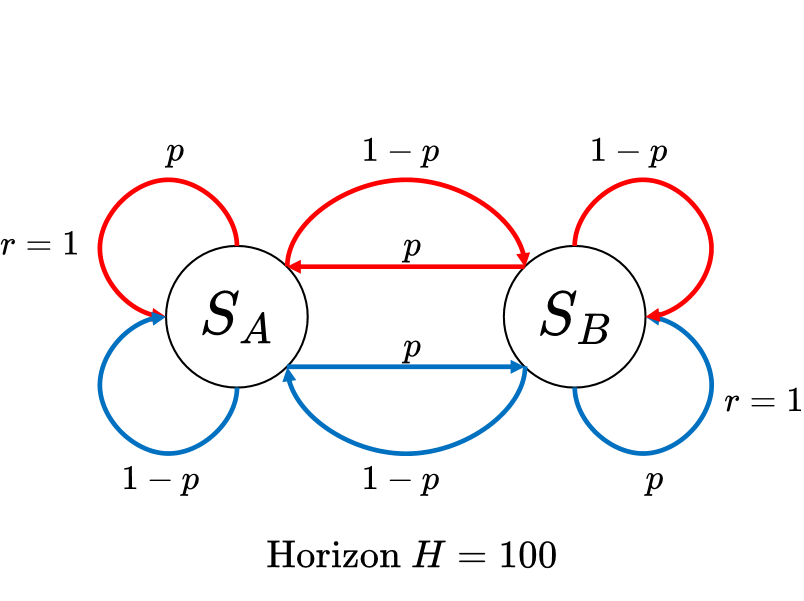}\label{fig:sto_motivating_example}}
        \subfigure[Shorter $\Delta^\tau$ is not better.]{\includegraphics[width=0.25\linewidth]{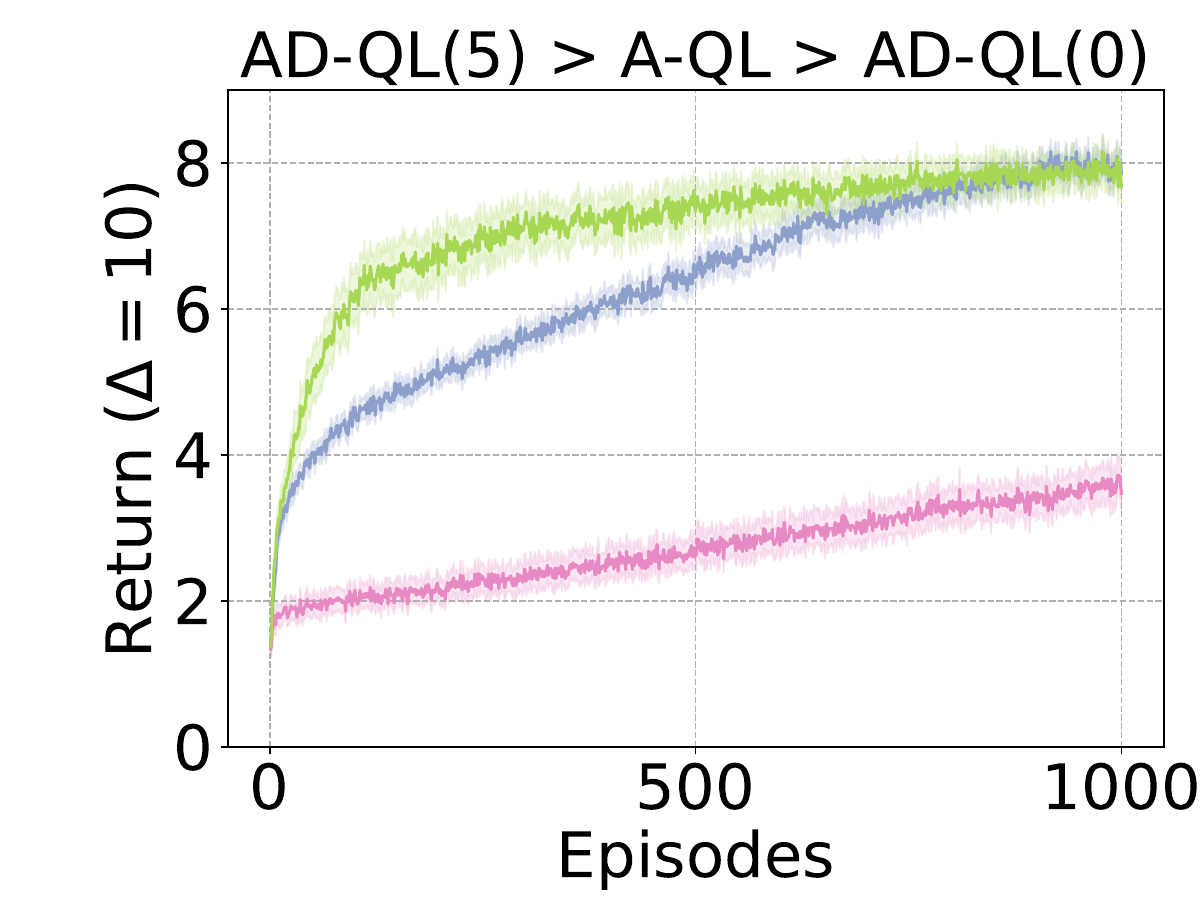}\label{fig:sto_aux_delay_impact}}
    }
    }
    \vskip -0.1in
    \label{fig:motivating_example}
    \caption{
    Our AD-RL method introduces an adjoint task with short delays, enhancing the original augmentation-based method (A-QL) in deterministic MDP (Fig.~\ref{fig:de_motivating_example}) with delay $\Delta=10$, shown in Fig.~\ref{fig:de_aux_delay_impact}. 
    Whereas, in stochastic MDP (Fig.~\ref{fig:sto_motivating_example}) with delay $\Delta=10$, a short auxiliary delays may lead to performance improvement (AD-QL(5)) or drop (AD-QL(0)) as shown in Fig.~\ref{fig:sto_aux_delay_impact}.  
    BPQL always uses a fixed $0$ auxiliary delays, equivalent to AD-QL(0) in these examples. Notably, the optimal auxiliary delays is irregular and task-specific, which we discussed in subsequent experiments in Section~\ref{boad_experiment}.
    }
    \label{fig:motivating}
    \vskip -0.15in
\end{figure*}

Promising augmentation-based approaches~\cite{altman_delay, delay_mdp} transform the delayed RL problem into an MDP by augmenting the latest observed state with a sequence of actions related to the delay, also known as the information state~\cite{bertsekas2012dynamic}.
After retrieving the Markovian property, the augmentation-based methods adopt classical RL methods to solve the delayed RL problem properly, such as augmented Q-learning (A-QL)~\cite{revisiting_augment}.
However, existing augmentation-based methods are plagued by the curse of dimensionality, shown by our toy examples in Fig.~\ref{fig:motivating}. Under a deterministic MDP setting (Fig. \ref{fig:de_motivating_example}), the original augmented state space grows exponentially with the delays, causing learning inefficiency. 
The variant of augmentation-based methods BPQL~\citep{kim2023belief} approximates the value function based on the delay-free MDP to tackle the inefficiency, which unexpectedly results in excessive information loss. 
Consequently, it cannot properly handle stochastic tasks (Fig. \ref{fig:sto_motivating_example}).

To address the aforementioned challenges, we propose a novel technique named \emph{Auxiliary-Delayed RL (AD-RL)}. 
Our AD-RL is inspired by a key observation that an elaborate auxiliary task with short delays carries much more accurate information than the delay-free case about the original task with long delays, and is still easy to learn.
By introducing the notion of delayed belief to bridge an auxiliary task with short delays and the original task with long delays, we can learn the auxiliary-delayed value function and map it to the original one.
The changeable auxiliary delays in our AD-RL has the ability to flexibly address the trade-off between the learning efficiency and approximation accuracy error in various MDPs.
In toy examples (Fig.~\ref{fig:de_motivating_example} and Fig.~\ref{fig:sto_motivating_example}) with 10 delays, we compare the performance of A-QL and AD-RL with 0 and 5 auxiliary delays respectively (AD-QL(0) and AD-QL(5)).
Our AD-RL not only remarkably enhances the learning efficiency (Fig. \ref{fig:de_aux_delay_impact}) but also possesses the flexibility to capture more information under the stochastic setting (Fig. \ref{fig:sto_aux_delay_impact}).
Notably, BPQL is a special variant of our AD-RL with fixed 0 auxiliary delays, resulting in poor performance under the stochastic setting.
In Section~\ref{boad_approch}, we develop AD-DQN and AD-SAC, extending from Deep Q-Network and Soft Actor-Critic with our AD-RL framework respectively.
Besides, we provide an in-depth theoretical analysis of learning efficiency, performance gap, and convergence in Section~\ref{boad_theory}. 
In Section \ref{boad_experiment}, we show superior efficacy of our method over the SOTA approaches on the different benchmarks. 
Our contributions can be summarized as:
\begin{myitemize}
    \vspace{-6pt}\item We address the sample inefficiency of the original augmentation-based approaches (denoted as A-RL) and excessive approximation error of the belief-based approaches by introducing AD-RL, which is more efficient with a short auxiliary-delayed task and achieves a theoretical similar performance with A-RL.
    \item Adapting the AD-RL framework, we devise AD-DQN and AD-SAC to handle discrete and continuous control tasks, respectively. 
    \item We analyze the superior sampling efficiency of AD-RL, the performance gap bound between AD-RL and A-RL, and provide the convergence guarantee of AD-RL.
    \item We show notable improvements of AD-RL over existing SOTA methods in policy performance and sampling efficiency for deterministic and stochastic benchmarks.

\end{myitemize}

\section{Preliminaries}
\label{preliminaries}
\subsection{Delay-free RL}
The delay-free RL problem is usually modelled as a Markov Decision Process (MDP), defined as a tuple $\langle \mathcal{S}, \mathcal{A}, \mathcal{P}, \mathcal{R}, \gamma, \rho \rangle$. 
An MDP consists of a state space $\mathcal{S}$, an action space $\mathcal{A}$, a probabilistic transition function $\mathcal{P}: \mathcal{S} \times \mathcal{A} \times \mathcal{S} \rightarrow [0, 1]$, a reward function $\mathcal{R}: \mathcal{S} \times \mathcal{A} \rightarrow \mathbb{R}$, a discount factor $\gamma \in (0, 1)$ and an initial state distribution $\rho$. 
At each time step $t$, based on the input state $s_t \in \mathcal{S}$ and the policy $\pi: \mathcal{S} \times \mathcal{A} \rightarrow [0, 1]$, the agent has an action $a_t \sim \pi(\cdot|s_t)$ where $a_t \in \mathcal{A}$, and then the MDP evolves to a new state $s_{t+1} \in \mathcal{S}$ based on the probabilistic transition function $\mathcal{P}$ and the agent receive a reward signal $r_t$ from reward function $\mathcal{R}(s_t, a_t)$. We use $d_{s_0}^\pi$ to denote the visited state distribution starting from $s_0$ based on policy $\pi$.
The objective of the agent in an MDP is to find a policy that maximizes return over the horizon $H$.
Given a state $s$, the value function of policy $\pi$ is defined as
$$
    V^\pi(s) = \mathop{\mathbb{E}}_{
    s_{t+1}\sim\mathcal{P}(\cdot| s_t, a_t)\atop
    a_t\sim \pi(\cdot|s_t)}
    \left[\sum_{t=0}^H\gamma^t\mathcal{R}(s_t, a_t) \middle |s_0=s \right].
$$
Similarly, given a state-action pair $(s, a)$, the Q-function of policy $\pi$ can be defined as 
$$
    Q^\pi(s, a) = \mathop{\mathbb{E}}_{
    s_{t+1}\sim\mathcal{P}(\cdot| s_t, a_t)\atop 
    a_t\sim \pi(\cdot|s_t)}
    \left[\sum_{t=0}^H\gamma^t\mathcal{R}(s_t, a_t) \middle |{s_0=s\atop a_0=a} \right].
$$

\subsection{Deep Q-Network and Soft Actor-Critic}
A widely used off-policy RL method is the Deep Q-Network (DQN)~\citep{deep_q_network} with the Q-function $Q_{\theta}: \mathcal{S} \times \mathcal{A} \rightarrow \mathbb{R}$ parameterized by $\theta$. It conducts temporal-difference (TD) learning based on the Bellman optimality equation. Given the transition data $(s_t, a_t, r_t, s_{t+1})$, DQN updates the Q-function via minimizing TD error.
    \begin{displaymath}
    \nabla_\theta \left[\frac{1}{2}(Q_{\theta}(s_t, a_t) - \mathbb{Y})^{2}\right]        
    \end{displaymath}
where $\mathbb{Y} = r_t + \gamma \max_{a_{t+1}}Q_{\theta}(s_{t+1}, a_{t+1})$ is the TD target.

Based on the maximum entropy principle, Soft Actor-Critic (SAC)~\citep{soft_actor_critic} provides a more stable actor-critic method by introducing a soft value function.
Given transition data $(s_t, a_t, r_t, s_{t+1})$, SAC conducts TD update for the critic using the soft TD target $\mathbb{Y}^{soft}$.
$$
    \begin{aligned}
    &\mathbb{Y}^{soft} = r_t \\
    &+ \gamma \mathop{\mathbb{E}}_{a_{t+1}\sim\pi_\psi(\cdot|s_{t+1})}
    \left[ {Q_{\theta}}(s_{t+1}, a_{t+1}) - \log\pi_\psi(a_{t+1}|s_{t+1}) \right]\\
    \end{aligned}
$$
where $\pi_\psi$ is the policy function parameterized by $\psi$. 
For the policy $\pi_\psi$, it can be optimized by the gradient update:
$$
\nabla_\psi \mathop{\mathbb{E}}_{\hat{a}\sim\pi_\psi(\cdot|s_t)} \left[ \log \pi_\psi(\hat{a}|s_t) - Q_{\theta}(s_t, \hat{a}) \right]
$$
\section{Problem Setting}
\label{research_problem}
We assume that delay-free MDP is endowed with a constant delay variable $\Delta \in \mathbb{N}$.
In this setting, the state of environment $s_t$ is only observed by the agent at a later timestep ${t+\Delta}$.
In other words, the real state of the environment is $s_t$, but the agent's observation is $s_{t-\Delta}$.
To retrieve the Markov property in this Delayed MDP (DMDP)~\cite{altman_delay, delay_mdp}, we need to augment the state space $\mathcal{X} = \mathcal{S} \times \mathcal{A}^\Delta$, where $\mathcal{A}^\Delta$ stands for actions in delay time steps. An augmented state $x_t = (s_{t-\Delta}, a_{t-\Delta}, \ldots, a_{t-1}) \in \mathcal{X}$ is composed with the latest observed state $s_{t-\Delta}$ and actions taken in  last $\Delta$ time steps $(a_{t-\Delta}, \ldots, a_{t-1})$.
Thus, with consideration of the delay into the dynamics, we can formulate a new MDP dynamic called \textit{Constant Delayed MDP}~(CDMDP), $\langle \mathcal{X}, \mathcal{A}, \mathcal{P}_\Delta, \mathcal{R}_\Delta, \gamma, \rho_\Delta \rangle$, where $\mathcal{X}$ is defined above, $\mathcal{A}$ stands for the action-space. $\mathcal{P}_\Delta$ is the delayed probabilistic transition function defined below.\begin{displaymath}
\begin{aligned}
    &\mathcal{P}_\Delta(x_{t+1}|x_t, a_t) = \\
    &\mathcal{P}(s_{t-\Delta+1}|s_{t-\Delta}, a_{t-\Delta})\delta_{a_t}(a'_t)\Pi_{i=1}^{\Delta-1}\delta_{a_{t-i}}(a'_{t-i})
\end{aligned}
\end{displaymath}  
where $\delta$ is the \textit{Dirac distribution}.
We also have a new delayed reward function $\mathcal{R}_\Delta$ defined as follows.
\begin{equation*}
\label{delayed_reward_function}
\mathcal{R}_\Delta(x_t, a_t) = \mathop{\mathbb{E}}_{s_t\sim b(\cdot|x_t)}[\mathcal{R}(s_t, a_t)]
\end{equation*}
Correspondingly, the initial state distribution is represented as $\rho_\Delta =\rho\Pi_{i=1}^{\Delta}\delta_{a_{-i}}$, where $b(s_t|x_t)$ is called belief defined as follows.
\begin{equation}
\begin{aligned}
    \label{eq:belief_function}
    &b(s_t|x_t) =\\
    &\int_{\mathcal{S}^\Delta}\Pi_{i=0}^{\Delta-1}\mathcal{P}(s_{t-\Delta+i+1}|s_{t-\Delta+i}, a_{t-\Delta+i})\mathrm{d}{s_{t-\Delta+i+1}} \\
\end{aligned}
\end{equation}
The idea is to infuse delayed state information $s_{t-\Delta}$ to $s_t$ into the augmented state $x_t$~\citep{gangwani2020learning}.

In this work, we assume the MDPs, policies and Q-functions satisfy the following Lipschitz Continuity~(LC) property, where \textit{Euclidean distance} is adopted in a deterministic space (e.g., $d_\mathcal{S}$ for state space $\mathcal{S}$, $d_\mathcal{A}$ for action space $\mathcal{A}$ and $d_\mathcal{R}$ for reward space $\mathcal{R}$), and $L1$-\textit{Wasserstein distance}~\citep{villani2009optimal}, denoted as $W_1$, is used in a probabilistic space (e.g., transition space $\mathcal{P}$ and policy space $\Pi$) respectively.

\begin{definition}[\textbf{Lipschitz Continuous MDP}~\cite{rl_lipschitz_continuous}]
\label{smoot_assuption}
An MDP $\langle \mathcal{S}, \mathcal{A}, \mathcal{P}, \mathcal{R}, \gamma, \rho \rangle$ is $(L_\mathcal{P}, L_\mathcal{R})$-LC, if $\forall (s_1, a_1), (s_2, a_2) \in \mathcal{S}\times\mathcal{A}$, we have
$$
\small{
\begin{aligned}
    W_1(\mathcal{P}(\cdot|s_1, a_1)||\mathcal{P}(\cdot|s_2, a_2)) \leq L_\mathcal{P} (d_{\mathcal{S}}(s_1, s_2) + d_{\mathcal{A}}(a_1, a_2))\\
    d_{\mathcal{R}}(\mathcal{R}(s_1, a_1) - \mathcal{R}(s_2, a_2)) \leq L_\mathcal{R} (d_{\mathcal{S}}(s_1, s_2) + d_{\mathcal{A}}(a_1, a_2))
\end{aligned}}
$$  
\end{definition}

\begin{definition}[\textbf{Lipschitz Continuous Policy}~\cite{rl_lipschitz_continuous}]
A stationary markovian policy $\pi$ is $L_\pi$-LC, if $\forall s_1, s_2 \in \mathcal{S}$, we have
$$
    W_1(\pi(\cdot|s_1)||\pi(\cdot|s_2)) \leq L_\pi d_{\mathcal{S}}(s_1, s_2)
$$    
\end{definition}

\begin{definition}[\textbf{Lipschitz Continuous Q-function}~\cite{rl_lipschitz_continuous}]
\label{assume_lc_q}
Given $(L_\mathcal{P}, L_\mathcal{R})$-LC MDP and $L_\pi$-LC policy $\pi$, such that $\gamma L_P(1+L_\pi) < 1$ where $\gamma$ is the discount factor of MDP, then Q-function $Q^\pi$ is $L_Q$-LC with $L_Q = \frac{L_R}{1-\gamma L_P(1+L_\pi)}$.
\end{definition}

Note that is a mild assumption commonly used in RL literature~\cite{rl_lipschitz_continuous, dida}.

\section{Our Approach: Auxiliary-Delayed RL}
\label{boad_approch}

In this section, we introduce our AD-RL framework to address the sample inefficiency of the original augmentation-based approach and illustrate the underlying relation between learning the original delayed task and the auxiliary one in Section~\ref{method_adrl}. Then we extend the AD-RL framework to the practical algorithms in Section~\ref{method_advi} and Section~\ref{method_adspi}.
 
\begin{figure}[h]
    \vskip -0.1in
    \begin{center}
        \centerline{\includegraphics[width=0.9\columnwidth]{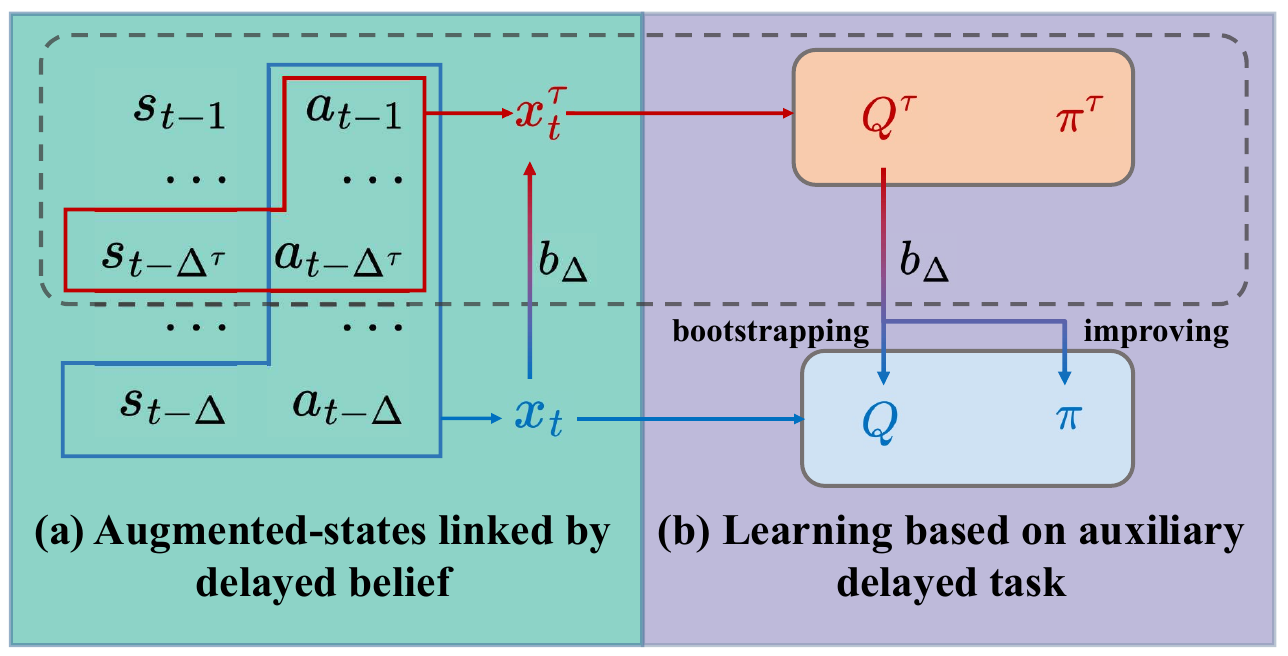}}
        \caption{The overview of AD-RL framework. Compared with the conventional augmentation-based method, AD-RL additionally introduces the auxiliary-delayed task shown in the dashline box.}
        \label{boad_diagram}
    \end{center}
    \vskip -0.3in
\end{figure}

\subsection{Auxiliary-Delayed Reinforcement Learning}
\label{method_adrl}

Instead of learning on the original augmented state space \textcolor{lightblue}{$\mathcal{X}$} with delays \textcolor{lightblue}{$\Delta$}, we introduce a corresponding auxiliary-delayed task with the shorter delays \textcolor{lightred}{$\Delta^{\tau}$}(\textcolor{lightred}{$\Delta^{\tau}$}$<$\textcolor{lightblue}{$\Delta$}) and, accordingly, a much smaller augmented state space \textcolor{lightred}{$\mathcal{X}^\tau$}.
Sharing a similar idea as belief function $b$ in Eq.~\eqref{eq:belief_function}, \textcolor{lightblue}{$\mathcal{X}$} and \textcolor{lightred}{$\mathcal{X}^\tau$} can be bridged by a delayed belief function $b_\Delta$ as:
\begin{equation}
\begin{aligned}
    &b_\Delta(\textcolor{lightred}{x^{\tau}_t}|\textcolor{lightblue}{x_t})=\\
    &\int_{\mathcal{S}^\Delta}\Pi_{i=0}^{\Delta-\Delta^{\tau}-1}\mathcal{P}(s_{t-\Delta+i+1}|s_{t-\Delta+i}, a_{t-\Delta+i})\mathrm{d}{s_{t-\Delta+i+1}}\\
\end{aligned}    
\end{equation}
where \textcolor{black}{$\textcolor{lightblue}{x_t}=(s_{t-\Delta}, a_{t-\Delta}, \ldots, a_{t-1})\in \textcolor{lightblue}{\mathcal{X}}$} and \textcolor{black}{$\textcolor{lightred}{x^{\tau}_t} = (s_{t-{\Delta^{\tau}}}, a_{t-\Delta^{\tau}}, \ldots, a_{t-1})\in \textcolor{lightred}{\mathcal{X}^\tau}$}.
As shown in the Fig.~\ref{boad_diagram} (a), both \textcolor{lightblue}{$x_t$} and \textcolor{lightred}{$x^{\tau}_t$} share the sub-sequence $(a_{t-{\Delta^{\tau}}}, \ldots, a_{t-1})$ of $\mathcal{A}^\Delta$. Besides, in the original MDP setting, transitioning from the state $s_{t-{\Delta}}$ to the state $s_{t-{\Delta^{\tau}}}$ can be accomplished by applying the action sub-sequence $(a_{t-{\Delta}}, \ldots, a_{t-{\Delta^{\tau}}+1})$.
\begin{remark}[Implicit Delayed Belief]
    Practically, we do not need to learn the delayed belief $b_\Delta$ explicitly.
    As in the CDMDP, every state will be observed by the agent eventually. 
    In other words, given an entire trajectory collected by the agent, we can create the synthetic augmented state for any required delay.
\end{remark}

With $b_\Delta$ we can transform learning the original \textcolor{lightblue}{$\Delta$}-delayed task into learning the auxiliary \textcolor{lightred}{$\Delta^\tau$}-delayed task which is much easier to learn for a much smaller augmented state space.
As shown in Fig.~\ref{boad_diagram} (b), we can use the easier-to-learn auxiliary Q-function \textcolor{lightred}{$Q^\tau$} to help bootstrapping the Q-function \textcolor{lightblue}{$Q$} or improving the policy \textcolor{lightblue}{$\pi$}. 
The specific algorithms will be proposed in the next sections.
In this way, we can significantly improve the learning efficiency of the \textcolor{lightblue}{$\Delta$}-delayed task, and a more rigorous proof will be presented in Section~\ref{boad_theory}.

As a highly flexible delayed RL framework (Algorithm~\ref{boad_algo}), our AD-RL can be naturally embedded in most of the existing RL methods to serve different task specifications. 
In this paper, we specifically develop two practical algorithms AD-DQN and AD-SAC based on DQN~\citep{deep_q_network} and SAC~\citep{soft_actor_critic} to tackle discrete and continuous control tasks respectively. 
\begin{remark}
BPQL~\cite{kim2023belief} can be seen as a special case of our AD-RL via setting the auxiliary delays to fixed zero (i.e., \textcolor{lightred}{$\Delta^\tau$} $=0$). However, the excessive loss of information might lead to poor performance in stochastic MDP in Fig.~\ref{fig:sto_aux_delay_impact}. We provide more experimental results about this in Section \ref{boad_experiment}. 
\end{remark}

\begin{algorithm}[t]
   \caption{Auxiliary-Delayed RL Framework}
   \label{boad_algo}
   \begin{algorithmic}
       \STATE {\bfseries Input:} \textcolor{lightblue}{$Q$}, \textcolor{lightblue}{$\pi$} for \textcolor{lightblue}{$\Delta$} delays; 
       \textcolor{lightred}{$Q^\tau$}, \textcolor{lightred}{$\pi^\tau$} for \textcolor{lightred}{$\Delta^\tau$} delays
       \FOR{each update step}
           \STATE \textcolor{black!30}{\# Learning \textcolor{lightred!30}{$\Delta^\tau$}-delayed task}
           \STATE Updating \textcolor{lightred}{$Q^\tau$}, \textcolor{lightred}{$\pi^\tau$} by a given delayed RL method
           \STATE \textcolor{black!30}{\# Learning \textcolor{lightblue!30}{$\Delta$}-delayed task based on \textcolor{red!30}{$Q^\tau$}}
           \STATE Bootstrapping \textcolor{lightblue}{$Q$} based on \textcolor{lightred}{$Q^\tau$} via Eq.~\eqref{aux_vi} \textcolor{black!30}{$\#$ discrete}
           \STATE Improving \textcolor{lightblue}{$\pi$} based on \textcolor{lightred}{$Q^\tau$} via Eq.~\eqref{aux_pi}\textcolor{black!30}{$\#$ continuous}
        \ENDFOR
       \STATE {\bfseries Output:} \textcolor{lightblue}{$Q$}, \textcolor{lightblue}{$\pi$}
    \end{algorithmic}
\end{algorithm}

\subsection{Discrete Control: from AD-VI to AD-DQN}
\label{method_advi}
Before developing the practical algorithm: AD-DQN, we first have to derive AD-VI, the AD-RL version of value iteration (VI)~\cite{rlai}.
In the tabular setting, AD-VI maintains two Q-functions \textcolor{lightblue}{$Q$} and \textcolor{lightred}{$Q^{\tau}$} for the original delayed task(\textcolor{lightblue}{$\Delta$}) and the auxiliary-delayed task(\textcolor{lightred}{$\Delta^{\tau}$}), respectively.
Different from \textcolor{lightred}{$Q^{\tau}$} updated by the original Bellman operator, we update \textcolor{lightblue}{$Q$} by applying the auxiliary-delayed Bellman operator $\mathcal{T}$ as follow:
\begin{equation}
    \label{aux_vi}
    \begin{aligned}
    &\mathcal{T}\textcolor{lightblue}{Q}(\textcolor{lightblue}{x_t}, a_t) \triangleq \textcolor{lightblue}{\mathcal{R}_\Delta}(\textcolor{lightblue}{x_t}, a_t) \\
        &+ \gamma \mathop{\mathbb{E}}_{
        \textcolor{lightred}{x^{\tau}_{t+1}}\sim b_\Delta(\cdot|\textcolor{lightblue}{x_{t+1}})\atop
        \textcolor{lightblue}{x_{t+1}}\sim \mathcal{P}_\Delta(\cdot|\textcolor{lightblue}{x_{t}}, a_{t})
        }
        \left[
        \textcolor{lightred}{Q^{\tau}}(\textcolor{lightred}{x^{\tau}_{t+1}}, {\mathop{\arg\max}_{a_{t+1}}}\textcolor{lightblue}{Q}(\textcolor{lightblue}{x_{t+1}}, a_{t+1}))
        \right]\\
    \end{aligned}
\end{equation}
Then AD-DQN can be extended from AD-VI naturally via approximating the Q-functions by the parameterized functions (e.g., neural networks). 
The implementation details of AD-DQN are presented in Appendix \ref{boad_implementation_detail}.

\subsection{Continuous Control: from AD-SPI to AD-SAC}
\label{method_adspi}
Similarly, before AD-SAC, we begin with deriving AD-SPI, soft policy iteration (SPI)~\cite{soft_actor_critic} in the context of our AD-RL. AD-SPI also alternates between two steps: policy evaluation and policy improvement.
In policy evaluation, we evaluate the policy \textcolor{lightblue}{$\pi$} via iteratively applying the auxiliary-delayed soft Bellman operator $\mathcal{T}^{\pi}$ as follow:
\begin{equation}
    \label{aux_pe}
    \begin{aligned}
        &\mathcal{T}^\pi \textcolor{lightblue}{Q(x_t, \textcolor{black}{a_t})} \triangleq \textcolor{lightblue}{\mathcal{R}_\Delta}(\textcolor{lightblue}{x_t}, a_t) \\
        &+ \gamma \mathop{\mathbb{E}}_{\substack{\textcolor{lightblue}{a_{t+1}\sim\pi(\cdot|x_{t+1})}\\
        \textcolor{lightred}{x^{\tau}_{t+1}}\sim b_\Delta(\cdot|\textcolor{lightblue}{x_{t+1}})\\
        \textcolor{lightblue}{x_{t+1}}\sim \mathcal{P}_\Delta(\cdot|\textcolor{lightblue}{x_{t}}, a_{t})
        }}
        \left[
        \textcolor{lightred}{Q^{\tau}(x^{\tau}_{t+1}}, \textcolor{lightblue}{a_{t+1}}) - \log\textcolor{lightblue}{\pi(a_{t+1}|x_{t+1})}
        \right]\\
    \end{aligned}
\end{equation}
where \textcolor{lightred}{$Q^{\tau}$} is updated by the original soft Bellman operator~\cite{soft_actor_critic}. In policy improvement, we will update the policy \textcolor{lightblue}{$\pi$} based on \textcolor{lightred}{$Q^{\tau}$} instead of \textcolor{lightblue}{$Q$} as follow:
\begin{equation}
    \label{aux_pi}
    \begin{aligned}
    &\textcolor{lightblue}{\pi_{new}} =\\ 
    & {\arg\min}_{\textcolor{lightblue}{\pi'\in \Pi}}\text{KL}\left(\textcolor{lightblue}{\pi'(\cdot|x_t)}
    \middle|\middle|
    \frac{\exp(\mathop{\mathbb{E}}_{
        \textcolor{lightred}{x^{\tau}_{t}}\sim b_\Delta(\cdot|\textcolor{lightblue}{x_{t}})}[\textcolor{lightred}{Q^{\tau}(x^{\tau}_{t},\cdot)}])}{\textcolor{lightred}{Z(x^{\tau}_{t},\cdot)}}
    \right)\\
    \end{aligned}
\end{equation}
where \textcolor{lightblue}{$\Pi$} is the set of policies for tractable learning, $\text{KL}$ is the Kullback-Leibler divergence and \textcolor{lightred}{$Z$} is the term for normalizing the distribution.
Functions approximations of the Q-functions (\textcolor{lightblue}{$Q$} and \textcolor{lightred}{$Q^\tau$}) and policies (\textcolor{lightblue}{$\pi$} and \textcolor{lightred}{$\pi^\tau$}) bring us AD-SAC, the practical algorithm for continuous control task.
In AD-SAC, the policy \textcolor{lightblue}{$\pi_\psi$} parameterized by \textcolor{lightblue}{$\psi$} is updated by gradient with
\begin{equation}
    \triangledown_\psi \mathop{\mathbb{E}}_{
    \textcolor{lightblue}{\hat{a}\sim\pi_\psi(\cdot|x_t)} \atop
    \textcolor{lightred}{x^{\tau}_t}\sim b_\Delta(\cdot|\textcolor{lightblue}{x_t})
    }
    \left[
    \log {\textcolor{lightblue}{\pi_\psi(\hat{a}|x_t)}} - {\textcolor{lightred}{Q^{\tau}(x^{\tau}_t}}, {\textcolor{lightblue}{\hat{a}}})
    \right]
\end{equation}
In addition, we further improve AD-SAC with multi-step value estimation~\citep{rlai, dcac} for accelerating learning.
The implementation details of AD-SAC are presented in Appendix \ref{boad_implementation_detail}.

\section{Theoretical Analysis}
\label{boad_theory}
In this section, we first discuss why our AD-RL has better \textbf{sample efficiency} in Section \ref{efficiency_analysis}, then analyse the \textbf{performance gap} between optimal auxiliary-delayed value function and optimal delayed value function in Section \ref{bounding_analysis}, and finally derive the \textbf{convergence guarantee} of our AD-RL in Section \ref{convergence_analysis}.

\subsection{Sample Efficiency Analysis}
\label{efficiency_analysis}

Though it is hard to directly derive a formal conclusion on the sample efficiency of AD-RL, as the learning process is correlated to two different learning tasks at the same time, some existing works~\citep{lr_q_learning, speedy_q_learning} related to sample complexity provide insight into why our method has better sample efficiency.
The sample complexity of the optimized Q-learning is $\mathcal{O}\left(\frac{\log(|\mathcal{S}||\mathcal{A}|)}{\epsilon^{2.5} (1-\gamma)^5}\right)$~\citep{speedy_q_learning}, which shows the amount of samples are required for Q-learning to guarantee an $\epsilon$-optimal Q-function with high confidence. We can conclude that the sample complexity of augmented Q-learning in the augmented state space with delay $\Delta$ is $\mathcal{O}\left(\frac{\log(|\mathcal{S}||\mathcal{A}|^{\Delta+1})}{\epsilon^{2.5} (1-\gamma)^5}\right)$. 
Then our AD-RL makes bootstrapping in the auxiliary \textcolor{lightred}{$\Delta^\tau$}-augmented state-space instead of the original \textcolor{lightblue}{$\Delta$}-augmented state-space, the sample efficiency is improved by $\mathcal{O}(|\mathcal{A}|^{\textcolor{lightblue}{\Delta} - \textcolor{lightred}{\Delta_\tau}})$. 
\begin{remark}[Sample Inefficiency Issue]
AD-RL provides an effective framework to alleviate the sample inefficiency issue.
However, as shown in Fig.~\ref{boad_diagram} (d), in the stochastic environment with longer delays $\Delta$, we need to set relatively longer auxiliary delays $\Delta^\tau$ to achieve better performance while somewhat compromising the sample efficiency.
\end{remark}

\subsection{Performance Gap}
\label{bounding_analysis}

While acknowledging that bootstrapping in a smaller augmented state space combined with delayed belief might lead to sub-optimal performance, we demonstrate in this section that this degradation can be effectively bounded.
We start by deriving the Lemma \ref{lemma_general_delayed_performance_diff}, unifying performance gap between policies (\textcolor{lightblue}{$\pi$} and \textcolor{lightred}{$\pi^{\tau}$}) on the same auxiliary-delayed state space \textcolor{lightred}{$\mathcal{X}^\tau$}. 
Then, we derive the bounds on the performance gap through the difference of policies in Theorem \ref{delayed_performance_difference_bound}. 
Next, we extend this bound to get the bound on Q-functions of different state spaces in Theorem \ref{delayed_q_value_diff_bound}.
Finally, we show that the bound on optimal Q-functions will become nominal under the deterministic MDP setting.

Following the similar proof sketch with~\cite{optimal_approximate_rl, dida}, we give the general delayed policies performance difference lemma as below.
\begin{lemma}[General Delayed Performance Difference, see Appendix \ref{appendix_general_delayed_performance_diff} for proof]
    \label{lemma_general_delayed_performance_diff}
    For policies \textcolor{lightred}{$\pi^{\tau}$} and \textcolor{lightblue}{$\pi$}, with delays \textcolor{lightred}{$\Delta^{\tau}$} $<$ \textcolor{lightblue}{$\Delta$}. Given any \textcolor{lightblue}{$x_t$} $\in$ \textcolor{lightblue}{$\mathcal{X}$}, the performance difference is denoted as $I$(\textcolor{lightblue}{$x_t$})
    $$
    \begin{aligned}
        I(\textcolor{lightblue}{x_t}) &= \mathop{\mathbb{E}}_{\textcolor{lightred}{x^{\tau}_t}\sim b_\Delta(\cdot|\textcolor{lightblue}{x_t})}\left[\textcolor{lightred}{V^{\tau}(x^{\tau}_t)}\right] - \textcolor{lightblue}{V(x_t)}\\
        &= \frac{1}{1-\gamma} \mathop{\mathbb{E}}_{\substack{
        \textcolor{lightred}{\hat{x}^{\tau}} \sim b_\Delta(\cdot|\textcolor{lightblue}{\hat{x}})\\
        \textcolor{lightblue}{a\sim\pi(\cdot|\hat{x})}\\
        \textcolor{lightblue}{\hat{x} \sim d_{x_t}^{\pi}}
        }}
        \left[\textcolor{lightred}{V^{\tau}(\hat{x}^{\tau})} - \textcolor{lightred}{Q^{\tau}}(\textcolor{lightred}{\hat{x}^{\tau}}, \textcolor{lightblue}{a})\right]\\
    \end{aligned}
    $$
\end{lemma}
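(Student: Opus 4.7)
The statement is a delayed analogue of the Kakade--Langford performance--difference lemma, so my plan is to mimic the standard telescoping argument, with the twist that the two value functions $V$ and $V^\tau$ live on different augmented state spaces $\mathcal{X}$ and $\mathcal{X}^\tau$. The bridge between the two spaces is precisely the delayed belief $b_\Delta(\cdot\,|\,x_t)$, so every step of the usual argument has to be carried out under an outer expectation over $x^\tau \sim b_\Delta(\cdot\,|\,x)$.

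\textbf{Step 1 (belief consistency for rewards/transitions).} First I would establish the compatibility identity
\[
\mathbb{E}_{x^\tau \sim b_\Delta(\cdot|x)}\!\bigl[\mathcal{R}_{\Delta^\tau}(x^\tau, a)\bigr] = \mathcal{R}_\Delta(x, a),
\]
and its transition counterpart, by composing the short belief $b(\cdot\,|\,x^\tau)$ with $b_\Delta(\cdot\,|\,x)$ and using the definition of $b_\Delta$ to recover the full belief $b(\cdot\,|\,x)$. This is the crucial piece that lets me replace every occurrence of $\mathcal{R}_\Delta$ along a trajectory generated by $\pi$ on $\mathcal{X}$ with the auxiliary reward $\mathcal{R}_{\Delta^\tau}$ evaluated at the sampled $x^\tau$.

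\textbf{Step 2 (telescoping).} Starting from $\mathbb{E}_{x^\tau_t\sim b_\Delta(\cdot|x_t)}[V^\tau(x^\tau_t)] - V(x_t)$, I would expand $V(x_t)$ as the discounted sum of $\mathcal{R}_\Delta$ under $\pi$, convert each reward to $\mathcal{R}_{\Delta^\tau}$ via Step~1, and then invoke the Bellman identity $\mathcal{R}_{\Delta^\tau}(x^\tau, a) = Q^\tau(x^\tau, a) - \gamma\, \mathbb{E}[V^\tau(x'^\tau)]$. A standard index shift converts the resulting double sum into
\[
\mathbb{E}_\pi\!\Bigl[\sum_{k\ge 0}\gamma^{k}\bigl(V^\tau(x^\tau_{t+k}) - Q^\tau(x^\tau_{t+k}, a_{t+k})\bigr)\Bigr],
\]
where the leading $V^\tau(x^\tau_t)$ cancels the ``extra'' term on the left. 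Rewriting the outer expectation as an expectation over the discounted occupancy $d^\pi_{x_t}$, followed by $a\sim\pi(\cdot\,|\,\hat x)$ and $\hat x^\tau\sim b_\Delta(\cdot\,|\,\hat x)$, produces the claimed $\frac{1}{1-\gamma}$ factor and matches the stated form.

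\textbf{Main obstacle.} The routine book-keeping of the telescoping is easy; the subtle part is Step~1, because the two delayed MDPs, the two belief kernels, and the sampling rule for $x^\tau_t$ all have to be threaded together correctly. Concretely, I need that sampling $x^\tau$ from $b_\Delta(\cdot\,|\,x)$ and then propagating through the shorter belief $b(\cdot\,|\,x^\tau)$ yields the same distribution over the underlying state $s$ as the long belief $b(\cdot\,|\,x)$, and analogously that the one--step dynamics on $\mathcal{X}^\tau$ induced by the true trajectory match $\mathcal{P}_{\Delta^\tau}$. Once this belief--composition lemma is in hand, the rest of the proof is the classical telescoping argument carried out under the extra outer expectation over $b_\Delta$.
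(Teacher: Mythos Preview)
Your proposal is correct and matches the paper's proof in all essential respects: both hinge on the belief--composition identity you isolate in Step~1 (which the paper invokes as the equality of the two ways of sampling $x^\tau_{t+1}$), and both then run the Kakade--Langford telescoping with $V^\tau,\,Q^\tau$ playing the role of the reference value functions. The only cosmetic difference is that the paper derives a one--step recursion $I(x_t)=\bigl[\mathbb{E}\,V^\tau-\mathbb{E}\,Q^\tau\bigr]+\gamma\,\mathbb{E}[I(x_{t+1})]$ and then unrolls it, whereas you expand $V(x_t)$ fully and telescope; these are the two standard presentations of the same argument.
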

Lemma \ref{lemma_general_delayed_performance_diff} tells us that the performance difference $I$ between policies (\textcolor{lightblue}{$\pi$} and \textcolor{lightred}{$\pi^{\tau}$}) can be measured by the corresponding value functions (\textcolor{lightblue}{$V$} and \textcolor{lightred}{$V^{\tau}$}) sitting on different augmented state spaces (\textcolor{lightblue}{$\mathcal{X}$} and \textcolor{lightred}{{$\mathcal{X}^\tau$}}). Since the connection of \textcolor{lightblue}{$\mathcal{X}$} and \textcolor{lightred}{$\mathcal{X}^\tau$} can be specified by the delayed belief $b_\Delta$, we can unify the expressions on \textcolor{lightred}{$\mathcal{X}^\tau$} to measure the performance gap, using the auxiliary value functions (\textcolor{lightred}{$V^{\tau}$} and \textcolor{lightred}{$Q^{\tau}$}). 
Assuming \textcolor{lightred}{$Q^{\tau}$} is $L_Q$-LC (Definition \ref{assume_lc_q}), we can show that this performance difference between policies (\textcolor{lightblue}{$\pi$} and \textcolor{lightred}{$\pi^{\tau}$}) can be bounded by the $L_1$-Wassestein distance between them as followed Theorem \ref{delayed_performance_difference_bound}.

\begin{theorem}[Delayed Performance Difference Bound, proof in Appendix \ref{appendix_delayed_performance_difference_bound}]
    \label{delayed_performance_difference_bound}
    For policies \textcolor{lightred}{$\pi^{\tau}$} and \textcolor{lightblue}{$\pi$}, with \textcolor{lightred}{$\Delta^{\tau}$} $<$ \textcolor{lightblue}{$\Delta$}. Given any \textcolor{lightblue}{$x_t$} $\in$ \textcolor{lightblue}{$\mathcal{X}$}, if \textcolor{lightred}{$Q^{\tau}$} is $L_Q$-LC, the performance difference between policies can be bounded as follow
    $$
    \begin{aligned}
        &\mathop{\mathbb{E}}_{
        \textcolor{lightred}{x^{\tau}_t}\sim b_\Delta(\cdot|\textcolor{lightblue}{x_t})\atop
        \textcolor{lightblue}{a_t\sim\pi(\cdot|x_t)}
        }\left[
        \textcolor{lightred}{V^{\tau}(x^{\tau}_t)}
        -
        \textcolor{lightred}{Q^{\tau}}(\textcolor{lightred}{x^{\tau}_t}, \textcolor{lightblue}{a_t})
        \right]\\
        &\leq 
        L_Q \mathop{\mathbb{E}}_{\textcolor{lightred}{x^{\tau}_t}\sim b_\Delta(\cdot|\textcolor{lightblue}{x_t})}
        \left[{\mathcal{W}_1
        (\textcolor{lightred}{\pi^{\tau}(\cdot|x^{\tau}_t)}} || 
        \textcolor{lightblue}{\pi(\cdot|x_t)})
        \right]\\
    \end{aligned}
    $$
\end{theorem}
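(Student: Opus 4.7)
The plan is to start by using the standard identity $V^{\tau}(x^{\tau}_t) = \mathbb{E}_{a\sim \pi^{\tau}(\cdot|x^{\tau}_t)}[Q^{\tau}(x^{\tau}_t, a)]$, which rewrites the left-hand side as a difference of expectations of the \emph{same} function $Q^{\tau}(x^{\tau}_t, \cdot)$ under the two action distributions $\pi^{\tau}(\cdot|x^{\tau}_t)$ and $\pi(\cdot|x_t)$. Concretely, for a fixed $x^{\tau}_t$ the inner quantity becomes
\begin{equation*}
\mathbb{E}_{a\sim \pi^{\tau}(\cdot|x^{\tau}_t)}[Q^{\tau}(x^{\tau}_t, a)] - \mathbb{E}_{a_t\sim \pi(\cdot|x_t)}[Q^{\tau}(x^{\tau}_t, a_t)].
\end{equation*}

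Next, I would invoke the Kantorovich--Rubinstein duality for the $1$-Wasserstein distance: for any $L$-Lipschitz function $f$ and any two probability measures $\mu,\nu$ on the same space, $|\mathbb{E}_\mu f - \mathbb{E}_\nu f|\le L\cdot W_1(\mu,\nu)$. Here the relevant function is $a \mapsto Q^{\tau}(x^{\tau}_t, a)$ with $x^{\tau}_t$ fixed. By the $L_Q$-LC assumption on $Q^{\tau}$ (Definition~\ref{assume_lc_q}), restricting to the action argument yields a Lipschitz constant no larger than $L_Q$, so the displayed difference above is bounded by $L_Q \cdot W_1(\pi^{\tau}(\cdot|x^{\tau}_t)\,\|\,\pi(\cdot|x_t))$.

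Finally, taking expectation over $x^{\tau}_t \sim b_\Delta(\cdot|x_t)$ on both sides and using linearity of expectation gives exactly the claimed inequality. The one subtle point, and the main thing to be careful about, is that the two policies live on \emph{different} augmented state spaces (\textcolor{lightblue}{$\pi$} on \textcolor{lightblue}{$\mathcal{X}$} and \textcolor{lightred}{$\pi^{\tau}$} on \textcolor{lightred}{$\mathcal{X}^\tau$}); what rescues the argument is that both push forward to distributions on the \emph{common} action space $\mathcal{A}$, so the $W_1$ comparison on $\mathcal{A}$ is well-defined once we condition on a pair $(x_t, x^{\tau}_t)$ linked by $b_\Delta$. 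The only mildly delicate bookkeeping is verifying that the LC constant inherited from the joint $(s,a)$-Lipschitz bound in Definition~\ref{assume_lc_q} is still $L_Q$ when one marginalises over the state coordinate — this is immediate since fixing $s=s'$ in the LC inequality leaves only the $d_{\mathcal{A}}$ term with constant $L_Q$, so no constant is lost.
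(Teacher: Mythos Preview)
Your proposal is correct and matches the paper's proof essentially step for step: the paper also expands $V^{\tau}(x^{\tau}_t)=\mathbb{E}_{a\sim\pi^{\tau}(\cdot|x^{\tau}_t)}[Q^{\tau}(x^{\tau}_t,a)]$, pulls the $b_\Delta$-expectation outside, and bounds the inner difference of action-expectations by $L_Q\,W_1(\pi^{\tau}(\cdot|x^{\tau}_t)\|\pi(\cdot|x_t))$. The only cosmetic difference is that the paper packages the Kantorovich--Rubinstein step as a standalone cited proposition (their Proposition~\ref{relation_q_pi}), whereas you invoke KR duality directly; the content is identical.
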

Combining Lemma \ref{lemma_general_delayed_performance_diff} and  Theorem \ref{delayed_performance_difference_bound}, we can extend the bound on state-values (\textcolor{lightblue}{$V$} and \textcolor{lightred}{$V^{\tau}$}) to the bound on Q-values (\textcolor{lightblue}{$Q$} and \textcolor{lightred}{$Q^{\tau}$}) and optimal Q-values (\textcolor{lightblue}{$Q_{(*)}$} and \textcolor{lightred}{$Q^{\tau}_{(*)}$}) by the $L_1$-Wassestein distance of the corresponding policies (\textcolor{lightblue}{$\pi$} and \textcolor{lightred}{$\pi^{\tau}$}) and optimal policies (\textcolor{lightblue}{$\pi_{(*)}$} and \textcolor{lightred}{$\pi^{\tau}_{(*)}$}), respectively.

\begin{theorem}[Delayed Q-value Difference Bound, proof in Appendix \ref{appendix_delayed_q_value_difference_bound}]
    \label{delayed_q_value_diff_bound}
    For policies \textcolor{lightblue}{$\pi$} and \textcolor{lightred}{$\pi^{\tau}$}, with \textcolor{lightred}{$\Delta^{\tau}$} $<$ \textcolor{lightblue}{$\Delta$}. Given any \textcolor{lightblue}{$x_t$} $\in$ \textcolor{lightblue}{$\mathcal{X}$}, if \textcolor{lightred}{$Q^{\tau}$} is $L_Q$-LC, the corresponding Q-value difference can be bounded as follow
    $$
    \begin{aligned}
        &\mathop{\mathbb{E}}_{
        \textcolor{lightblue}{a_t\sim\pi(\cdot|x_t)}\atop
        \textcolor{lightred}{x^{\tau}_t}\sim b_\Delta(\cdot|\textcolor{lightblue}{x_t})
        }\left[\textcolor{lightred}{Q^{\tau}}(\textcolor{lightred}{x^{\tau}_t}, \textcolor{lightblue}{a_t}) - \textcolor{lightblue}{Q(x_t, a_t)}\right]& \\
         &\leq \frac{\gamma L_Q}{1-\gamma} \mathop{\mathbb{E}}_{\substack{
         \textcolor{lightred}{\hat{x}^{\tau}}\sim b_\Delta(\cdot|\textcolor{lightblue}{\hat{x}})\\
         \textcolor{lightblue}{\hat{x}\sim d_{x_{t+1}}^{\pi}}\\
         \textcolor{lightblue}{x_{t+1}\sim \mathcal{P}_{\Delta}(\cdot|x_t, a_t)}\\
         \textcolor{lightblue}{a_t\sim\pi(\cdot|x_t)}}}
         \left[\mathcal{W}_1(
         \textcolor{lightred}{\pi^{\tau}(\cdot|\hat{x}^{\tau})} || 
         \textcolor{lightblue}{\pi(\cdot|\hat{x})}
         )\right]& \\
    \end{aligned}
    $$
    Specially, for optimal policies \textcolor{lightblue}{$\pi_{(*)}$} and \textcolor{lightred}{$\pi^{\tau}_{(*)}$}, if \textcolor{lightred}{$Q^{\tau}_{(*)}$} is $L_Q$-LC, the corresponding optimal Q-value difference can be bounded as follow
    $$
        \begin{aligned}
            &\left|\left|\mathop{\mathbb{E}}_{\textcolor{lightred}{x^{\tau}_t}\sim b_\Delta(\cdot|\textcolor{lightblue}{x_t})}\left[\textcolor{lightred}{Q^{\tau}_{(*)}(\textcolor{lightred}{x^{\tau}_t}}, a_t)\right] - \textcolor{lightblue}{Q_{(*)}(x_t}, a_t)\right|\right|_{\infty}& \\
            &\leq
            \frac{\gamma^2 L_Q}{(1-\gamma)^2} \mathop{\mathbb{E}}_{\substack{
                \textcolor{lightred}{\hat{x}^{\tau}}\sim b_\Delta(\cdot|\textcolor{lightblue}{\hat{x}})\\
                \textcolor{lightblue}{\hat{x}\sim d_{x_{t+1}}^{\pi}}\\
                \textcolor{lightblue}{x_{t+1}\sim \mathcal{P}_{\Delta}(\cdot|x_t, a_t)}\\
                \textcolor{lightblue}{a_t\sim\pi_{(*)}(\cdot|x_t)}}}
                \left[\mathcal{W}_1\left(
                \textcolor{lightred}{\pi^{\tau}_{(*)}(\cdot|\hat{x}^{\tau})} \middle|\middle| 
                \textcolor{lightblue}{\pi_{(*)}(\cdot|\hat{x})}
                \right)\right]& \\
        \end{aligned}
    $$
\end{theorem}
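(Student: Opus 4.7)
The plan is to prove both inequalities by first reducing the $Q$-value gap to a $V$-value gap via the Bellman equation, and then invoking Lemma~\ref{lemma_general_delayed_performance_diff} followed by Theorem~\ref{delayed_performance_difference_bound}.

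For the first inequality, I would expand $\mathbb{E}_{x^\tau_t \sim b_\Delta(\cdot|x_t)}[Q^\tau(x^\tau_t, a_t)] - Q(x_t, a_t)$ using the two Bellman equations and exploit two cancellations: (i) the tower property of the belief gives $\mathbb{E}_{x^\tau_t \sim b_\Delta(\cdot|x_t)}[\mathcal{R}^\tau_{\Delta^\tau}(x^\tau_t, a_t)] = \mathcal{R}_\Delta(x_t, a_t)$, so the reward terms cancel; and (ii) the belief commutes with the transition in the sense that $\mathbb{E}_{x^\tau_t \sim b_\Delta(\cdot|x_t),\, x^\tau_{t+1} \sim \mathcal{P}^\tau(\cdot|x^\tau_t, a_t)}[\cdot] = \mathbb{E}_{x_{t+1} \sim \mathcal{P}_\Delta(\cdot|x_t, a_t),\, x^\tau_{t+1} \sim b_\Delta(\cdot|x_{t+1})}[\cdot]$. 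After these reductions, the left-hand side collapses to $\gamma\, \mathbb{E}_{x_{t+1}}\!\left[\mathbb{E}_{x^\tau_{t+1} \sim b_\Delta(\cdot|x_{t+1})}[V^\tau(x^\tau_{t+1})] - V(x_{t+1})\right]$.

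Next, I would apply Lemma~\ref{lemma_general_delayed_performance_diff} to the inner $V$-gap, rewriting it as $\tfrac{1}{1-\gamma}$ times an expectation of $V^\tau(\hat{x}^\tau) - Q^\tau(\hat{x}^\tau, a)$ over the visitation distribution $d^{\pi}_{x_{t+1}}$, and then invoke Theorem~\ref{delayed_performance_difference_bound} to upper-bound that inner expectation by $L_Q\, \mathbb{E}[\mathcal{W}_1(\pi^\tau(\cdot|\hat{x}^\tau) \,\|\, \pi(\cdot|\hat{x}))]$. The factor $\gamma$ from the Bellman step combined with $\tfrac{1}{1-\gamma}$ from the lemma yields the claimed $\tfrac{\gamma L_Q}{1-\gamma}$ prefactor, and taking a final expectation over $a_t \sim \pi(\cdot|x_t)$ closes the first inequality.

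For the second inequality, I would run the same Bellman-based reduction on the optimal quantities $Q_{(*)}, Q^\tau_{(*)}, \pi_{(*)}, \pi^\tau_{(*)}$. The novelty is that $V_{(*)}$ and $V^\tau_{(*)}$ are defined by maxima rather than fixed-policy expectations, so I would decompose $\mathbb{E}_{x'^\tau}[V^\tau_{(*)}(x'^\tau)] - V_{(*)}(x')$ by inserting and subtracting the mixed term $\mathbb{E}_{a' \sim \pi_{(*)}(\cdot|x'),\, x'^\tau \sim b_\Delta(\cdot|x')}[Q^\tau_{(*)}(x'^\tau, a')]$. The first resulting piece is controlled by $L_Q\, \mathbb{E}[\mathcal{W}_1(\pi^\tau_{(*)} \,\|\, \pi_{(*)})]$ using the Lipschitz continuity of $Q^\tau_{(*)}$, while the second piece has precisely the form of the quantity $\epsilon(x, a) := \mathbb{E}_{x^\tau}[Q^\tau_{(*)}(x^\tau, a)] - Q_{(*)}(x, a)$ we are trying to bound. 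Substituting back into the Bellman-reduced expression yields a self-bounding inequality of the form $\|\epsilon\|_\infty \leq \gamma\, L_Q\, \mathbb{E}[\mathcal{W}_1] + \gamma\, \|\epsilon\|_\infty$, which on rearrangement contributes an additional $\tfrac{1}{1-\gamma}$ factor. Combined with the $\tfrac{1}{1-\gamma}$ picked up from a second application of Lemma~\ref{lemma_general_delayed_performance_diff} along the future trajectory, this produces the stated $\tfrac{\gamma^2 L_Q}{(1-\gamma)^2}$ constant.

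The main obstacle will be the optimal case: careful bookkeeping of the max operator and of the change of measures between $x_t, x^\tau_t, x_{t+1}, \hat{x}, \hat{x}^\tau$ is required so that the residual Wasserstein term isolates exactly $\mathcal{W}_1(\pi^\tau_{(*)}(\cdot|\hat{x}^\tau) \,\|\, \pi_{(*)}(\cdot|\hat{x}))$ under the correct visitation distribution, and so that the self-bounding recursion closes cleanly in $\|\cdot\|_\infty$. The belief--transition commutation identity used at the start is technical but follows directly from the Markov structure of the underlying MDP together with the definitions of $b_\Delta$, $\mathcal{P}_\Delta$, and $\mathcal{P}^\tau$.
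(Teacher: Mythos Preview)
Your proposal is correct and follows the paper's proof closely. The first inequality is an exact match: Bellman expansion, the reward cancellation $\mathbb{E}_{x^\tau_t\sim b_\Delta}[\mathcal{R}_{\Delta^\tau}(x^\tau_t,a_t)]=\mathcal{R}_\Delta(x_t,a_t)$, the belief--transition commutation, reduction to $\gamma\,\mathbb{E}_{x_{t+1}}[I(x_{t+1})]$, and then Lemma~\ref{lemma_general_delayed_performance_diff} together with Theorem~\ref{delayed_performance_difference_bound}.

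For the optimal case the paper likewise performs a Bellman reduction, inserts the mixed term $\mathbb{E}_{a_{t+1}\sim\pi_{(*)}}[Q^{\tau}_{(*)}(x^{\tau}_{t+1},a_{t+1})]$, and closes via a self-bounding inequality in $\|\cdot\|_\infty$, so your high-level plan is the same. The one place your bookkeeping drifts is in how the constant $\tfrac{\gamma^2 L_Q}{(1-\gamma)^2}$ and the visitation distribution $d^{\pi}_{x_{t+1}}$ appear. Your recursion $\|\epsilon\|_\infty\le \gamma L_Q\,\mathbb{E}[\mathcal{W}_1]+\gamma\|\epsilon\|_\infty$, obtained by bounding the $V^\tau_{(*)}-\mathbb{E}_{\pi_{(*)}}[Q^\tau_{(*)}]$ piece via a one-step Lipschitz estimate, solves to $\tfrac{\gamma L_Q}{1-\gamma}$ with a \emph{one-step} Wasserstein term, not the stated bound; the extra ``second application of Lemma~\ref{lemma_general_delayed_performance_diff} along the future trajectory'' you invoke afterwards is not well-defined once the recursion is already closed. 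The paper instead applies the already-proven \emph{first part} of the theorem (with $\pi=\pi_{(*)}$) to the piece $\gamma\,\mathbb{E}_{a_{t+1}\sim\pi_{(*)}}\bigl[Q^{\tau}_{(*)}(x^{\tau}_{t+1},a_{t+1})-Q_{(*)}(x_{t+1},a_{t+1})\bigr]$, which directly contributes the factor $\tfrac{\gamma L_Q}{1-\gamma}$ together with the visitation distribution, and then the additional $\tfrac{1}{1-\gamma}$ from solving the $\|\cdot\|_\infty$-recursion supplies the remaining factor. Replacing your one-step Lipschitz bound by this direct use of the first part fixes the constant and the measure simultaneously.
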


The results provided in Theorem \ref{delayed_q_value_diff_bound}, however, is hard to derive the unifying insight further.
For instance, it is difficult to calculate
\textcolor{black}{$\mathcal{W}_1(\textcolor{lightred}{\pi^{\tau}_{(*)}(\cdot|\hat{x}^{\tau})} || \textcolor{lightblue}{\pi_{(*)}(\cdot|\hat{x})})$}, as the optimal policies (\textcolor{lightblue}{$\pi_{(*)}$} and \textcolor{lightred}{$\pi^\tau_{(*)}$}) indeed depend on the property of the underlying MDP. In the case of deterministic MDP where the optimal policies (\textcolor{lightblue}{$\pi_{(*)}$} and \textcolor{lightred}{$\pi^\tau_{(*)}$}) are the same, we can conclude that the optimal delayed Q-value difference becomes nominal in the following remark. 
\begin{remark}[Deterministic MDP Case]
    For deterministic MDP, $b_\Delta$ is also deterministic and becomes injection function meaning that given the \textcolor{lightblue}{$x$}, the \textcolor{lightred}{$x^{\tau}$} is determined.
    And then due to \textcolor{lightblue}{$\pi_{(*)}(\cdot|x)$} = $\mathop{\mathbb{E}}_{\textcolor{lightred}{x^{\tau}}\sim b_\Delta(\cdot|\textcolor{lightblue}{x})}\left[\textcolor{lightred}{\pi^{\tau}_{(*)}(\cdot|x^{\tau})}\right]$,
    we have $$
    \mathop{\mathbb{E}}_{\textcolor{lightred}{x^{\tau}_t}\sim b_\Delta(\cdot|\textcolor{lightblue}{x_t})}\left[\textcolor{lightred}{Q^{\tau}_{(*)}(\textcolor{lightred}{x^{\tau}_t}}, a_t)\right]
    =
    \textcolor{lightblue}{Q_{(*)}(x_t}, a_t)
    $$
\end{remark}
We also discuss stochastic MDPs as summarized in the following remark.
\begin{remark}[Stochastic MDP Case]
    In the case of stochastic MDP, the performance gap might become larger as the difference between \textcolor{lightblue}{$\Delta$} and \textcolor{lightred}{$\Delta^\tau$} increases.
    Using a moderate auxiliary delays \textcolor{lightred}{$\Delta^\tau$} could trade-off the sample efficiency (closer to $0$) and performance consistency (closer to \textcolor{lightblue}{$\Delta$}). We also provide experimental results to investigate this in Section \ref{boad_experiment}.
    Additionally, in Appendix \ref{appendix_sto}, we give a stochastic MDP case to exemplify the above conclusion.
\end{remark}

\subsection{Convergence Analysis}
\label{convergence_analysis}
We show in this section that our AD-RL does not sacrifice the convergence.
Before presenting the final result, we assume action space $\mathcal{A}$ is finite, which means $|\mathcal{A}| < \infty$. Then, for any \textcolor{lightblue}{$x$} $\in$ \textcolor{lightblue}{$\mathcal{X}$}, the $L_1$-Wassestein distance between policies \textcolor{lightblue}{$\pi$} and \textcolor{lightred}{$\pi^{\tau}$} becomes the bounded $l_1$ distance, and then the following holds
$$
    \mathop{\mathbb{E}}_{\textcolor{lightred}{x^{\tau}}\sim b_\Delta(\cdot|\textcolor{lightblue}{x})}
    \left[
        \mathcal{W}_1(
         \textcolor{lightred}{\pi^{\tau}(\cdot|x^{\tau})} || 
         \textcolor{lightblue}{\pi(\cdot|x)}
        )
    \right]
    < \infty
$$
Furthermore, the entropy of the policy \textcolor{lightblue}{$\pi$} is also bounded as $\mathcal{H}(\textcolor{lightblue}{\pi(\cdot|x)}) < \infty$.
Then, we show the convergence guarantee of AD-VI and AD-SPI in Section \ref{advi_convergence} and Section \ref{adspi_convergence} respectively.

\subsubsection{Convergence of AD-VI}
\label{advi_convergence}
We assume that auxiliary Q-function \textcolor{lightred}{$Q^\tau$} converges to the fixed point \textcolor{lightred}{$Q^\tau_{(*)}$} and the Q-function \textcolor{lightblue}{$Q$} is updated based on \textcolor{lightred}{$Q^\tau_{(*)}$}, since we only care about the final converged point of \textcolor{lightblue}{$Q$}.
Then, we can update an initial Q-function \textcolor{lightblue}{$Q_{(0)}$} by repeatedly applying the Bellman operator $\mathcal{T}$ given by Eq.~\eqref{aux_vi} to get the fixed point \textcolor{lightblue}{$Q_{(\approx)}$}$=\mathop{\mathbb{E}}_{\textcolor{lightred}{x^{\tau}}\sim b_\Delta(\cdot|\textcolor{lightblue}{x})}\left[\textcolor{lightred}{Q^{\tau}_{(*)}}(\textcolor{lightred}{x^{\tau}}, a)\right]$ (Theorem \ref{vi_convergence}).

\begin{theorem}[AD-VI Convergence Guarantee, proof in Appendix \ref{appendix_vi_convergence}]
\label{vi_convergence}
Consider the bellman operator $\mathcal{T}$ in Eq.~\eqref{aux_vi} and the initial Q-function \textcolor{lightblue}{$Q_{(0)}$}: $\textcolor{lightblue}{\mathcal{X}}\times\mathcal{A}\rightarrow\mathbb{R}$ with $|\mathcal{A}| < \infty$, and define a sequence \textcolor{lightblue}{$\{Q_{(k)}\}_{k=0}^\infty$} where \textcolor{lightblue}{$Q_{(k+1)}$}$= \mathcal{T}$\textcolor{lightblue}{$Q_{(k)}$}. As $k\rightarrow\infty$, \textcolor{lightblue}{$Q_{(k)}$} will converge to the fixed point \textcolor{lightblue}{$Q_{(\approx)}$} with \textcolor{lightred}{$Q^{\tau}$} converges to \textcolor{lightred}{$Q^{\tau}_{(*)}$}. And for any $(\textcolor{lightblue}{x_t}, a_t)$ $\in$ $\textcolor{lightblue}{\mathcal{X}} \times \mathcal{A}$, we have
$$
    \textcolor{lightblue}{Q_{(\approx)}}(\textcolor{lightblue}{x_t}, a_t)
    = 
    \mathop{\mathbb{E}}_{\textcolor{lightred}{x^{\tau}_t}\sim b_\Delta(\cdot|\textcolor{lightblue}{x_t})}\left[\textcolor{lightred}{Q^{\tau}_{(*)}}(\textcolor{lightred}{x^{\tau}_t}, a_t)\right]
$$
\end{theorem}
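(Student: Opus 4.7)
My plan is to invoke the Banach fixed-point theorem and then identify the limit explicitly by substitution. Because the theorem's hypothesis assumes $Q^\tau$ has already converged to $Q^\tau_{(*)}$, the iteration on $Q$ decouples in the limit, so I would freeze $Q^\tau := Q^\tau_{(*)}$ and regard $\mathcal{T}$ as an operator on the complete metric space $(\mathbb{R}^{\mathcal{X} \times \mathcal{A}}, \|\cdot\|_\infty)$, which is well-defined in the tabular setting because $|\mathcal{X}|, |\mathcal{A}| < \infty$.

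The first substantive step is contraction. Introducing the fixed auxiliary function $\widehat{Q}(x, a) := \mathbb{E}_{x^\tau \sim b_\Delta(\cdot|x)}[Q^\tau_{(*)}(x^\tau, a)]$, the operator rewrites as $\mathcal{T}Q(x_t, a_t) = \mathcal{R}_\Delta(x_t, a_t) + \gamma\,\mathbb{E}_{x_{t+1} \sim \mathcal{P}_\Delta(\cdot|x_t, a_t)}[\widehat{Q}(x_{t+1}, \arg\max_a Q(x_{t+1}, a))]$. For two iterates $Q_1, Q_2$ the reward term cancels, and I would bound the remaining difference by the envelope-type inequality $|\widehat{Q}(x, \arg\max Q_1) - \widehat{Q}(x, \arg\max Q_2)| \leq \max_a |Q_1(x, a) - Q_2(x, a)|$ passed through the greedy step. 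This is the main obstacle, since $\mathcal{T}$ depends on $Q$ only through the (discontinuous) induced greedy action, so the standard Bellman contraction does not apply verbatim; I anticipate needing the Lipschitz continuity of $\widehat{Q}$ in its action argument (inherited from the LC assumptions introduced in Section~\ref{research_problem}) combined with the finiteness of $\mathcal{A}$ to deliver $\|\mathcal{T}Q_1 - \mathcal{T}Q_2\|_\infty \leq \gamma\|Q_1 - Q_2\|_\infty$. Banach then yields a unique fixed point $Q_{(\approx)}$ and geometric convergence $Q_{(k)} \to Q_{(\approx)}$.

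The final step verifies that the candidate $Q_{(\approx)}(x_t, a_t) = \mathbb{E}_{x^\tau_t \sim b_\Delta(\cdot|x_t)}[Q^\tau_{(*)}(x^\tau_t, a_t)]$ satisfies $\mathcal{T}Q_{(\approx)} = Q_{(\approx)}$; uniqueness then forces the limit to equal the candidate. The key ingredient is a \emph{path-equivalence} identity: for any bounded $g : \mathcal{X}^\tau \to \mathbb{R}$,
\[
\mathbb{E}_{x_{t+1} \sim \mathcal{P}_\Delta(\cdot|x_t, a_t)} \mathbb{E}_{x^\tau_{t+1} \sim b_\Delta(\cdot|x_{t+1})}[g(x^\tau_{t+1})] = \mathbb{E}_{x^\tau_t \sim b_\Delta(\cdot|x_t)} \mathbb{E}_{x^\tau_{t+1} \sim \mathcal{P}_{\Delta^\tau}(\cdot|x^\tau_t, a_t)}[g(x^\tau_{t+1})],
\]
since both sides roll the hidden state forward by exactly $\Delta - \Delta^\tau + 1$ transitions using the same action sub-sequence $a_{t-\Delta}, \ldots, a_{t-\Delta^\tau}$. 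An analogous argument gives $\mathbb{E}_{x^\tau_t \sim b_\Delta(\cdot|x_t)}[\mathcal{R}_{\Delta^\tau}(x^\tau_t, a_t)] = \mathcal{R}_\Delta(x_t, a_t)$ because $b(\cdot|x_t)$ factors through $b_\Delta$. Substituting the candidate into $\mathcal{T}Q_{(\approx)}$, using the Bellman optimality equation satisfied by $Q^\tau_{(*)}$ on the auxiliary-delayed MDP, and applying these two identities, telescopes one layer of bootstrap and collapses the expression back to $\mathbb{E}_{x^\tau_t \sim b_\Delta(\cdot|x_t)}[Q^\tau_{(*)}(x^\tau_t, a_t)]$, completing the verification.
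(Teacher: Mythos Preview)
Your fixed-point verification is correct and mirrors the paper: the path-equivalence identity you invoke is exactly what the paper uses (it appears in the proof of Lemma~\ref{appendix_general_delayed_performance_diff} and is reused in the fixed-point computation), and together with the reward identity $\mathcal{R}_\Delta(x_t,a_t)=\mathbb{E}_{x^\tau_t\sim b_\Delta(\cdot|x_t)}[\mathcal{R}_{\Delta^\tau}(x^\tau_t,a_t)]$ and the Bellman optimality equation for $Q^\tau_{(*)}$, it collapses $\mathcal{T}Q_{(\approx)}$ back to $Q_{(\approx)}$ exactly as in the paper.

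The contraction step, however, has a real gap. The inequality you need,
\[
\bigl|\widehat{Q}(x,\arg\max_a Q_1(x,a))-\widehat{Q}(x,\arg\max_a Q_2(x,a))\bigr|\le \max_a |Q_1(x,a)-Q_2(x,a)|,
\]
is false in general: with $|\mathcal{A}|=2$, set $\widehat{Q}(x,a_1)=M$, $\widehat{Q}(x,a_2)=0$ for large $M$, and choose $Q_1,Q_2$ differing by only $\epsilon$ yet swapping their argmax; the left side equals $M$ while the right equals $\epsilon$. Lipschitz continuity of $\widehat{Q}$ in its action argument does not rescue this, since it only controls $|\widehat{Q}(x,a_1)-\widehat{Q}(x,a_2)|$ by $L_Q\,d_{\mathcal{A}}(a_1,a_2)$, and the selector $Q\mapsto\arg\max_a Q(x,a)$ is not Lipschitz (or even continuous) from $(\mathbb{R}^{\mathcal{X}\times\mathcal{A}},\|\cdot\|_\infty)$ into $\mathcal{A}$. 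Because $\mathcal{T}$ depends on $Q$ only through this discontinuous greedy selector, it is piecewise constant in $Q$ and cannot be a $\gamma$-contraction in $\|\cdot\|_\infty$. The paper does not attempt a direct Banach argument; instead it adds and subtracts $\gamma\,\mathbb{E}_{x_{t+1}}[\max_a Q(x_{t+1},a)]$ to recast $\mathcal{T}Q$ in the standard value-iteration form $r^\tau+\gamma\,\mathbb{E}[\max_a Q]$, invokes Theorem~\ref{delayed_q_value_diff_bound} and the bounded-Wasserstein assumption to show the effective reward $r^\tau$ is uniformly bounded, and then appeals to classical value iteration with bounded rewards.
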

Theorem \ref{vi_convergence} guarantees the convergence of the Bellman operator $\mathcal{T}$ and ensures the stability and effectiveness of the learning process in the context of the corresponding practical method, AD-DQN.

\begin{table*}[t]
\caption{Results of MuJoCo tasks with 25 delays for 1M global time-steps. Each method was evaluated with 10 trials and is shown with the standard deviation denoted by $\pm$. The best performance is in blue.}
\centering
\scalebox{0.75}{
\begin{tabular}{c|cccccccccc}
\hline
Delays=25        
    & Ant-v4              
    & HalfCheetah-v4      
    & Hopper-v4           
    & Humanoid-v4         
    & HumanoidStandup-v4    
    & Pusher-v4          
    & Reacher-v4        
    & Swimmer-v4         
    & Walker2d-v4         
\\ \hline
A-SAC
    & $0.07_{\pm 0.07}$
    & $0.04_{\pm 0.01}$
    & $0.13_{\pm 0.04}$
    & $0.05_{\pm 0.01}$
    & $0.97_{\pm 0.09}$
    & $0.49_{\pm 0.32}$
    & $0.96_{\pm 0.02}$
    & $0.72_{\pm 0.02}$
    & $0.12_{\pm 0.02}$
\\
DC/AC
    & $0.19_{\pm 0.02}$
    & $0.16_{\pm 0.07}$
    & $0.19_{\pm 0.04}$
    & $0.04_{\pm 0.01}$
    & $1.03_{\pm 0.03}$
    & $1.12_{\pm 0.02}$
    & \textcolor{blue}{$1.00_{\pm 0.00}$}
    & $0.78_{\pm 0.12}$
    & $0.26_{\pm 0.08}$
\\
DIDA
    & $0.29_{\pm 0.07}$
    & $0.12_{\pm 0.03}$
    & $0.27_{\pm 0.08}$
    & $0.07_{\pm 0.00}$
    & $0.97_{\pm 0.02}$
    & $1.04_{\pm 0.01}$
    & $0.98_{\pm 0.01}$
    & $0.93_{\pm 0.09}$
    & $0.10_{\pm 0.02}$
\\
BPQL
    & $0.57_{\pm 0.11}$
    & \textcolor{blue}{$0.87_{\pm 0.04}$}
    & \textcolor{blue}{$1.21_{\pm 0.18}$}
    & $0.12_{\pm 0.01}$
    & $1.09_{\pm 0.05}$
    & $1.07_{\pm 0.06}$
    & $0.87_{\pm 0.05}$
    & $1.36_{\pm 0.56}$
    & $0.59_{\pm 0.30}$
\\
AD-SAC (ours) 
    & \textcolor{blue}{$0.66_{\pm 0.04}$}
    & $0.71_{\pm 0.12}$
    & $0.86_{\pm 0.25}$
    & \textcolor{blue}{$0.25_{\pm 0.16}$}
    & \textcolor{blue}{$1.15_{\pm 0.08}$}
    & \textcolor{blue}{$1.29_{\pm 0.03}$}
    & $0.98_{\pm 0.02}$
    & \textcolor{blue}{$2.52_{\pm 0.40}$}
    & \textcolor{blue}{$0.72_{\pm 0.11}$}
\\ \hline
\end{tabular}
}
\label{mujoco_results}
\end{table*}

\begin{figure*}[t]
\begin{center}
\centerline{
    \subfigure[(deterministic) 10 delays]{\includegraphics[width=0.32\linewidth]{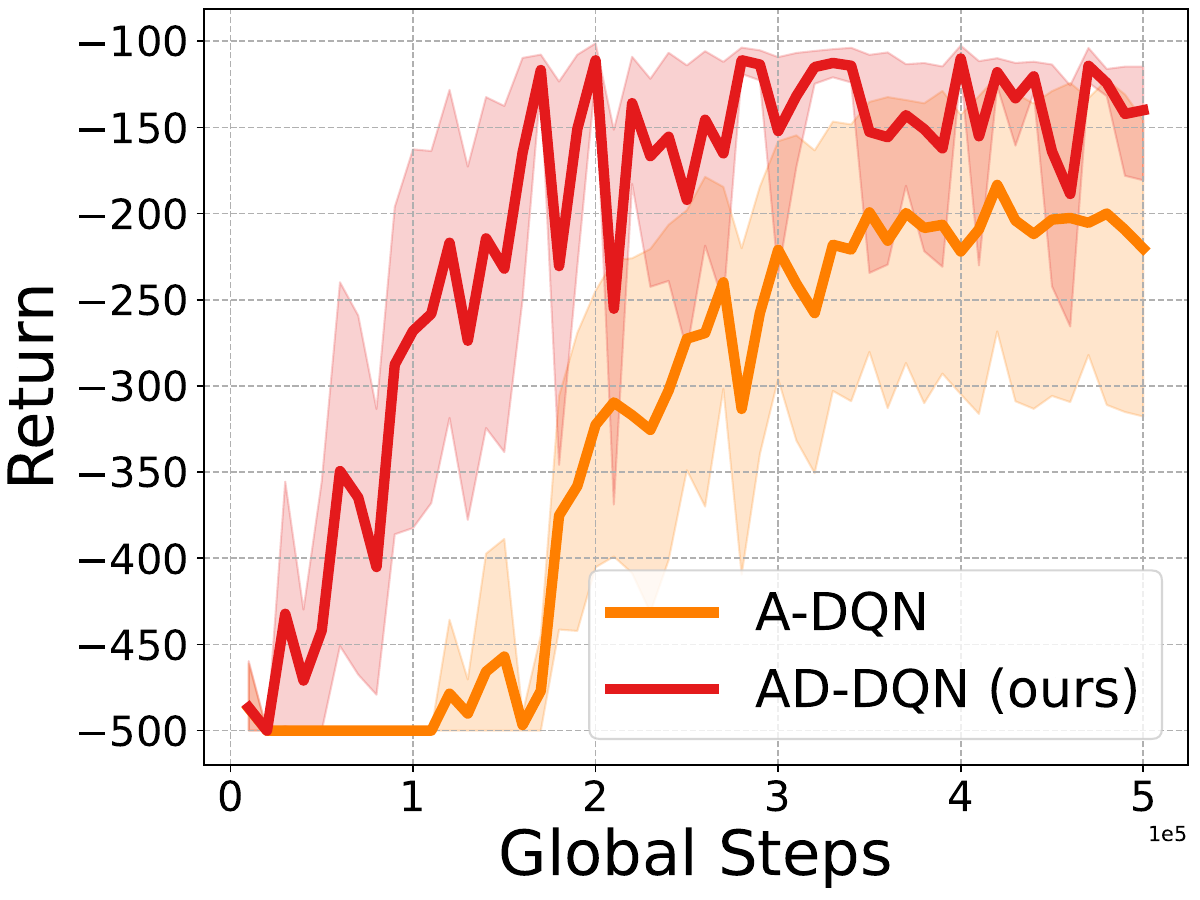} \label{fig:de_acrobot_results}}
    \subfigure[(deterministic) varying delays]{\includegraphics[width=0.32\linewidth]{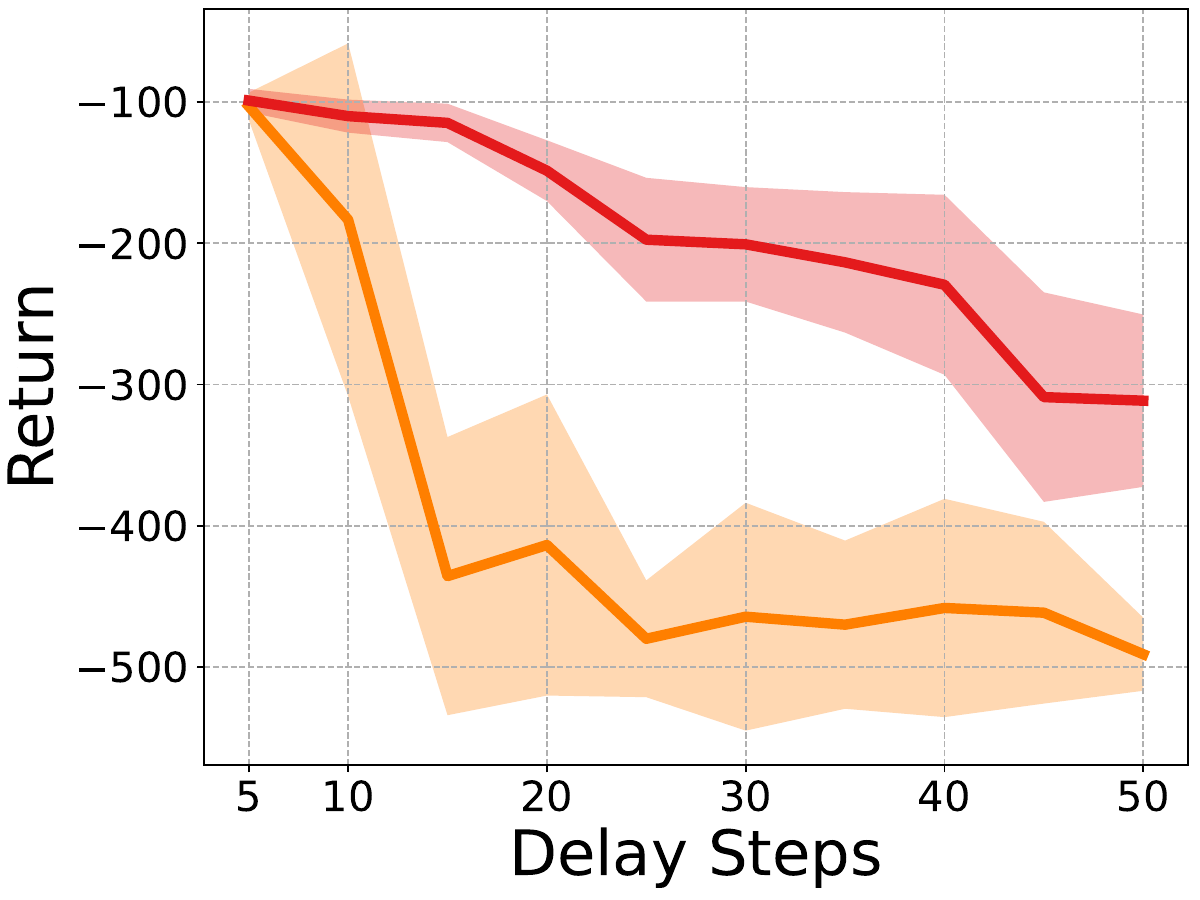} \label{fig:de_acrobot_delays}}
    \subfigure[(stochastic) normalized return of \textcolor{lightred}{$\Delta^\tau_{best}$}]{\includegraphics[width=0.33\linewidth]{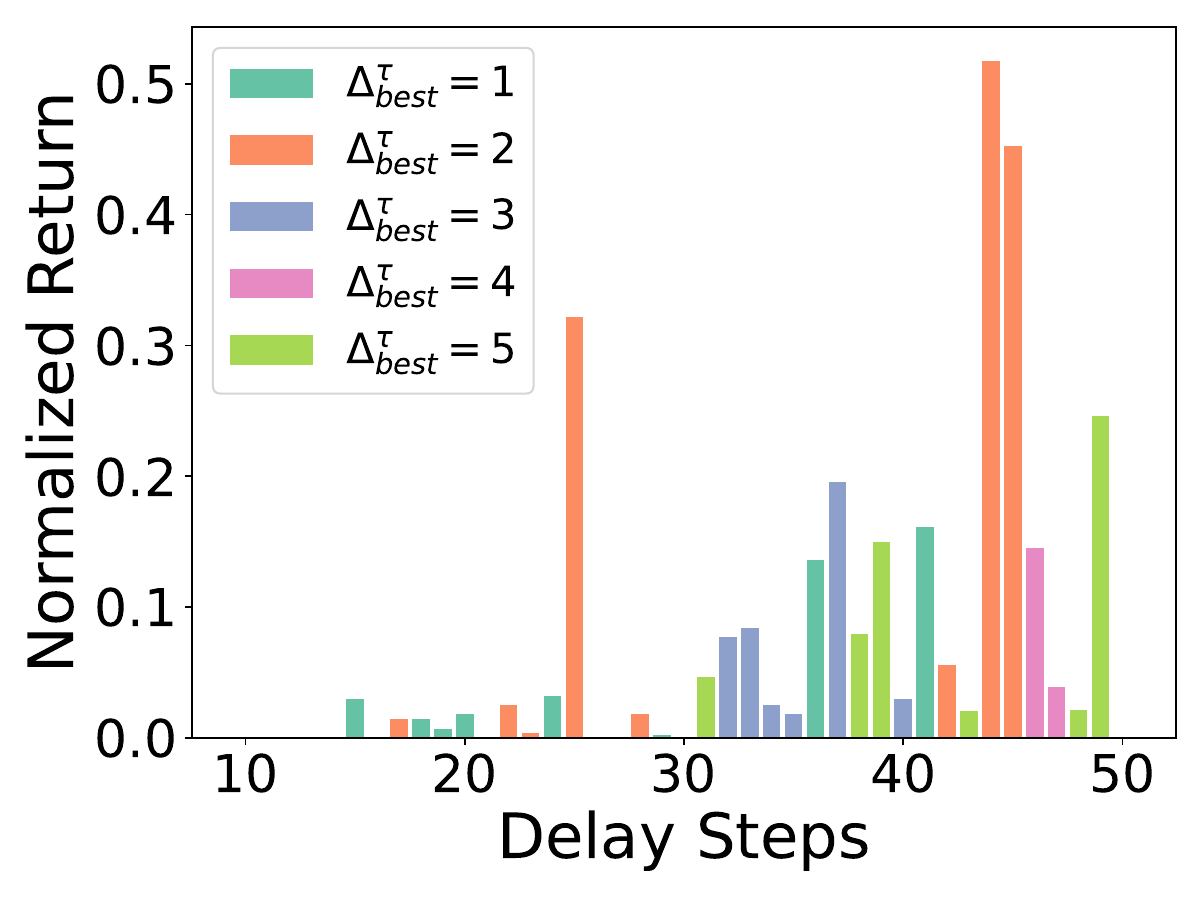} \label{fig:sto_acrobot_ret}}
}
\caption{Results of deterministic Acrobot for (a) learning curves with 10 delays and (b) final performance with varying delays (5-50). The shaded area is the standard deviation. Results of stochastic Acrobot for (c) the normalized return of \textcolor{lightred}{$\Delta^\tau_{best}$} with different delays (10-50). Different colors stand for different best returns achieved by different optimal auxiliary delays \textcolor{lightred}{$\Delta^\tau_{best}$}. }
\end{center}
\vskip -0.3in
\end{figure*}

\subsubsection{Convergence of AD-SPI}
\label{adspi_convergence}
Next, we derive the convergence guarantee of AD-SPI which consists of policy evaluation (Eq.~\eqref{aux_pe}) and policy improvement (Eq.~\eqref{aux_pi}).
Similar to AD-VI, we assume that \textcolor{lightred}{$Q^{\tau}$} has converged to the soft Q-value \textcolor{lightred}{$Q^{\tau}_{soft}$}~\cite{soft_q_learning, soft_actor_critic} in the context of AD-SPI.
For the policy evaluation, Q-function \textcolor{lightblue}{$Q$} can converge to a fixed point via iteratively applying the soft Bellman operator \textcolor{lightblue}{$\mathcal{T}^\pi$} defined in Eq.~\eqref{aux_pe}.
Lemma \ref{lemma_soft_policy_evaluation} shows this convergence guarantee.

\begin{lemma}[Policy Evaluation Convergence Guarantee, proof in Appendix \ref{appendix_soft_policy_evaluation}]
\label{lemma_soft_policy_evaluation}
Consider the soft bellman operator $\mathcal{T}^\pi$ in Eq.~\eqref{aux_pe} and the initial Q-value function \textcolor{lightblue}{$Q_{(0)}$}: $\textcolor{lightblue}{\mathcal{X}}\times\mathcal{A}\rightarrow\mathbb{R}$ with $|\mathcal{A}| < \infty$, and define a sequence \textcolor{lightblue}{$\{Q_{(k)}\}_{k=0}^\infty$} where \textcolor{lightblue}{$Q_{(k+1)}$}$= \mathcal{T}^\pi$\textcolor{lightblue}{$Q_{(k)}$}. Then for any $(\textcolor{lightblue}{x_t}, a_t)$ $\in$ $\textcolor{lightblue}{\mathcal{X}} \times \mathcal{A}$, as $k\rightarrow\infty$, \textcolor{lightblue}{$Q_{(k)}(x_t, \textcolor{black}{a_t})$} will converge to the fixed point $$
    \mathop{\mathbb{E}}_{
        \textcolor{lightred}{x^{\tau}_t}\sim b_\Delta(\cdot|\textcolor{lightblue}{x_t})
        }\left[
            \textcolor{lightred}{Q^{\tau}_{soft}}(\textcolor{lightred}{x^{\tau}_t}, a_t)
        \right]
        - \log\textcolor{lightblue}{\pi}(a_t|\textcolor{lightblue}{x_t})
    $$
\end{lemma}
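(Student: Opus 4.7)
The plan rests on a structural feature of the operator $\mathcal{T}^\pi$ defined in Eq.~\eqref{aux_pe}: its right-hand side depends on $Q^\tau$ and on the fixed policy $\pi$, but \emph{not} on the Q-function $Q$ on which the operator acts. Under the standing hypothesis that the auxiliary Q-function has already converged to $Q^\tau_{soft}$, the map $\mathcal{T}^\pi$ is therefore constant in its argument, in the sense that $\mathcal{T}^\pi Q = \mathcal{T}^\pi Q'$ for any two Q-functions $Q,Q'\colon \mathcal{X}\times\mathcal{A}\to\mathbb{R}$. Consequently the sequence $\{Q_{(k)}\}_{k\geq 0}$ defined by $Q_{(k+1)}=\mathcal{T}^\pi Q_{(k)}$ stabilizes after a single application: $Q_{(k)}=Q_{(1)}$ for every $k\geq 1$, and convergence is immediate without needing a contraction-mapping argument. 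The finiteness of $|\mathcal{A}|$ together with the boundedness of the policy entropy ensures that the expectation in the definition of $\mathcal{T}^\pi$ is well defined.

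The substantive work is therefore to identify $Q_{(1)}(x_t,a_t)=\mathcal{T}^\pi Q_{(0)}(x_t,a_t)$ with the fixed-point expression claimed in the statement. I would expand the right-hand side of $\mathcal{T}^\pi Q_{(0)}$ using its definition, then invoke the soft Bellman equation satisfied by $Q^\tau_{soft}$ on the auxiliary $\Delta^\tau$-delayed MDP. The key algebraic ingredient is the composition identity
$$
\mathbb{E}_{x^\tau_t\sim b_\Delta(\cdot|x_t)}\bigl[\mathcal{R}_{\Delta^\tau}(x^\tau_t,a_t)\bigr] \;=\; \mathbb{E}_{s_t\sim b(\cdot|x_t)}[\mathcal{R}(s_t,a_t)] \;=\; \mathcal{R}_\Delta(x_t,a_t),
$$
which follows by the tower property from the observation that the $(\Delta-\Delta^\tau)$-step belief $b_\Delta(\cdot|x_t)$ composed with the $\Delta^\tau$-step belief $b(\cdot|x^\tau_t)$ reproduces the full $\Delta$-step belief $b(\cdot|x_t)$. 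An analogous composition identity links the delayed transition $\mathcal{P}_\Delta$ together with $b_\Delta$ to the auxiliary transition $\mathcal{P}_{\Delta^\tau}$ that appears in the soft Bellman equation for $Q^\tau_{soft}$, allowing the next-state expectation in $\mathcal{T}^\pi Q_{(0)}$ to be rewritten as an expectation on the auxiliary state space.

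I expect the main technical obstacle to be the bookkeeping of these nested expectations—reconciling the joint law of $(x^\tau_{t+1},a_{t+1})$ induced by $\mathcal{P}_\Delta\circ b_\Delta\circ\pi$ (as appears in $\mathcal{T}^\pi$) with the joint law induced by $\mathcal{P}_{\Delta^\tau}\circ\pi$ that naturally shows up on the RHS of the soft Bellman equation on the auxiliary MDP. A clean way to handle this is to lift both expressions onto a common probability measure on the full trajectory fragment $(s_{t-\Delta},a_{t-\Delta},\ldots,a_{t+1},s_{t-\Delta^\tau+1})$ and compare integrands term by term. Once that measure-theoretic identification is made and the $\mathcal{R}_{\Delta^\tau}\to\mathcal{R}_\Delta$ reduction is applied, collecting the remaining soft-value term yields $\mathbb{E}_{x^\tau_t\sim b_\Delta(\cdot|x_t)}[Q^\tau_{soft}(x^\tau_t,a_t)]$, plus the per-step entropy contribution $-\log\pi(a_t|x_t)$ that the statement carries, completing the identification of the fixed point.
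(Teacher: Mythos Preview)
Your convergence argument is correct and genuinely different from the paper's. The paper does not exploit the fact that the right-hand side of Eq.~\eqref{aux_pe} is independent of $Q$. Instead, it artificially adds and subtracts $\gamma\,\mathbb{E}[Q(x_{t+1},a_{t+1})]$ so that the update takes the form of a standard policy-evaluation backup with a modified reward $r^\pi(x_t,a_t)$; it then argues that $r^\pi$ is bounded (invoking Theorem~\ref{delayed_q_value_diff_bound} and the bounded-entropy fact) and defers to the classical convergence result. Your observation that $\mathcal{T}^\pi$ is constant in its argument short-circuits all of this and yields convergence after one step, which is both simpler and avoids the somewhat delicate point that the paper's $r^\pi$ actually depends on the current iterate $Q_{(k)}$.

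Where your plan has a genuine gap is in the fixed-point identification. Two mismatches prevent the outlined derivation from closing. First, the soft Bellman equation that $Q^\tau_{soft}$ satisfies on the auxiliary MDP is driven by the auxiliary policy $\pi^\tau$, not by $\pi$: both the action sampling $a_{t+1}\sim\pi^\tau(\cdot\,|\,x^\tau_{t+1})$ and the entropy term $-\log\pi^\tau(a_{t+1}\,|\,x^\tau_{t+1})$ appear there, whereas $\mathcal{T}^\pi$ uses $a_{t+1}\sim\pi(\cdot\,|\,x_{t+1})$ and $-\log\pi(a_{t+1}\,|\,x_{t+1})$. This is not a bookkeeping issue that a common probability measure on trajectory fragments can dissolve; the integrands themselves differ. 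Second, the claimed fixed point carries a $-\log\pi(a_t\,|\,x_t)$ contribution at the \emph{current} time step, but $\mathcal{T}^\pi$ only produces an entropy term at time $t{+}1$; there is no mechanism in your expansion by which a time-$t$ log-policy term can appear. The paper's own proof does not resolve either point---it simply asserts the fixed-point expression ``can be obtained easily''---so this gap is inherited from the source, but your plan as written would hit it head-on rather than sidestep it.
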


For the soft policy improvement, we improve the old policy \textcolor{lightblue}{$\pi_{old}$} to the new one \textcolor{lightblue}{$\pi_{new}$} via applying the update rule in Eq.~\eqref{aux_pi}. And formalizing in Lemma \ref{lemma_soft_policy_improvement}, we can show that the improved policy \textcolor{lightblue}{$\pi_{new}$} is better than \textcolor{lightblue}{$\pi_{old}$}.

\begin{lemma}[Policy Improvement Guarantee, proof in Appendix \ref{appendix_soft_policy_improvement}]
    \label{lemma_soft_policy_improvement}
    Consider the policy update rule in Eq.~\eqref{aux_pi}, and let \textcolor{lightblue}{$\pi_{old}, \pi_{new}$} be the old policy and new policy improved from old one respectively. Then for any $(\textcolor{lightblue}{x_t}, a_t)$ $\in$ $\textcolor{lightblue}{\mathcal{X}} \times \mathcal{A}$ with $|\mathcal{A}| < \infty$, we have
    $
        \textcolor{lightblue}{Q_{old}(x_t}, a_t)
        \leq
        \textcolor{lightblue}{Q_{new}(x_t}, a_t)
    $.
\end{lemma}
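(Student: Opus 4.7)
The plan is to mirror the classical soft-policy-improvement argument of \citep{soft_actor_critic}, adapted to the auxiliary-delayed Bellman operator $\mathcal{T}^{\pi}$ of Eq.~\eqref{aux_pe}.

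First, I would exploit the optimality of $\pi_{new}$ as a KL-minimizer in Eq.~\eqref{aux_pi}. Since $\pi_{old}\in\Pi$ is a feasible competitor and the normalizer $Z(x^{\tau}_t,\cdot)$ is action-independent, taking logarithms cancels $\log Z$ and the inequality $\mathrm{KL}(\pi_{new}(\cdot|x_t)\,\|\,\cdot)\le\mathrm{KL}(\pi_{old}(\cdot|x_t)\,\|\,\cdot)$ collapses to the \emph{swap inequality}
\begin{equation*}
\mathop{\mathbb{E}}_{a\sim\pi_{new}(\cdot|x_t)}\!\Bigl[\mathop{\mathbb{E}}_{x^{\tau}_t\sim b_\Delta(\cdot|x_t)}\!\!\bigl[Q^{\tau}(x^{\tau}_t,a)\bigr]-\log\pi_{new}(a|x_t)\Bigr]\;\ge\;\mathop{\mathbb{E}}_{a\sim\pi_{old}(\cdot|x_t)}\!\Bigl[\mathop{\mathbb{E}}_{x^{\tau}_t\sim b_\Delta(\cdot|x_t)}\!\!\bigl[Q^{\tau}(x^{\tau}_t,a)\bigr]-\log\pi_{old}(a|x_t)\Bigr].
\end{equation*}
All expectations are finite thanks to $|\mathcal{A}|<\infty$, which as noted at the top of Section~\ref{convergence_analysis} guarantees bounded $L_1$-Wasserstein distances and bounded entropy.

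Second, I would unfold $Q_{old}$ using the fixed-point characterization of Lemma~\ref{lemma_soft_policy_evaluation} applied with $\pi=\pi_{old}$, namely $Q_{old}=\mathcal{T}^{\pi_{old}}Q_{old}$, so that
$$Q_{old}(x_t,a_t)=\mathcal{R}_\Delta(x_t,a_t)+\gamma\mathop{\mathbb{E}}\!\bigl[Q^{\tau}(x^{\tau}_{t+1},a_{t+1})-\log\pi_{old}(a_{t+1}|x_{t+1})\bigr],$$
with $a_{t+1}\sim\pi_{old}(\cdot|x_{t+1})$. Applying the swap inequality at $x_{t+1}$ upper-bounds the bracketed expectation by the corresponding one taken under $\pi_{new}$, yielding the one-step domination $Q_{old}(x_t,a_t)\le(\mathcal{T}^{\pi_{new}}Q_{old})(x_t,a_t)$. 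Monotonicity of $\mathcal{T}^{\pi_{new}}$ (the policy enters only through the linear sampling distribution and the additive $-\log\pi_{new}$ term, while $Q^{\tau}$ is held fixed) then gives by induction $Q_{old}\le(\mathcal{T}^{\pi_{new}})^k Q_{old}$ for every $k\ge 1$.

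Finally, I would pass $k\to\infty$: invoking Lemma~\ref{lemma_soft_policy_evaluation} with $\pi=\pi_{new}$, the iterates converge pointwise to $Q_{new}$, giving $Q_{old}(x_t,a_t)\le Q_{new}(x_t,a_t)$ for every $(x_t,a_t)\in\mathcal{X}\times\mathcal{A}$. I expect the main obstacle to be precisely this limit passage: because $\mathcal{T}^{\pi_{new}}$ places $Q^{\tau}$ rather than $Q$ on its right-hand side, the usual $\gamma$-contraction in sup-norm is not directly available in the standard SAC form, so I would rely on the finite-action assumption, the boundedness of $\mathcal{R}_\Delta$ and of $\log\pi$, and Lemma~\ref{lemma_soft_policy_evaluation} itself to justify pointwise monotone convergence and to identify the limit with $Q_{new}$.
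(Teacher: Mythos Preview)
Your core idea---the KL-based swap inequality followed by unfolding $Q_{old}$ via the policy-evaluation fixed point---is exactly what the paper does. However, you over-engineer the second half. The crucial structural feature of the auxiliary-delayed soft Bellman operator in Eq.~\eqref{aux_pe} is that its right-hand side depends only on the \emph{fixed} auxiliary $Q^{\tau}$, not on its nominal argument $Q$. Consequently $\mathcal{T}^{\pi_{new}}$ is a constant map in $Q$: one application already lands on the fixed point, i.e.\ $\mathcal{T}^{\pi_{new}}Q_{old}=Q_{new}$ directly. Your step~2 therefore \emph{is} the whole proof---the induction in step~3 and the limit passage in step~4 are vacuous, and the ``obstacle'' you anticipate (no $\gamma$-contraction because $Q^{\tau}$ sits on the right) is in fact the simplification that removes the need for any iteration at all.

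The paper's proof exploits this explicitly: it writes $Q_{old}(x_t,a_t)=r_t+\gamma\,\mathbb{E}_{\pi_{old}}[Q^{\tau}-\log\pi_{old}]$, applies the swap inequality once, and immediately recognizes the result as $r_t+\gamma\,\mathbb{E}_{\pi_{new}}[Q^{\tau}-\log\pi_{new}]=Q_{new}(x_t,a_t)$, in three lines. Your argument is not wrong, just longer than necessary; dropping the monotonicity-and-limit scaffolding recovers the paper's proof verbatim.
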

Alternating between the policy evaluation and policy improvement, the policy learned by the AD-SPI will converge to a policy \textcolor{lightblue}{$\pi_{(\approx)}$} having the highest value among the policies in \textcolor{lightblue}{$\Pi$}. This result formalized in Theorem \ref{spi_convergence} establishes the theoretical fundamental for us to develop the AD-SAC.
\begin{theorem}[AD-SPI Convergence Guarantee]
    \label{spi_convergence}
    Applying policy evaluation in Eq.~\eqref{aux_pe} and policy improvement in Eq.~\eqref{aux_pi} repeatedly to any given policy \textcolor{lightblue}{$\pi\in\Pi$}, it converges to \textcolor{lightblue}{$\pi_{(\approx)}$} such that \textcolor{lightblue}{$Q^{\pi}(x_t$}$, a_t) \leq$ \textcolor{lightblue}{$Q^{\pi}_{(\approx)}(x_t$}$, a_t)$ for any \textcolor{lightblue}{$\pi\in\Pi$}, $(\textcolor{lightblue}{x_t}, a_t)$ $\in$ $\textcolor{lightblue}{\mathcal{X}} \times \mathcal{A}$ with $|\mathcal{A}| < \infty$.
\end{theorem}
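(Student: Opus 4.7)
The plan is to adapt the classical soft policy iteration convergence argument of Haarnoja et al.\ to our auxiliary-delayed setting. The key simplification comes from the standing assumption that $\textcolor{lightred}{Q^{\tau}}$ has already converged to the fixed soft Q-value $\textcolor{lightred}{Q^{\tau}_{soft}}$, so the bootstrapping target in Eq.~\eqref{aux_pe} does not move with the iterate $\textcolor{lightblue}{\pi}$. This means the coupled system reduces to a standard policy-iteration-style analysis on $(\textcolor{lightblue}{\pi},\textcolor{lightblue}{Q})$ with a frozen auxiliary regressor, for which Lemmas~\ref{lemma_soft_policy_evaluation} and~\ref{lemma_soft_policy_improvement} do the heavy lifting.

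First, I would fix an arbitrary initial $\textcolor{lightblue}{\pi_0}\in\textcolor{lightblue}{\Pi}$ and construct a sequence $\{\textcolor{lightblue}{\pi_i}\}_{i=0}^{\infty}$ by iterating: (i) run soft policy evaluation under $\textcolor{lightblue}{\pi_i}$, which by Lemma~\ref{lemma_soft_policy_evaluation} converges to a well-defined Q-function $\textcolor{lightblue}{Q^{\pi_i}}$ for each $(\textcolor{lightblue}{x_t},a_t)$; (ii) apply the policy improvement update in Eq.~\eqref{aux_pi} to obtain $\textcolor{lightblue}{\pi_{i+1}}$. Lemma~\ref{lemma_soft_policy_improvement} then gives $\textcolor{lightblue}{Q^{\pi_i}}(\textcolor{lightblue}{x_t},a_t)\le \textcolor{lightblue}{Q^{\pi_{i+1}}}(\textcolor{lightblue}{x_t},a_t)$ pointwise.

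Second, I would invoke the boundedness assumptions already gathered in Section~\ref{convergence_analysis}: $|\mathcal{A}|<\infty$ yields bounded policy entropy, the reward is bounded, and $\gamma<1$. Together with the monotonicity from step one, the sequence $\{\textcolor{lightblue}{Q^{\pi_i}}(\textcolor{lightblue}{x_t},a_t)\}$ is non-decreasing and bounded above, hence it converges pointwise to some limit $\textcolor{lightblue}{Q^{\pi_{(\approx)}}}(\textcolor{lightblue}{x_t},a_t)$ with a corresponding limit policy $\textcolor{lightblue}{\pi_{(\approx)}}\in\textcolor{lightblue}{\Pi}$.

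Finally, I need to upgrade this to global optimality within $\textcolor{lightblue}{\Pi}$. At the fixed point, the KL minimizer in Eq.~\eqref{aux_pi} returns $\textcolor{lightblue}{\pi_{(\approx)}}$ itself, so for every $\textcolor{lightblue}{\pi'}\in\textcolor{lightblue}{\Pi}$ and every $\textcolor{lightblue}{x_t}$,
\[
\mathbb{E}_{a\sim\textcolor{lightblue}{\pi'}}\!\bigl[\mathbb{E}_{\textcolor{lightred}{x^\tau_t}\sim b_\Delta(\cdot|\textcolor{lightblue}{x_t})}\textcolor{lightred}{Q^{\tau}_{soft}}(\textcolor{lightred}{x^\tau_t},a)-\log\textcolor{lightblue}{\pi'}(a|\textcolor{lightblue}{x_t})\bigr]
\le \textcolor{lightblue}{V^{\pi_{(\approx)}}}(\textcolor{lightblue}{x_t}).
\]
Re-applying this inequality to the right-hand side of the soft Bellman expansion of $\textcolor{lightblue}{Q^{\pi'}}$, unrolling telescopically, and using $\gamma<1$ gives $\textcolor{lightblue}{Q^{\pi'}}(\textcolor{lightblue}{x_t},a_t)\le \textcolor{lightblue}{Q^{\pi_{(\approx)}}}(\textcolor{lightblue}{x_t},a_t)$ for all $(\textcolor{lightblue}{x_t},a_t)$, which is the desired conclusion.

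The main obstacle I expect is the last step: the standard SAC-style argument uses the fact that the bootstrapped Q in the evaluation Bellman operator is the same Q being improved, whereas here the bootstrap uses the fixed $\textcolor{lightred}{Q^{\tau}_{soft}}$ on a different (smaller) augmented space linked by the delayed belief $b_\Delta$. Care is needed to verify that the telescoping unroll still closes, relying on the fact that $\textcolor{lightred}{Q^{\tau}_{soft}}$ does not depend on the iterate policy $\textcolor{lightblue}{\pi'}$ so the KL-optimality of $\textcolor{lightblue}{\pi_{(\approx)}}$ at each $\textcolor{lightblue}{x_t}$ can be propagated through the expectation over $\mathcal{P}_\Delta$ and $b_\Delta$ without distorting the bound.
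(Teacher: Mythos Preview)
Your proposal is correct and follows the same high-level route as the paper (invoke Lemma~\ref{lemma_soft_policy_evaluation} for the evaluated Q, then use the KL-optimality of $\textcolor{lightblue}{\pi_{(\approx)}}$ from Eq.~\eqref{aux_pi}). The one difference worth noting is that the telescoping unroll you flag as the ``main obstacle'' is actually unnecessary here, and the paper's proof exploits this. Because the bootstrap target $\textcolor{lightred}{Q^{\tau}_{soft}}$ is frozen and independent of the iterate $\textcolor{lightblue}{\pi}$, the operator $\mathcal{T}^\pi$ in Eq.~\eqref{aux_pe} does not depend on $\textcolor{lightblue}{Q}$ at all: policy evaluation terminates in a single application, giving
\[
\textcolor{lightblue}{Q^{\pi}}(\textcolor{lightblue}{x_t},a_t)=\textcolor{lightblue}{\mathcal{R}_\Delta}(\textcolor{lightblue}{x_t},a_t)+\gamma\,\mathbb{E}_{\substack{\textcolor{lightred}{x^{\tau}_{t+1}}\sim b_\Delta(\cdot|\textcolor{lightblue}{x_{t+1}})\\ a_{t+1}\sim\textcolor{lightblue}{\pi}(\cdot|\textcolor{lightblue}{x_{t+1}})\\ \textcolor{lightblue}{x_{t+1}}\sim\mathcal{P}_\Delta}}\bigl[\textcolor{lightred}{Q^{\tau}_{soft}}(\textcolor{lightred}{x^{\tau}_{t+1}},a_{t+1})-\log\textcolor{lightblue}{\pi}(a_{t+1}|\textcolor{lightblue}{x_{t+1}})\bigr].
\]
Since the bracketed expectation is precisely the objective whose state-wise maximizer is $\textcolor{lightblue}{\pi_{(\approx)}}$, replacing $\textcolor{lightblue}{\pi}$ by $\textcolor{lightblue}{\pi_{(\approx)}}$ in this single expression already yields $\textcolor{lightblue}{Q^{\pi}}\le\textcolor{lightblue}{Q^{\pi_{(\approx)}}}$; no recursion or telescoping is needed. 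Your more elaborate SAC-style argument still works, but it is doing more than required: the very feature you worry about (bootstrapping from a different, fixed Q) is what collapses the argument to one line.
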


\section{Experimental Results}
\label{boad_experiment}

\subsection{Benchmarks and Baselines}
\label{benchmark_baseline}
\textbf{Benchmarks.} We choose the Acrobot~\citep{sutton1995generalization} and the MuJoCo control suite~\citep{mujoco} for discrete and continuous control tasks, respectively.
Especially, to investigate the effectiveness of AD-RL in the stochastic MDPs, we adopt the stochastic Acrobot where the input from the agent is disturbed by the noise from a uniform distribution with the probability of $0.1$.
In this way, the output for the same inputted action from the agent becomes stochastic.

\textbf{Baselines.} We select a wide range of techniques as baselines to examine the performance of our AD-RL under different environments.
In Acrobot, we mainly compare AD-DQN against the augmented DQN (A-DQN)~\citep{deep_q_network}.
For the continuous control task, we compared our AD-SAC against existing SOTAs: augmented SAC (A-SAC)~\citep{soft_actor_critic}, DC/AC~\citep{dcac}, DIDA~\citep{dida} and BPQL~\citep{kim2023belief}.
In the deterministic Acrobot and Mujoco, the auxiliary delays \textcolor{lightred}{$\Delta^\tau$} of our AD-RL is set to $0$.
For computationally fairness, we keep all methods training with the same amount of gradient descent. 
In other words, the training times of \textcolor{lightblue}{$\pi$} in our AD-RL is \textbf{half of that in all baselines}, as AD-RL trains \textcolor{lightred}{$\pi^\tau$} additional to the original task at the same time.
The result of each method was obtained using 10 random seeds and its hyper-parameters are in Appendix \ref{boad_implementation_detail}.

\begin{table*}[t]
\centering
\caption{Results of MuJoCo tasks with stochastic delays for 1M global time-steps. Each method was evaluated with 10 trials and is shown with the standard deviation denoted by $\pm$. The best performance is in blue.
\label{appendix::stochastic_delay}}
\begin{tabular}{c|c|c|c|c|c}
\hline
Task           & A-SAC             & DC/AC             & DIDA              & BPQL              & AD-SAC                                   \\ \hline
Ant-v4         & $0.18_{\pm 0.01}$ & $0.27_{\pm 0.02}$ & $0.55_{\pm 0.08}$ & $0.58_{\pm 0.12}$ & \textcolor{blue}{$0.69_{\pm 0.17}$}  \\
HalfCheetah-v4 & $0.36_{\pm 0.12}$ & $0.36_{\pm 0.18}$ & $0.75_{\pm 0.02}$ & $0.76_{\pm 0.16}$ & \textcolor{blue}{$1.03_{\pm 0.06}$} \\
Hopper-v4      & $0.85_{\pm 0.22}$ & $0.94_{\pm 0.29}$ & $0.31_{\pm 0.08}$ & $0.68_{\pm 0.34}$ & \textcolor{blue}{$1.05_{\pm 0.22}$} \\
Humanoid-v4    & $0.15_{\pm 0.06}$ & $0.67_{\pm 0.18}$ & $0.07_{\pm 0.01}$ & $0.40_{\pm 0.42}$ & \textcolor{blue}{$0.97_{\pm 0.07}$} \\
Standup-v4     & $1.03_{\pm 0.05}$ & $1.20_{\pm 0.08}$ & $1.00_{\pm 0.00}$ & $1.10_{\pm 0.07}$ & \textcolor{blue}{$1.26_{\pm 0.07}$} \\
Pusher-v4      & $1.11_{\pm 0.02}$ & $1.17_{\pm 0.02}$ & $1.02_{\pm 0.01}$ & $1.07_{\pm 0.05}$ & \textcolor{blue}{$1.22_{\pm 0.01}$} \\
Reacher-v4     & $0.98_{\pm 0.01}$ & $1.02_{\pm 0.01}$ & $1.02_{\pm 0.00}$ & $0.85_{\pm 0.11}$ & \textcolor{blue}{$1.05_{\pm 0.01}$} \\
Swimmer-v4     & $0.82_{\pm 0.10}$ & $1.47_{\pm 0.58}$ & $1.03_{\pm 0.02}$ & $1.53_{\pm 0.52}$ & \textcolor{blue}{$2.36_{\pm 0.64}$} \\
Walker2d-v4    & $0.68_{\pm 0.28}$ & $0.89_{\pm 0.08}$ & $0.54_{\pm 0.09}$ & $0.63_{\pm 0.39}$ & \textcolor{blue}{$1.19_{\pm 0.14}$} \\ \hline
\end{tabular}
\end{table*}
\subsection{Empirical Results Analysis}

\textbf{Deterministic Acrobot.}
We first compare A-DQN and our AD-DQN in the deterministic Acrobot under fixed 10 delays. From Fig.~\ref{fig:de_acrobot_results}, we can tell that within the 500k time steps, AD-DQN learns much faster than the A-DQN. In addition, as delays change from 5 to 50, from Fig.~\ref{fig:de_acrobot_delays}, A-DQN is not able to learn any useful policy after 20 delays. However, our AD-DQN shows robust performance even under 50 delays.

\textbf{MuJoCo.} The results for MuJoCo are presented in Table~\ref{mujoco_results}. Our AD-SAC and BPQL provide leading performance in most MuJoCo tasks. 
We report the delay-free normalized scores $Ret_{nor} = (Ret_{alg} - Ret_{rand}) / (Ret_{df} - Ret_{rand})$, where $Ret_{df}$ is the return of a delay-free agent trained by the soft actor-critic and $Ret_{rand}$ is the return of a random policy.
The experiment results support our argument that learning the auxiliary-delayed task facilitates agents to learn the original delayed task. In deterministic scenarios, the delayed belief estimation degenerates to a deterministic function for any auxiliary delays \textcolor{lightred}{$\Delta^\tau$}, and a smaller \textcolor{lightred}{$\Delta^\tau$} benefits the sampling efficiency. BPQL (a special variant of our approach when \textcolor{lightred}{$\Delta^\tau$}$ = 0$) and AD-SAC(0) thus provide comparable results. 
The results for MuJoCo tasks with 5 and 50 delays, presented in Appendix \ref{appendix_mujoco_results}, also validate this conclusion.

\textbf{MuJoCo with Stochastic Delays.}
We evaluate the performance and robustness of AD-RL in MuJoCo which has the delay $\Delta=5$ with a probability of 0.9, and the delay $\Delta\in [1, 5]$ with a probability of 0.1.
We adopt a similar stochastic delay setting as BPQL~\cite{kim2023belief} wherein the evaluation environment has the delay $\Delta=5$ with a probability of 0.9, and the delay $\Delta \in [1,5]$ with a probability of 0.1.
As shown in Table \ref{appendix::stochastic_delay}, AD-RL outperforms all the other baselines including BPQL in all tasks with stochastic delays. Compared with the result of AD-RL in the constant delay setting, we can see that AD-RL is more robust than others.

\textbf{Stochastic Acrobot.} Then we investigate the influence of auxiliary delays for AD-RL on the stochastic Acrobot.
We conduct a series of experiments for various combinations of delays (ranging from $10$ to $50$) and auxiliary delays (ranging from $0$ to $5$). The optimal auxiliary delays, denoted as \textcolor{lightred}{$\Delta^\tau_{best}$}, which yield the best performance under different delays, are recorded.
Furthermore, the relative return is defined as $Ret_{rela} = (Ret_{\Delta^\tau_{best}}-Ret_{0}) / (Ret_{0}-Ret_{rand})$ where $Ret_{\Delta^\tau_{best}}$ is the return of setting \textcolor{lightred}{$\Delta^\tau = \Delta^\tau_{best}$}, $Ret_{0}$ is the return of setting \textcolor{lightred}{$\Delta^\tau = 0$} and $Ret_{rand}$ is the return of the random policy.
It measures the comparatively improved performance in return by \textcolor{lightred}{$\Delta^\tau = \Delta^\tau_{best}$} instead of setting \textcolor{lightred}{$\Delta^\tau = 0$}.
As illustrated in Fig.~\ref{fig:sto_acrobot_ret}, we can observe that \textcolor{lightred}{$\Delta^\tau = 0$} (as in BPQL) may not always be the optimal choice and the selection of best auxiliary delays appears irregular in the stochastic MDP.
It is evident to confirm our aforementioned conclusion: the shorter auxiliary delays \textcolor{lightred}{$\Delta^\tau$} could improve the learning efficiency but also potentially result in performance degradation caused by a more significant information loss in stochastic environments. So there exists a trade-off between learning efficiency (\textcolor{lightred}{$\Delta^\tau$} closes to $0$) and performance consistency (\textcolor{lightred}{$\Delta^\tau$} closes to \textcolor{lightblue}{$\Delta$}).

\begin{table}[h]
\centering
\caption{The best auxiliary delays and corresponding normalized score of AD-RL in Stochastic MuJoCo with 50 delays.
\label{table::stochastic_delay}}
\begin{tabular}{c|c}
\hline
Delays=50          & Normalized score ($\Delta^\tau_{best}$) \\ \hline
Ant-v4             & $0.16\pm_{0.04}(4)$                     \\ \hline
HalfCheetah-v4     & $0.05\pm_{0.28}(2)$                     \\ \hline
Hopper-v4          & $0.05\pm_{0.17}(5)$                     \\ \hline
Humanoid-v4        & $0.00\pm_{0.00}(0)$                     \\ \hline
HumanoidStandup-v4 & $0.03\pm_{0.06}(1)$                     \\ \hline
Pusher-v4          & $0.02\pm_{0.01}(3)$                     \\ \hline
Reacher-v4         & $0.03\pm_{0.02}(1)$                     \\ \hline
Swimmer-v4         & $0.00\pm_{0.00}(0)$                     \\ \hline
Walker2d-v4        & $0.00\pm_{0.00}(0)$                     \\ \hline
\end{tabular}
\end{table}
\textbf{Stochastic MuJoCo.} 
We adopt a similar stochastic MuJoCo setting as BPQL~\cite{kim2023belief} wherein an agent-unaware action noise is added into the environment with a probability of $0.1$, making the MuJoCo stochastic (different outputs for the same input action).
In stochastic MuJoCo with $50$ delays, we train AD-RL with different auxiliary delays $\Delta^\tau$ ranging from 0 to 5, and we report the best auxiliary delays $\Delta^\tau_{best}$ and corresponding normalized scores $(Ret_{best} - Ret_{0}) / (Ret_{0} - Ret_{rand})$.
As shown in Table \ref{table::stochastic_delay}, the best choice is not always $0$, and it is sensitive to the specific task setting and shows strong irregularity.

\section{Related Work}
\label{related_works}

Early works model a continuous system with observation delay by Delay Differential Equations (DDE)~\citep{myshkis1955lineare, cooke1963differential} which have been extensively studied in the control community in terms of reachability~\citep{fridman2003reachable, xue2021reach}, stability~\citep{feng2019taming}, and safety~\citep{xue2020over}. These techniques rely on explicit dynamics models and cannot scale well. 

In recent RL approaches, the environment with observation delay is modelled under the MDP framework \citep{altman_delay, delay_mdp}. 
Unlike the well-studied reward delay problem which raises the credit assignment issue~\citep{sutton1984temporal, arjona2019rudder, wang2024highway}, the observation delay problem disrupts the Markovian property required for traditional RL. Existing techniques differ from each other in how to tame the delay, including memoryless-based, model-based, and augmentation-based methods. Inspired by the partially observed MDP (POMDP), memory-less approaches, e.g., dQ and dSARSA, were developed in~\citep{memoryless}, which learn the policy based on the observed state. 
However, these methods ignore the non-Markovian property of the problem and could lead to performance degeneration. 

Model-based methods retrieve the Markovian property by predicting the unobserved state and then selecting an action based on it. The performance thus highly relies on state generation techniques.  \citeauthor{learning_planning_delayed_feedback} propose a deterministic generative model which is learned via model-based simulation. Similarly, ~\citeauthor{non_stationary_model_based} suggested successively applying an approximate forward model to estimate state \citep{non_stationary_model_based}. Different stochastic generative models, e.g., the ensemble of Gaussian distributions~\cite{chen2021delay} and Transformers~\cite{learning_belief}, were also explored.
However, the non-negligible prediction error results in sub-optimal returns~\cite {learning_belief}.

Augmentation-based methods seek to equivalently retrieve the Markov property by augmenting the delay-related information into the state-space~\citep{altman_delay}.
For instance, inspired by multi-step learning, \citeauthor{dcac} develops a partial trajectory resampling technique to accelerate the learning process \citep{dcac}.
Additionally, based on imitation learning and dataset aggregation technique, \citeauthor{dida} trains an undelayed expert policy and subsequently generalizes the expert's behavior into an augmented policy \citep{dida}.
Despite possessing optimality and the Markov property, augmentation-based methods are plagued by the curse of dimensionality in facing tasks with long delays, resulting in learning inefficiency.
BPQL~\citep{kim2023belief} evaluates the augmented policy by a non-augmented Q-function. However, BPQL suffers from performance loss in stochastic environments.
Certain difficult problems can be solved by 'divide and conquer' methods such as the neural sequence chunker~\citep{jurgenneural} which learns to hierarchically decompose sequences into predictable subsequences, and the subgoal generator~\citep{schmidhuber1991learning}, which decomposes task into smaller tasks with shorter solutions by learning appropriate subgoals.

\section{Conclusion}
In this work, we focus on RL in environments with delays between events and their observations. Existing methods exhibit learning inefficiency in the presence of long delays and performance degradation in stochastic environments. 
To address these issues, we propose AD-RL, which leverages auxiliary tasks with short delays to enhance learning. Under the AD-RL framework, we develop AD-DQN and AD-SAC for discrete and continuous control tasks respectively.
We further provide a theoretical analysis in terms of sample efficiency, performance gap, and convergence.
In both deterministic and stochastic benchmarks, we empirically show that AD-RL achieves new state-of-the-art performance, dramatically outperforming existing methods.

\section{Limitations and Chanllenges}
\paragraph{Hyperparameter \textcolor{lightred}{$\Delta^\tau_{best}$}} It is worth noting that the performance of AD-RL is subject to the selection of the auxiliary delays \textcolor{lightred}{$\Delta^\tau_{best}$}. Such selection is deeply related to the specific tasks and even the delay $\Delta$, and is highly challenging: Fig.~\ref{fig:sto_acrobot_ret} demonstrates that the relation between \textcolor{lightred}{$\Delta^\tau_{best}$} and $\Delta$ is not linear or parabolic. We will consider a systemic framework to select the best auxiliary delay by exploring the underlying relationship between the auxiliary delay, the original delay and the task specification, etc.

\paragraph{Implicit Belief Learning} Learning delayed belief, especially in stochastic environments, proves to be challenging. AD-RL implicitly represents the belief function by sampling in two augmented state spaces (\textcolor{lightblue}{$\Delta$} and \textcolor{lightred}{$\Delta^\tau$}) to overcome this issue, leading to additional memory cost compared to conventional augmentation-based approaches. In the future, we will explore learning explicit belief using different neural representations.

\paragraph{Sample Efficiency} 
Sample efficiency remains a critical challenge, especially in environments characterized by long delays or stochasticity.
In both cases, AD-RL needs to set relatively longer auxiliary delays $\Delta^\tau$ to carry more information of the augmented state in the auxiliary state. While a longer auxiliary delay achieves better performance, it brings back the sample inefficiency issue, which will be investigated in the next step.

\clearpage
\section*{Acknowledgement}
We sincerely acknowledge the support by the grant EP/Y002644/1 under the EPSRC ECR International Collaboration Grants program, funded by the International Science Partnerships Fund (ISPF) and the UK Research and Innovation, and the grant 2324936 by the US National Science Foundation. 
This work is also supported by Taiwan NSTC under Grant Number NSTC-112-2221-E-002-168-MY3, and the European Research Council (ERC, Advanced Grant Number 742870).

\section*{Impact Statement}
This paper aims to advance the field of Machine Learning. While acknowledging there are many potential societal consequences of our work, we believe that none need to be specially highlighted here.

\bibliography{main}

\begin{thebibliography}{53}
\providecommand{\natexlab}[1]{#1}
\providecommand{\url}[1]{\texttt{#1}}
\expandafter\ifx\csname urlstyle\endcsname\relax
  \providecommand{\doi}[1]{doi: #1}\else
  \providecommand{\doi}{doi: \begingroup \urlstyle{rm}\Url}\fi

\bibitem[Altman \& Nain(1992)Altman and Nain]{altman_delay}
Altman, E. and Nain, P.
\newblock Closed-loop control with delayed information.
\newblock \emph{ACM sigmetrics performance evaluation review}, 20\penalty0 (1):\penalty0 193--204, 1992.

\bibitem[Arjona-Medina et~al.(2019)Arjona-Medina, Gillhofer, Widrich, Unterthiner, Brandstetter, and Hochreiter]{arjona2019rudder}
Arjona-Medina, J.~A., Gillhofer, M., Widrich, M., Unterthiner, T., Brandstetter, J., and Hochreiter, S.
\newblock Rudder: Return decomposition for delayed rewards.
\newblock \emph{Advances in Neural Information Processing Systems}, 32, 2019.

\bibitem[Azar et~al.(2011)Azar, Munos, Ghavamzadeh, and Kappen]{speedy_q_learning}
Azar, M.~G., Munos, R., Ghavamzadeh, M., and Kappen, H.
\newblock Speedy q-learning.
\newblock In \emph{Advances in neural information processing systems}, 2011.

\bibitem[Berner et~al.(2019)Berner, Brockman, Chan, Cheung, Debiak, Dennison, Farhi, Fischer, Hashme, Hesse, et~al.]{berner2019dota}
Berner, C., Brockman, G., Chan, B., Cheung, V., Debiak, P., Dennison, C., Farhi, D., Fischer, Q., Hashme, S., Hesse, C., et~al.
\newblock Dota 2 with large scale deep reinforcement learning.
\newblock \emph{arXiv preprint arXiv:1912.06680}, 2019.

\bibitem[Bertsekas(2012)]{bertsekas2012dynamic}
Bertsekas, D.
\newblock \emph{Dynamic programming and optimal control: Volume I}, volume~4.
\newblock Athena scientific, 2012.

\bibitem[Bouteiller et~al.(2020)Bouteiller, Ramstedt, Beltrame, Pal, and Binas]{dcac}
Bouteiller, Y., Ramstedt, S., Beltrame, G., Pal, C., and Binas, J.
\newblock Reinforcement learning with random delays.
\newblock In \emph{International conference on learning representations}, 2020.

\bibitem[Chen et~al.(2021)Chen, Xu, Li, and Zhao]{chen2021delay}
Chen, B., Xu, M., Li, L., and Zhao, D.
\newblock Delay-aware model-based reinforcement learning for continuous control.
\newblock \emph{Neurocomputing}, 450:\penalty0 119--128, 2021.

\bibitem[Cooke(1963)]{cooke1963differential}
Cooke, K.~L.
\newblock Differential—difference equations.
\newblock In \emph{International symposium on nonlinear differential equations and nonlinear mechanics}, pp.\  155--171. Elsevier, 1963.

\bibitem[Derman et~al.(2021)Derman, Dalal, and Mannor]{non_stationary_model_based}
Derman, E., Dalal, G., and Mannor, S.
\newblock Acting in delayed environments with non-stationary markov policies.
\newblock \emph{arXiv preprint arXiv:2101.11992}, 2021.

\bibitem[Even-Dar et~al.(2003)Even-Dar, Mansour, and Bartlett]{lr_q_learning}
Even-Dar, E., Mansour, Y., and Bartlett, P.
\newblock Learning rates for q-learning.
\newblock \emph{Journal of machine learning Research}, 5\penalty0 (1), 2003.

\bibitem[Fathi et~al.(2022)Fathi, Haghi~Kashani, Jameii, and Mahdipour]{fathi2022big}
Fathi, M., Haghi~Kashani, M., Jameii, S.~M., and Mahdipour, E.
\newblock Big data analytics in weather forecasting: A systematic review.
\newblock \emph{Archives of Computational Methods in Engineering}, 29\penalty0 (2):\penalty0 1247--1275, 2022.

\bibitem[Feng et~al.(2019)Feng, Chen, Zhan, Fr{\"a}nzle, and Xue]{feng2019taming}
Feng, S., Chen, M., Zhan, N., Fr{\"a}nzle, M., and Xue, B.
\newblock Taming delays in dynamical systems: Unbounded verification of delay differential equations.
\newblock In \emph{International Conference on Computer Aided Verification}, pp.\  650--669. Springer, 2019.

\bibitem[Firoiu et~al.(2018)Firoiu, Ju, and Tenenbaum]{firoiu2018human}
Firoiu, V., Ju, T., and Tenenbaum, J.
\newblock At human speed: Deep reinforcement learning with action delay.
\newblock \emph{arXiv preprint arXiv:1810.07286}, 2018.

\bibitem[Fridman \& Shaked(2003)Fridman and Shaked]{fridman2003reachable}
Fridman, E. and Shaked, U.
\newblock On reachable sets for linear systems with delay and bounded peak inputs.
\newblock \emph{Automatica}, 39\penalty0 (11):\penalty0 2005--2010, 2003.

\bibitem[Gangwani et~al.(2020)Gangwani, Lehman, Liu, and Peng]{gangwani2020learning}
Gangwani, T., Lehman, J., Liu, Q., and Peng, J.
\newblock Learning belief representations for imitation learning in pomdps.
\newblock In \emph{uncertainty in artificial intelligence}, pp.\  1061--1071. PMLR, 2020.

\bibitem[Haarnoja et~al.(2017)Haarnoja, Tang, Abbeel, and Levine]{soft_q_learning}
Haarnoja, T., Tang, H., Abbeel, P., and Levine, S.
\newblock Reinforcement learning with deep energy-based policies.
\newblock In \emph{International conference on machine learning}, pp.\  1352--1361. PMLR, 2017.

\bibitem[Haarnoja et~al.(2018{\natexlab{a}})Haarnoja, Zhou, Abbeel, and Levine]{soft_actor_critic}
Haarnoja, T., Zhou, A., Abbeel, P., and Levine, S.
\newblock Soft actor-critic: Off-policy maximum entropy deep reinforcement learning with a stochastic actor.
\newblock In \emph{International conference on machine learning}, pp.\  1861--1870. PMLR, 2018{\natexlab{a}}.

\bibitem[Haarnoja et~al.(2018{\natexlab{b}})Haarnoja, Zhou, Hartikainen, Tucker, Ha, Tan, Kumar, Zhu, Gupta, Abbeel, et~al.]{soft_actor_critic_application}
Haarnoja, T., Zhou, A., Hartikainen, K., Tucker, G., Ha, S., Tan, J., Kumar, V., Zhu, H., Gupta, A., Abbeel, P., et~al.
\newblock Soft actor-critic algorithms and applications.
\newblock \emph{arXiv preprint arXiv:1812.05905}, 2018{\natexlab{b}}.

\bibitem[Han et~al.(2022)Han, Ren, Wu, Zhou, and Peng]{han2022off}
Han, B., Ren, Z., Wu, Z., Zhou, Y., and Peng, J.
\newblock Off-policy reinforcement learning with delayed rewards.
\newblock In \emph{International Conference on Machine Learning}, pp.\  8280--8303. PMLR, 2022.

\bibitem[Hasbrouck \& Saar(2013)Hasbrouck and Saar]{hasbrouck2013low}
Hasbrouck, J. and Saar, G.
\newblock Low-latency trading.
\newblock \emph{Journal of Financial Markets}, 16\penalty0 (4):\penalty0 646--679, 2013.

\bibitem[Hwangbo et~al.(2017)Hwangbo, Sa, Siegwart, and Hutter]{quadrotor_delay}
Hwangbo, J., Sa, I., Siegwart, R., and Hutter, M.
\newblock Control of a quadrotor with reinforcement learning.
\newblock \emph{IEEE Robotics and Automation Letters}, 2\penalty0 (4):\penalty0 2096--2103, 2017.

\bibitem[Kakade \& Langford(2002)Kakade and Langford]{optimal_approximate_rl}
Kakade, S. and Langford, J.
\newblock Approximately optimal approximate reinforcement learning.
\newblock In \emph{Proceedings of the Nineteenth International Conference on Machine Learning}, pp.\  267--274, 2002.

\bibitem[Katsikopoulos \& Engelbrecht(2003)Katsikopoulos and Engelbrecht]{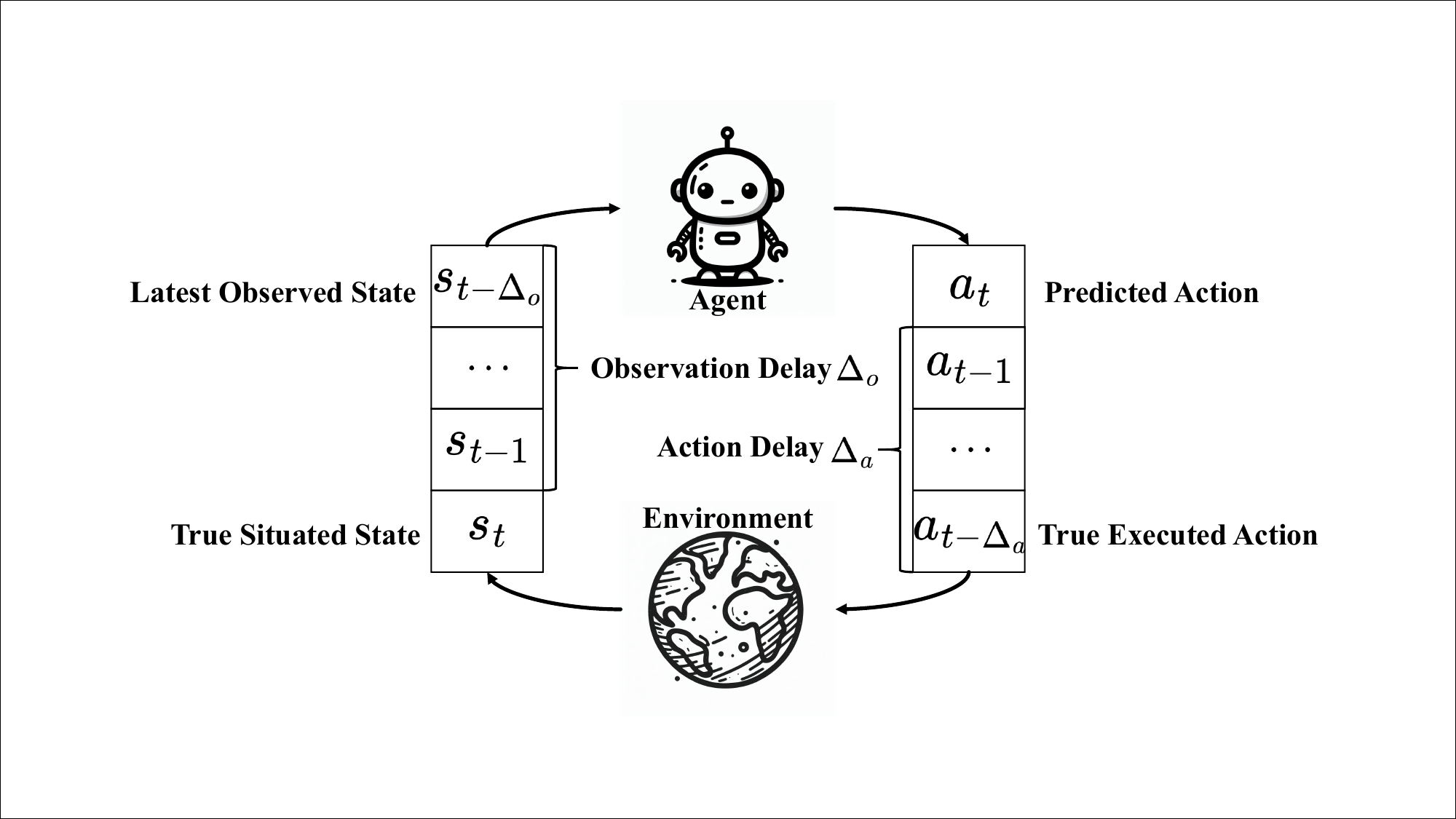}
Katsikopoulos, K.~V. and Engelbrecht, S.~E.
\newblock Markov decision processes with delays and asynchronous cost collection.
\newblock \emph{IEEE transactions on automatic control}, 48\penalty0 (4):\penalty0 568--574, 2003.

\bibitem[Kim et~al.(2023)Kim, Kim, Kang, Baek, and Han]{kim2023belief}
Kim, J., Kim, H., Kang, J., Baek, J., and Han, S.
\newblock Belief projection-based reinforcement learning for environments with delayed feedback.
\newblock In \emph{Thirty-seventh Conference on Neural Information Processing Systems}, 2023.

\bibitem[Kim \& Lee(2020)Kim and Lee]{kim2020automatic}
Kim, J.-G. and Lee, B.
\newblock Automatic p2p energy trading model based on reinforcement learning using long short-term delayed reward.
\newblock \emph{Energies}, 13\penalty0 (20):\penalty0 5359, 2020.

\bibitem[Kingma \& Ba(2014)Kingma and Ba]{kingma2014adam}
Kingma, D.~P. and Ba, J.
\newblock Adam: A method for stochastic optimization.
\newblock \emph{arXiv preprint arXiv:1412.6980}, 2014.

\bibitem[Kumar et~al.(2019)Kumar, Fu, Soh, Tucker, and Levine]{kumar2019stabilizing}
Kumar, A., Fu, J., Soh, M., Tucker, G., and Levine, S.
\newblock Stabilizing off-policy q-learning via bootstrapping error reduction.
\newblock \emph{Advances in Neural Information Processing Systems}, 32, 2019.

\bibitem[Liotet et~al.(2021)Liotet, Venneri, and Restelli]{learning_belief}
Liotet, P., Venneri, E., and Restelli, M.
\newblock Learning a belief representation for delayed reinforcement learning.
\newblock In \emph{2021 International Joint Conference on Neural Networks (IJCNN)}, pp.\  1--8. IEEE, 2021.

\bibitem[Liotet et~al.(2022)Liotet, Maran, Bisi, and Restelli]{dida}
Liotet, P., Maran, D., Bisi, L., and Restelli, M.
\newblock Delayed reinforcement learning by imitation.
\newblock In \emph{International Conference on Machine Learning}, pp.\  13528--13556. PMLR, 2022.

\bibitem[Mahmood et~al.(2018)Mahmood, Korenkevych, Komer, and Bergstra]{arm_delay}
Mahmood, A.~R., Korenkevych, D., Komer, B.~J., and Bergstra, J.
\newblock Setting up a reinforcement learning task with a real-world robot.
\newblock In \emph{2018 IEEE/RSJ International Conference on Intelligent Robots and Systems (IROS)}, pp.\  4635--4640. IEEE, 2018.

\bibitem[Mnih et~al.(2015)Mnih, Kavukcuoglu, Silver, Rusu, Veness, Bellemare, Graves, Riedmiller, Fidjeland, Ostrovski, et~al.]{deep_q_network}
Mnih, V., Kavukcuoglu, K., Silver, D., Rusu, A.~A., Veness, J., Bellemare, M.~G., Graves, A., Riedmiller, M., Fidjeland, A.~K., Ostrovski, G., et~al.
\newblock Human-level control through deep reinforcement learning.
\newblock \emph{nature}, 518\penalty0 (7540):\penalty0 529--533, 2015.

\bibitem[Myshkis(1955)]{myshkis1955lineare}
Myshkis, A.~D.
\newblock Lineare differentialgleichungen mit nacheilendem argument.
\newblock 1955.

\bibitem[Nath et~al.(2021)Nath, Baranwal, and Khadilkar]{revisiting_augment}
Nath, S., Baranwal, M., and Khadilkar, H.
\newblock Revisiting state augmentation methods for reinforcement learning with stochastic delays.
\newblock In \emph{Proceedings of the 30th ACM International Conference on Information \& Knowledge Management}, pp.\  1346--1355, 2021.

\bibitem[Rachelson \& Lagoudakis(2010)Rachelson and Lagoudakis]{rl_lipschitz_continuous}
Rachelson, E. and Lagoudakis, M.~G.
\newblock On the locality of action domination in sequential decision making.
\newblock 2010.

\bibitem[Schmidhuber(1991{\natexlab{a}})]{jurgenneural}
Schmidhuber, J.
\newblock neural sequence chunkers.
\newblock 1991{\natexlab{a}}.

\bibitem[Schmidhuber(1991{\natexlab{b}})]{schmidhuber1991learning}
Schmidhuber, J.
\newblock Learning to generate sub-goals for action sequences.
\newblock In \emph{Artificial neural networks}, pp.\  967--972, 1991{\natexlab{b}}.

\bibitem[Schuitema et~al.(2010)Schuitema, Bu{\c{s}}oniu, Babu{\v{s}}ka, and Jonker]{memoryless}
Schuitema, E., Bu{\c{s}}oniu, L., Babu{\v{s}}ka, R., and Jonker, P.
\newblock Control delay in reinforcement learning for real-time dynamic systems: A memoryless approach.
\newblock In \emph{2010 IEEE/RSJ International Conference on Intelligent Robots and Systems}, pp.\  3226--3231. IEEE, 2010.

\bibitem[Silver et~al.(2018)Silver, Hubert, Schrittwieser, Antonoglou, Lai, Guez, Lanctot, Sifre, Kumaran, Graepel, et~al.]{silver2018general}
Silver, D., Hubert, T., Schrittwieser, J., Antonoglou, I., Lai, M., Guez, A., Lanctot, M., Sifre, L., Kumaran, D., Graepel, T., et~al.
\newblock A general reinforcement learning algorithm that masters chess, shogi, and go through self-play.
\newblock \emph{Science}, 362\penalty0 (6419):\penalty0 1140--1144, 2018.

\bibitem[Sutton(1984)]{sutton1984temporal}
Sutton, R.~S.
\newblock \emph{Temporal credit assignment in reinforcement learning}.
\newblock University of Massachusetts Amherst, 1984.

\bibitem[Sutton(1995)]{sutton1995generalization}
Sutton, R.~S.
\newblock Generalization in reinforcement learning: Successful examples using sparse coarse coding.
\newblock \emph{Advances in neural information processing systems}, 8, 1995.

\bibitem[Sutton \& Barto(2018)Sutton and Barto]{rlai}
Sutton, R.~S. and Barto, A.~G.
\newblock \emph{Reinforcement learning: An introduction}.
\newblock MIT press, 2018.

\bibitem[Tesauro(1994)]{tesauro1994td}
Tesauro, G.
\newblock Td-gammon, a self-teaching backgammon program, achieves master-level play.
\newblock \emph{Neural computation}, 6\penalty0 (2):\penalty0 215--219, 1994.

\bibitem[Todorov et~al.(2012)Todorov, Erez, and Tassa]{mujoco}
Todorov, E., Erez, T., and Tassa, Y.
\newblock Mujoco: A physics engine for model-based control.
\newblock In \emph{2012 IEEE/RSJ international conference on intelligent robots and systems}, pp.\  5026--5033. IEEE, 2012.

\bibitem[Villani et~al.(2009)]{villani2009optimal}
Villani, C. et~al.
\newblock \emph{Optimal transport: old and new}, volume 338.
\newblock Springer, 2009.

\bibitem[Walsh et~al.(2009)Walsh, Nouri, Li, and Littman]{learning_planning_delayed_feedback}
Walsh, T.~J., Nouri, A., Li, L., and Littman, M.~L.
\newblock Learning and planning in environments with delayed feedback.
\newblock \emph{Autonomous Agents and Multi-Agent Systems}, 18:\penalty0 83--105, 2009.

\bibitem[Wang et~al.(2023{\natexlab{a}})Wang, Zhan, Wang, Huang, Wang, Yang, and Zhu]{wang2023joint}
Wang, Y., Zhan, S., Wang, Z., Huang, C., Wang, Z., Yang, Z., and Zhu, Q.
\newblock Joint differentiable optimization and verification for certified reinforcement learning.
\newblock In \emph{Proceedings of the ACM/IEEE 14th International Conference on Cyber-Physical Systems (with CPS-IoT Week 2023)}, pp.\  132--141, 2023{\natexlab{a}}.

\bibitem[Wang et~al.(2023{\natexlab{b}})Wang, Zhan, Jiao, Wang, Jin, Yang, Wang, Huang, and Zhu]{wang2023enforcing}
Wang, Y., Zhan, S.~S., Jiao, R., Wang, Z., Jin, W., Yang, Z., Wang, Z., Huang, C., and Zhu, Q.
\newblock Enforcing hard constraints with soft barriers: Safe reinforcement learning in unknown stochastic environments.
\newblock In \emph{International Conference on Machine Learning}, pp.\  36593--36604. PMLR, 2023{\natexlab{b}}.

\bibitem[Wang et~al.(2024)Wang, Strupl, Faccio, Wu, Liu, Grudzie{\'n}, Tan, and Schmidhuber]{wang2024highway}
Wang, Y., Strupl, M., Faccio, F., Wu, Q., Liu, H., Grudzie{\'n}, M., Tan, X., and Schmidhuber, J.
\newblock Highway reinforcement learning.
\newblock \emph{arXiv preprint arXiv:2405.18289}, 2024.

\bibitem[Xu et~al.(2021)Xu, Fu, Wang, O'Neill, and Zhu]{xu2021learning}
Xu, S., Fu, Y., Wang, Y., O'Neill, Z., and Zhu, Q.
\newblock Learning-based framework for sensor fault-tolerant building hvac control with model-assisted learning.
\newblock In \emph{Proceedings of the 8th ACM international conference on systems for energy-efficient buildings, cities, and transportation}, pp.\  1--10, 2021.

\bibitem[Xu et~al.(2022)Xu, Fu, Wang, Yang, O'Neill, Wang, and Zhu]{xu2022accelerate}
Xu, S., Fu, Y., Wang, Y., Yang, Z., O'Neill, Z., Wang, Z., and Zhu, Q.
\newblock Accelerate online reinforcement learning for building hvac control with heterogeneous expert guidances.
\newblock In \emph{Proceedings of the 9th ACM International Conference on Systems for Energy-Efficient Buildings, Cities, and Transportation}, BuildSys '22, pp.\  89–98, 2022.

\bibitem[Xue et~al.(2020)Xue, Wang, Feng, and Zhan]{xue2020over}
Xue, B., Wang, Q., Feng, S., and Zhan, N.
\newblock Over-and underapproximating reach sets for perturbed delay differential equations.
\newblock \emph{IEEE Transactions on Automatic Control}, 66\penalty0 (1):\penalty0 283--290, 2020.

\bibitem[Xue et~al.(2021)Xue, Bai, Zhan, Liu, and Jiao]{xue2021reach}
Xue, B., Bai, Y., Zhan, N., Liu, W., and Jiao, L.
\newblock Reach-avoid analysis for delay differential equations.
\newblock In \emph{2021 60th IEEE Conference on Decision and Control (CDC)}, pp.\  1301--1307. IEEE, 2021.

\bibitem[Zhan et~al.(2024)Zhan, Wang, Wu, Jiao, Huang, and Zhu]{zhan2024state}
Zhan, S.~S., Wang, Y., Wu, Q., Jiao, R., Huang, C., and Zhu, Q.
\newblock State-wise safe reinforcement learning with pixel observations.
\newblock \emph{6th Annual Learning for Dynamics and Control Conference}, 2024.

\end{thebibliography}
\bibliographystyle{icml2024}

\newpage
\appendix
\onecolumn

\section{Implementation Detail}
\label{boad_implementation_detail}
The hyper-parameters setting used in this work is provided in Table \ref{hyperparameters_setting} for Acrobot and MuJoCo benchmarks. We provide the pseudo-code of AD-DQN and AD-SAC in Algorithm \ref{appendix_boad_q_learning_alg} and Algorithm \ref{appendix_boad_sac_alg}, respectively, along with the description of how to practically implement them in detail.
The code for reproducing our results can be found in the supplementary material.

\subsection{Hyper-parameters Setting}
\begin{table}[h]
    \centering
    \begin{tabular}{|c|c|c|}
    \hline
         Hyper-parameter & Setting(Acrobot) & Setting(MuJoCo)\\
    \hline
         buffer size & 1,000 & 1,000,000\\
         batch size     & 128 & 256\\
         total timesteps& 500,000 & 1,000,000\\
         discount factor& 0.99 & 0.99\\
         learning rate  & 2.5e-4 & 3e-4(actor), 1e-3(critic)\\
         network layers  & 3 & 3\\
         network neurons  & [128, 64] & [256, 256]\\
         activation  & ReLU & ReLU\\
         optimizer  & Adam~\cite{kingma2014adam} & Adam\\
         initial $\epsilon$ for $\epsilon$-greedy & 1.0 & -\\
         final $\epsilon$ for $\epsilon$-greedy & 0.05 & -\\
         initial entropy $\alpha$ for SAC & - & 0.2 \\
         learning rate for entropy $\alpha$ & - & 1e-3 \\
         train frequency & 5 & 2(actor), 1(critic)\\
         target network update frequency & 500 & -\\
         target network soft update factor & - & 5e-3 \\
         n for N-steps & - & 3 \\
         auxiliary delays $\Delta^{\tau}$ for AD-RL & 0 & 0\\
         
    \hline
    \end{tabular}
    \caption{Hyper-parameters Setting on Acrobot and MuJoCo benchmarks.}
    \label{hyperparameters_setting}
\end{table}

\subsection{Discrete Control: AD-DQN}
In practice, we will maintain two Q-networks (\textcolor{lightblue}{$Q_{\psi}$}, \textcolor{lightred}{$Q^{\tau}_{\theta}$}) at the same time, and each of them corresponds to different delays (\textcolor{lightblue}{$\Delta$}, \textcolor{lightred}{$\Delta^{\tau}$}).
When the agent needs to select an action, it will first enquire about two Q-networks and get the two best actions in different views of delays, separately. Then evaluate these two actions based on the auxiliary Q-network \textcolor{lightred}{$Q^{\tau}_{\theta}$}.
Here, we experimentally found that the \textit{argmin} operator, taking a more conservative action is more stable than \textit{argmax} operator.
For the auxiliary Q-network \textcolor{lightred}{$Q^{\tau}_{\theta}$}, its update rule is the same as the original DQN.
And for the Q-network \textcolor{lightblue}{$Q^{\tau}_{\psi}$}, we update it in our way: bootstrapping on \textcolor{lightred}{$Q^{\tau}_{\theta}$}. To stabilize the training process, we also adopt the target networks to estimate the td-targets~\cite{deep_q_network}.

\begin{algorithm}[h]
   \caption{Auxiliary-Delayed Deep Q-Network (AD-DQN)}
   \label{appendix_boad_q_learning_alg}
   \begin{algorithmic}
       \STATE {\bfseries Input:} Q-network \textcolor{lightblue}{$Q_{\psi}$} and target network \textcolor{lightblue}{$\hat{Q}_{\hat{\psi}}$} for delays \textcolor{lightblue}{$\Delta$}; 
       Q-network \textcolor{lightred}{$Q^{\tau}_{\theta}$} and target network \textcolor{lightred}{$\hat{Q}^{\tau}_{\hat{\theta}}$} for auxiliary delays \textcolor{lightred}{$\Delta^{\tau}$}; 
       discount factor $\gamma$;
       empty replay buffer $\mathcal{D}$;
       \FOR{each episode}
           \STATE Initial state buffer $(s_1, \cdots, s_{\Delta^{\tau}}, \cdots, s_{\Delta})$ and action buffer $(a_1, \cdots, a_{\Delta^{\tau}}, \cdots, a_{\Delta})$
           \FOR{each environment step $t$}
                \STATE Observe augment states $\textcolor{lightblue}{x_t}=(s_1, a_1, \cdots, a_{\Delta})$ and $\textcolor{lightred}{x^{\tau}_t}=(s_{\Delta^{\tau}}, a_{\Delta^{\tau}}, \cdots, a_{\Delta})$
                \STATE Sample and take action $a_t = \mathop{\arg\min}\limits_{\hat{a} \in \{\arg\max_a\textcolor{lightblue}{Q_\psi}(\textcolor{lightblue}{x_t}, a), \arg\max_a\textcolor{lightred}{Q^{\tau}_{\theta}}(\textcolor{lightred}{x^{\tau}_t}, a)\}}\textcolor{lightred}{Q^{\tau}_{\theta}}(\textcolor{lightred}{x^{\tau}_t}, \hat{a})$
                \STATE Observe reward $r_t$ and next state $s_{t+1}$
                \STATE Update state buffers: $(s_1, \cdots, s_{\Delta})\leftarrow (s_2, \cdots, s_{\Delta}, s_t)$ and action buffers: $(a_1, \cdots, a_{\Delta})\leftarrow (a_2, \cdots, a_{\Delta}, a_t)$
                \STATE Update replay buffer: $\mathcal{D} \leftarrow \mathcal{D} \cup \{\textcolor{lightblue}{x_t}, \textcolor{lightred}{x^{\tau}_t}, a_t, r_t, \textcolor{lightblue}{x_{t+1}}, \textcolor{lightred}{x^{\tau}_{t+1}}\}$
                \FOR{each batch $\{\textcolor{lightblue}{x_t}, \textcolor{lightred}{x^{\tau}_t}, a_t, r_t, \textcolor{lightblue}{x_{t+1}}, \textcolor{lightred}{x^{\tau}_{t+1}}\} \sim \mathcal{D}$}
                \IF{$\text{Uniform}(0, 1) > 0.5$}
                \STATE Update \textcolor{lightred}{$Q^{\tau}_{\theta}$} via applying gradient descent\\
                $$
                    \triangledown_{\textcolor{lightred}{\theta}}
                    \left(\textcolor{lightred}{Q^{\tau}_{\theta}}(\textcolor{lightred}{x^{\tau}_t}, a_t)
                    - \left[r_t + \gamma \max_{a_{t+1}}\textcolor{lightred}{\hat{Q}^{\tau}_{\hat{\theta}}}(\textcolor{lightred}{x^{\tau}_{t+1}}, a_{t+1})\right]\right)
                $$
                \ELSE
                \STATE Update \textcolor{lightblue}{$Q_{\psi}$} via applying gradient descent\\
                $$
                \triangledown_{\textcolor{lightblue}{\psi}}
                    \left(\textcolor{lightblue}{Q_{\psi}}(\textcolor{lightblue}{x_t}, a_t)
                    -
                    \left[r_t + \gamma \textcolor{lightred}{\hat{Q}^{\tau}_{\hat{\theta}}}(\textcolor{lightred}{x^{\tau}_{t+1}}, {\arg\max}_{a_{t+1}}\textcolor{lightblue}{\hat{Q}_{\hat{\psi}}}(\textcolor{lightblue}{x_{t+1}}, a_{t+1}))\right]\right)
                $$
                \ENDIF
                \ENDFOR
                \STATE Update target networks weights \textcolor{lightblue}{$\hat{Q}_{\hat{\psi}}$} and \textcolor{lightred}{$\hat{Q}^{\tau}_{\hat{\theta}}$} via copying from \textcolor{lightblue}{${Q}_{{\psi}}$} and \textcolor{lightred}{${Q}^{\tau}_{{\theta}}$}, respectively\\
            \ENDFOR
       \ENDFOR
       \STATE {\bfseries Output:} Q-network \textcolor{lightblue}{$Q_\psi$}
    \end{algorithmic}
\end{algorithm}

\subsection{Continuous Control: AD-SAC}
Applying our method to soft actor-critic, we adopt some advanced modifications~\cite{soft_actor_critic_application}, including removing the unnecessary value network, adjusting entropy automatically.
We maintain two policies \textcolor{lightblue}{$\pi_\psi$} and \textcolor{lightred}{$\pi^{\tau}_\phi$} for delays \textcolor{lightblue}{$\Delta$} and \textcolor{lightred}{$\Delta^{\tau}$} respectively.
To stabilize off-policy training and reduce bootstrapping error~\cite{kumar2019stabilizing} in AD-SAC, we maintain two Q-functions (\textcolor{lightred}{$Q^{\tau}_{\theta_1}, Q^{\tau}_{\theta_2}$}) for evaluating \textcolor{lightblue}{$\pi_\psi$} and \textcolor{lightred}{$\pi^{\tau}_\phi$} respectively.
Similar to the AD-DQN, selecting the conservative action from policies \textcolor{lightblue}{$\pi_\psi$} and \textcolor{lightred}{$\pi^{\tau}_\phi$} can stabilize the learning process.

\begin{algorithm}[h]
   \caption{Auxiliary Delay Soft Actor-Critic (AD-SAC)}
   \label{appendix_boad_sac_alg}
   \begin{algorithmic}
        \STATE {\bfseries Input:} actor \textcolor{lightblue}{$\pi_\psi$} for delay \textcolor{lightblue}{$\Delta$};
        actor \textcolor{lightred}{$\pi^{\tau}_\phi$}, critics \textcolor{lightred}{$Q^{\tau}_{\theta_1}, Q^{\tau}_{\theta_2}$} and target critics \textcolor{lightred}{$\hat{Q^{\tau}}_{\hat{\theta_1}}, \hat{Q^{\tau}}_{\hat{\theta_2}}$} for auxiliary delays \textcolor{lightred}{$\Delta^{\tau}$}; n-step $n$;
        replay buffer $\mathcal{D}$;
        \FOR{each episode}
            \STATE Initial state buffer $(s_1, \cdots, s_{\Delta^{\tau}}, \cdots, s_{\Delta})$ and action buffer $(a_1, \cdots, a_{\Delta^{\tau}}, \cdots, a_{\Delta})$
            \FOR{each environment step $t$}
                \STATE Observe augment states $\textcolor{lightblue}{x_t}=(s_1, a_1, \cdots, a_{\Delta})$ and $\textcolor{lightred}{x^{\tau}_t}=(s_{\Delta^{\tau}}, a_{\Delta^{\tau}}, \cdots, a_{\Delta})$
                \STATE Sample and take action $a_t = \mathop{\arg\min}\limits_{a \in \{\textcolor{lightblue}{\pi_\psi(x_t)}, \textcolor{lightred}{\pi^{\tau}_\phi(x^{\tau}_t)}\}}\textcolor{lightred}{Q^{\tau}_{\theta_i}}(\textcolor{lightred}{x^{\tau}_t}, a)$
                \STATE Observe reward $r_t$ and next state $s_{t+1}$
                \STATE Update state buffer $(s_1, \cdots, s_{\Delta})\leftarrow (s_2, \cdots, s_{\Delta}, s_t)$ and action buffer $(a_1, \cdots, a_{\Delta})\leftarrow (a_2, \cdots, a_{\Delta}, a_t)$\\
                \STATE Update replay buffer: $\mathcal{D} \leftarrow \mathcal{D} \cup \{\textcolor{lightblue}{x_t}, \textcolor{lightred}{x^{\tau}_t}, a_t, r_t, \textcolor{lightblue}{x_{t+1}}, \textcolor{lightred}{x^{\tau}_{t+1}}\}$
            \ENDFOR
        \ENDFOR
        \FOR{each batch $\{\textcolor{lightblue}{x_t}, \textcolor{lightred}{x^{\tau}_t}, a_t, r_t:r_{t+n-1}, \textcolor{lightblue}{x_{t+n}}, \textcolor{lightred}{x^{\tau}_{t+n}}\} \sim \mathcal{D}$}
            \STATE Compute TD Target 
            $$
            \mathbb{Y} =
            \mathop{\mathbb{E}}_{\textcolor{lightred}{\hat{a}\sim\pi^{\tau}_\phi(\cdot|x^{\tau}_{t+n})}\atop
            \textcolor{lightblue}{\hat{a}\sim\pi_\psi(\cdot|x_{t+n})}
            }\left[
                \sum_{i=0}^{n-1}\left[\gamma^i r_{t+i}\right] + \gamma^n \min \left(
                \textcolor{lightred}{Q^{\tau}_{\theta_1}(x^{\tau}_{t+n}, \hat{a})}-\log \textcolor{lightred}{\pi^{\tau}_\phi(\hat{a}|x^{\tau}_{t+n})},
                \textcolor{lightred}{Q^{\tau}_{\theta_2}(x_{t+n}, \textcolor{lightblue}{\hat{a}})}-\log \textcolor{lightblue}{\pi_\phi(\hat{a}|x_{t+1})},
                \right)
            \right]
            $$
            \STATE Update \textcolor{lightred}{$Q^{\tau}_{\theta_i}(i=1,2)$} via applying gradient descent\\
            $$    
            \begin{aligned}
                \triangledown_{\textcolor{lightred}{\theta_i}} \left[\textcolor{lightred}{Q^{\tau}_{\theta_i}(x^{\tau}_t, \textcolor{black}{a_t})}
                -
                \mathbb{Y}
                \right] \\
            \end{aligned}
            $$
            \IF{$\text{Uniform}(0, 1) > 0.5$}
            \STATE Update \textcolor{lightred}{$\pi^{\tau}_\phi$} via applying gradient descent\\
            $$\triangledown_{\textcolor{lightred}{\phi}} \mathop{\mathbb{E}}_{\textcolor{lightred}{\hat{a}\sim\pi^{\tau}_\phi(\cdot|x^{\tau}_t)}}\left[\log \textcolor{lightred}{\pi^{\tau}_\phi(\hat{a}|x^{\tau}_t)}-\min_{i=1,2}\textcolor{lightred}{Q^{\tau}_{\theta_i}(x^{\tau}_t, \hat{a})}\right]$$
            \ELSE
            \STATE Update \textcolor{lightblue}{$\pi_\psi$} via applying gradient descent\\
            $$\triangledown_{\textcolor{lightblue}{\psi}} \mathop{\mathbb{E}}_{\textcolor{lightblue}{\hat{a}\sim\pi_\psi(\cdot|x_t)}}\left[\log \textcolor{lightblue}{\pi_\psi(\hat{a}|x_t)}-\min_{i=1,2}\textcolor{lightred}{Q^{\tau}_{\theta_i}}(\textcolor{lightred}{x^{\tau}_t}, \textcolor{lightblue}{\hat{a}})\right]$$            
            \ENDIF
            \STATE Soft update target critics weights \textcolor{lightred}{$Q^{\tau}_{\theta_1}, Q^{\tau}_{\theta_2}$} via copying from \textcolor{lightred}{$\hat{Q^{\tau}}_{\hat{\theta_1}}, \hat{Q^{\tau}}_{\hat{\theta_2}}$}, respectively\\
        \ENDFOR
       \STATE {\bfseries Output:} actor \textcolor{lightblue}{$\pi_\psi$}
    \end{algorithmic}
\end{algorithm}

\clearpage
\section{Theoretical Analysis}

\subsection{Performance Difference}
\label{appendix_vf_bound}

\begin{proposition}[Lipschitz Continuous Q-value function Bound~\cite{dida}]
    \label{relation_q_pi}
    Consider a $L_Q$-LC Q-function $Q^\pi$ of the $L_\pi$-LC policy $\pi$ in the $(L_P, L_R)$-LC MDP, it satisfies that $\forall x\in\mathcal{X}$,
    $$
        \left|
        \mathop{\mathbb{E}}_{
            a_1 \sim \mu \atop
            a_2 \sim \upsilon
        }\left[
        Q^\pi(x, a_1) - Q^\pi(x, a_2)
        \right]
        \right|
        \leq
        L_Q W_1\left(\mu||\upsilon\right)
    $$
    where $\mu, \upsilon$ are two arbitrary distributions over $\mathcal{X}$.
\end{proposition}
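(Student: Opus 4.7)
The plan is to apply the Kantorovich--Rubinstein duality for the $1$-Wasserstein distance. First I would observe that the $L_Q$-Lipschitz continuity of $Q^\pi$ (Definition~\ref{assume_lc_q}), which by analogy with the MDP LC definition reads $|Q^\pi(s_1,a_1) - Q^\pi(s_2,a_2)| \le L_Q(d_\mathcal{S}(s_1,s_2)+d_\mathcal{A}(a_1,a_2))$, implies that for any fixed $x\in\mathcal{X}$ the map $a\mapsto Q^\pi(x,a)$ is $L_Q$-Lipschitz on $\mathcal{A}$. Equivalently, $a\mapsto Q^\pi(x,a)/L_Q$ is $1$-Lipschitz, so it is an admissible test function in the dual formulation of $W_1$.

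Second, since $a_1$ and $a_2$ are drawn independently from $\mu$ and $\upsilon$, the joint expectation splits by linearity:
\[
\mathop{\mathbb{E}}_{a_1\sim\mu,\, a_2\sim\upsilon}\!\left[Q^\pi(x,a_1)-Q^\pi(x,a_2)\right]
=\mathbb{E}_{a\sim\mu}[Q^\pi(x,a)] - \mathbb{E}_{a\sim\upsilon}[Q^\pi(x,a)].
\]
Then, invoking Kantorovich--Rubinstein,
\[
W_1(\mu\|\upsilon) \;=\; \sup_{\|f\|_{\mathrm{Lip}}\le 1}\bigl|\mathbb{E}_\mu[f]-\mathbb{E}_\upsilon[f]\bigr|,
\]
and plugging in $f(\cdot)=Q^\pi(x,\cdot)/L_Q$ immediately yields the stated bound.

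As an equivalent route I might use a coupling argument: for any coupling $\gamma\in\Pi(\mu,\upsilon)$, Jensen's inequality and the action-wise Lipschitz bound give
\[
\Bigl|\mathop{\mathbb{E}}_{(a_1,a_2)\sim\gamma}[Q^\pi(x,a_1)-Q^\pi(x,a_2)]\Bigr|
\le \mathop{\mathbb{E}}_{(a_1,a_2)\sim\gamma}\bigl[L_Q\, d_\mathcal{A}(a_1,a_2)\bigr],
\]
and taking the infimum over couplings on the right produces $L_Q\,W_1(\mu\|\upsilon)$, while the left-hand side is unchanged because $a_1,a_2$ are independent. There is essentially no obstacle here; the only point requiring care is to confirm that the joint LC hypothesis on $Q^\pi$ specializes, for fixed $x$, to pure Lipschitz continuity in the action variable, which follows immediately by setting the two state arguments equal in Definition~\ref{assume_lc_q}.
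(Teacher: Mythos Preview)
Your argument is correct and is the standard route: fix $x$, use the $L_Q$-Lipschitz property of $Q^\pi$ in the action coordinate, and invoke either Kantorovich--Rubinstein duality or an optimal-coupling argument to bound the difference of expectations by $L_Q\,W_1(\mu\|\upsilon)$. Note, however, that the paper does \emph{not} supply its own proof of this proposition; it is imported from~\cite{dida} and stated without argument, so there is nothing in the paper to compare against. Your proof is exactly what one would expect the cited reference to contain. One small remark: the proposition as written says $\mu,\upsilon$ are distributions over $\mathcal{X}$, but they are used as distributions over $\mathcal{A}$; you implicitly (and correctly) treat them as action distributions, which is what is needed downstream in Theorem~\ref{appendix_delayed_performance_difference_bound}.
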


\begin{lemma}[General Delayed Performance Difference]
    \label{appendix_general_delayed_performance_diff}
    For policies \textcolor{lightred}{$\pi^{\tau}$} and \textcolor{lightblue}{$\pi$}, with delays \textcolor{lightred}{$\Delta^{\tau}$} $<$ \textcolor{lightblue}{$\Delta$}. Given any \textcolor{lightblue}{$x_t$} $\in$ \textcolor{lightblue}{$\mathcal{X}$}, the performance difference is denoted as $I$(\textcolor{lightblue}{$x_t$})
    $$
    \begin{aligned}
        I(\textcolor{lightblue}{x_t}) &= \mathop{\mathbb{E}}_{\textcolor{lightred}{x^{\tau}_t}\sim b_\Delta(\cdot|\textcolor{lightblue}{x_t})}\left[\textcolor{lightred}{V^{\tau}(x^{\tau}_t)}\right] - \textcolor{lightblue}{V(x_t)}\\
        &= \frac{1}{1-\gamma} \mathop{\mathbb{E}}_{\substack{
        \textcolor{lightred}{\hat{x}^{\tau}} \sim b_\Delta(\cdot|\textcolor{lightblue}{\hat{x}})\\
        \textcolor{lightblue}{a\sim\pi(\cdot|\hat{x})}\\
        \textcolor{lightblue}{\hat{x} \sim d_{x_t}^{\pi}}
        }}
        \left[\textcolor{lightred}{V^{\tau}(\hat{x}^{\tau})} - \textcolor{lightred}{Q^{\tau}}(\textcolor{lightred}{\hat{x}^{\tau}}, \textcolor{lightblue}{a})\right]\\
    \end{aligned}
    $$

\end{lemma}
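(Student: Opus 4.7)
My plan is to mimic the classical Kakade--Langford performance difference argument, but adapted to the two-delay setting where the quantities live on different augmented spaces \textcolor{lightblue}{$\mathcal{X}$} and \textcolor{lightred}{$\mathcal{X}^{\tau}$} linked by the delayed belief $b_\Delta$. The key algebraic step will be an add-and-subtract of the mixed quantity
$$
M(\textcolor{lightblue}{x_t}) \;=\; \mathop{\mathbb{E}}_{\textcolor{lightred}{x^{\tau}_t}\sim b_\Delta(\cdot|\textcolor{lightblue}{x_t}),\; \textcolor{lightblue}{a_t\sim\pi(\cdot|x_t)}}\!\left[\textcolor{lightred}{Q^{\tau}}(\textcolor{lightred}{x^{\tau}_t},\textcolor{lightblue}{a_t})\right],
$$
which pairs a state sampled from the auxiliary space with an action sampled from the $\Delta$-delayed policy. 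Then
$I(\textcolor{lightblue}{x_t}) = \bigl(\mathbb{E}_{x^{\tau}_t}[V^{\tau}(x^{\tau}_t)] - M(x_t)\bigr) + \bigl(M(x_t) - V(x_t)\bigr)$,
where the first bracket is exactly the one-step ``advantage'' term $\mathbb{E}[V^{\tau}-Q^{\tau}]$ appearing on the right-hand side.

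The core of the proof is to show that the second bracket equals $\gamma\, \mathbb{E}_{x_{t+1}\sim\mathcal{P}_\Delta(\cdot|x_t,a_t)}[I(x_{t+1})]$, which gives the recursion $I(x_t) = \text{(advantage term)} + \gamma\, \mathbb{E}[I(x_{t+1})]$. To obtain this, I would expand $V(\textcolor{lightblue}{x_t})$ via its Bellman equation under $\pi$ and expand $Q^{\tau}(\textcolor{lightred}{x^{\tau}_t},\textcolor{lightblue}{a_t})$ via its Bellman equation using the $\Delta^{\tau}$-delayed dynamics. The rewards match in expectation because
$$
\mathop{\mathbb{E}}_{\textcolor{lightred}{x^{\tau}_t}\sim b_\Delta(\cdot|\textcolor{lightblue}{x_t})}\!\left[\textcolor{lightred}{\mathcal{R}_{\Delta^{\tau}}}(\textcolor{lightred}{x^{\tau}_t},a_t)\right] \;=\; \textcolor{lightblue}{\mathcal{R}_{\Delta}}(\textcolor{lightblue}{x_t},a_t),
$$
which follows from $b(s_t\mid \textcolor{lightblue}{x_t}) = \int b(s_t\mid \textcolor{lightred}{x^{\tau}_t})\,b_\Delta(\textcolor{lightred}{x^{\tau}_t}\mid \textcolor{lightblue}{x_t})\,\mathrm{d}\textcolor{lightred}{x^{\tau}_t}$, i.e.\ the belief factors through $b_\Delta$. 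The $\gamma$-discounted terms then collapse to $\gamma\,\mathbb{E}[\mathbb{E}_{x^{\tau}_{t+1}}[V^{\tau}(x^{\tau}_{t+1})] - V(x_{t+1})] = \gamma\,\mathbb{E}[I(x_{t+1})]$ once I verify that the pushforward of $\mathcal{P}_\Delta$ under $b_\Delta$ coincides with the joint distribution induced by first applying $\mathcal{P}_{\Delta^{\tau}}$ to $x^{\tau}_t$.

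Finally, iterating the recursion yields a geometric sum $I(x_t) = \sum_{k=0}^\infty \gamma^k\, \mathbb{E}[\,\textcolor{lightred}{V^{\tau}}-\textcolor{lightred}{Q^{\tau}}\,]$ evaluated along the trajectory generated by $\pi$ from $x_t$. Repackaging the infinite sum as an expectation under the (normalized) discounted visitation distribution $d^{\pi}_{x_t}$ introduces the $\tfrac{1}{1-\gamma}$ prefactor and gives exactly the stated form.

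The step I expect to be delicate is the reward- and dynamics-matching identity via $b_\Delta$: both equations require carefully unpacking the product form of $\mathcal{P}_\Delta$ (the shared action sub-sequence combined with $\mathcal{P}$ over the uninhibited states) and showing that marginalizing out the intermediate states $s_{t-\Delta+1},\ldots,s_{t-\Delta^{\tau}}$ yields precisely $b_\Delta$. Once this measure-theoretic bookkeeping is in place, the telescoping and geometric summation are routine.
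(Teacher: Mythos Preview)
Your proposal is correct and matches the paper's proof essentially line for line: the paper also adds and subtracts the mixed quantity $\mathbb{E}[Q^{\tau}(x^{\tau}_t,a_t)]$ (written in its Bellman-expanded form), identifies the first bracket as the advantage term and the second as $\gamma\,\mathbb{E}[I(x_{t+1})]$ via the same reward- and dynamics-matching identities through $b_\Delta$, and then unrolls the recursion into the discounted-visitation expectation. The distributional identity you flag as delicate is exactly the one the paper invokes (stated there as the equality of the two ways of sampling $x^{\tau}_{t+1}$).
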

\begin{proof}
    $$
    \begin{aligned}
        & \underbrace{
            \mathop{\mathbb{E}}_{
                \textcolor{lightred}{x^{\tau}_t}\sim b_\Delta(\cdot|\textcolor{lightblue}{x_t})
            }\left[
                \textcolor{lightred}{V^{\tau}(x^{\tau}_t)}
            \right] 
            - \textcolor{lightblue}{V(x_t)}
        }_{I(\textcolor{lightblue}{x_t}) }\\
        =& \underbrace{\mathop{\mathbb{E}}_{\textcolor{lightred}{x^{\tau}_t}\sim b_\Delta(\cdot|\textcolor{lightblue}{x_t})}\left[\textcolor{lightred}{V^{\tau}(x^{\tau}_t)}\right] 
        - \mathop{\mathbb{E}}_{
            \textcolor{lightred}{x^{\tau}_t}\sim b_\Delta(\cdot|\textcolor{lightblue}{x_t})\atop
            \textcolor{lightblue}{a_t \sim \pi(\cdot|x_t)}
        }
        \left[
            \textcolor{lightred}{\mathcal{R}_\tau(x^{\tau}_t}, \textcolor{lightblue}{a_t}) + \gamma \mathop{\mathbb{E}}_{\textcolor{lightred}{x^{\tau}_{t+1}\sim\mathcal{P}_{\Delta^{\tau}}(\cdot|x^{\tau}_{t}}, \textcolor{lightblue}{a_t})}\left[\textcolor{lightred}{V^{\tau}(x^{\tau}_{t+1})}\right]
        \right]}_A
        \\
        & + \underbrace{\mathop{\mathbb{E}}_{
            \textcolor{lightred}{x^{\tau}_t}\sim b_\Delta(\cdot|\textcolor{lightblue}{x_t})\atop
            \textcolor{lightblue}{a_t \sim \pi(\cdot|x_t)}
        }
        \left[
            \textcolor{lightred}{\mathcal{R}_\tau}(\textcolor{lightred}{x^{\tau}_t}, \textcolor{lightblue}{a_t}) + \gamma \mathop{\mathbb{E}}_{\textcolor{lightred}{x^{\tau}_{t+1}\sim\mathcal{P}_{\Delta^{\tau}}(\cdot|x^{\tau}_{t}}, \textcolor{lightblue}{a_t})}\left[\textcolor{lightred}{V^{\tau}(x^{\tau}_{t+1})}\right]
        \right]
        - \textcolor{lightblue}{V(x_t)}}_B\\
    \end{aligned}
    $$
    For $A$, we have
    $$
        \begin{aligned}
            A = \mathop{\mathbb{E}}_{\textcolor{lightred}{x^{\tau}_t}\sim b_\Delta(\cdot|\textcolor{lightblue}{x_t})}\left[\textcolor{lightred}{V^{\tau}(x^{\tau}_t)}\right] 
            - \mathop{\mathbb{E}}_{
                \textcolor{lightred}{x^{\tau}_t}\sim b_\Delta(\cdot|\textcolor{lightblue}{x_t})\atop
                \textcolor{lightblue}{a_t \sim \pi(\cdot|x_t)}
            }
            \left[
                \textcolor{lightred}{Q^{\tau}(x^{\tau}_t}, \textcolor{lightblue}{a_t})
            \right]
        \end{aligned}
    $$
    And for $B$, note that $
    \textcolor{lightblue}{V(x_t)} 
    = \mathop{\mathbb{E}}_{
        \textcolor{lightred}{x^{\tau}_t} \sim b_\Delta(\cdot|\textcolor{lightblue}{x_t}) \atop
        \textcolor{lightblue}{a_t \sim \pi(\cdot|x_t)}
    }
    [
    \textcolor{lightred}{\mathcal{R}_\tau}(\textcolor{lightred}{x^{\tau}_t}, \textcolor{lightblue}{a_t})
    ]
    +
    \gamma \mathop{\mathbb{E}}_{
    \textcolor{lightblue}{x_{t+1}\sim\mathcal{P}_{\Delta}(\cdot|x_{t}}, \textcolor{lightblue}{a_t})\atop
    \textcolor{lightblue}{a_t \sim \pi(\cdot|x_t)}
    }
    [\textcolor{lightblue}{V(x_{t+1})}]
    $, then we have
    $$
        \begin{aligned}
            B =& \mathop{\mathbb{E}}_{
                \textcolor{lightred}{x^{\tau}_t}\sim b_\Delta(\cdot|\textcolor{lightblue}{x_t})\atop
                \textcolor{lightblue}{a_t \sim \pi(\cdot|x_t)}
            }
            \left[
                \textcolor{lightred}{\mathcal{R}_\tau}(\textcolor{lightred}{x^{\tau}_t}, \textcolor{lightblue}{a_t}) + \gamma \mathop{\mathbb{E}}_{\textcolor{lightred}{x^{\tau}_{t+1}\sim\mathcal{P}_{\Delta^{\tau}}(\cdot|x^{\tau}_{t}}, \textcolor{lightblue}{a_t})}\left[\textcolor{lightred}{V^{\tau}(x^{\tau}_{t+1})}\right]
            \right]\\
            &- 
            \mathop{\mathbb{E}}_{
                \textcolor{lightred}{x^{\tau}_t \sim b_\Delta(\cdot|x_t)}\atop
                \textcolor{lightblue}{a_t \sim \pi(\cdot|x_t)}
            }
            \left[
                \textcolor{lightred}{\mathcal{R}_\tau}(\textcolor{lightred}{x^{\tau}_t}, \textcolor{lightblue}{a_t})
            \right]
            -
            \gamma \mathop{\mathbb{E}}_{
            \textcolor{lightblue}{x_{t+1}\sim\mathcal{P}_{\Delta}(\cdot|x_{t}}, \textcolor{lightblue}{a_t})\atop
            \textcolor{lightblue}{a_t \sim \pi(\cdot|x_t)}
            }
            \left[\textcolor{lightblue}{V(x_{t+1})}\right]\\
            = & \mathop{\mathbb{E}}_{
                \textcolor{lightred}{x^{\tau}_t}\sim b_\Delta(\cdot|\textcolor{lightblue}{x_t})\atop
                \textcolor{lightblue}{a_t \sim \pi(\cdot|x_t)}
            }
            \left[
                \gamma \mathop{\mathbb{E}}_{\textcolor{lightred}{x^{\tau}_{t+1}\sim\mathcal{P}_{\Delta^{\tau}}(\cdot|x^{\tau}_{t}}, \textcolor{lightblue}{a_t})}\left[\textcolor{lightred}{V^{\tau}(x^{\tau}_{t+1})}\right]
            \right]
            - 
            \mathop{\mathbb{E}}_{
                \textcolor{lightblue}{a_t \sim \pi(\cdot|x_t)}
            }
            \left[
                \gamma \mathop{\mathbb{E}}_{\textcolor{lightblue}{x_{t+1}\sim\mathcal{P}_{\Delta}(\cdot|x_{t}}, \textcolor{lightblue}{a_t})}\left[\textcolor{lightblue}{V(x_{t+1})}\right]
            \right]\\
            = & \gamma \mathop{\mathbb{E}}_{
            \textcolor{lightblue}{x_{t+1}\sim\mathcal{P}_{\Delta}(\cdot|x_{t}}, \textcolor{lightblue}{a_t})\atop
            \textcolor{lightblue}{a_t \sim \pi(\cdot|x_t)}
            }\underbrace{\left[\mathop{\mathbb{E}}_{\textcolor{lightred}{x^{\tau}_{t+1}}\sim b_\Delta(\cdot|\textcolor{lightblue}{x_{t+1}})}\left[\textcolor{lightred}{V^{\tau}(x^{\tau}_{t+1})}\right] - \textcolor{lightblue}{V(x_{t+1})}\right]}_{I(\textcolor{lightblue}{x_{t+1}})}\\
        \end{aligned}
    $$

    The last step can be derived due to the fact that 
    $$
    \mathop{\mathbb{E}}_{\substack{
                \textcolor{lightred}{{x^{\tau}_{t+1}\sim\mathcal{P}_{\Delta^{\tau}}(\cdot|x^{\tau}_{t}}, \textcolor{lightblue}{a_t})}\\
                \textcolor{lightred}{x^{\tau}_t}\sim b_\Delta(\cdot|\textcolor{lightblue}{x_t})\\
                \textcolor{lightblue}{a_t \sim \pi(\cdot|x_t)}
            }}
            \left[
                \textcolor{lightred}{{x^{\tau}_{t+1}}}
            \right]
    =
    \mathop{\mathbb{E}}_{\substack{
            \textcolor{lightred}{x^{\tau}_{t+1}}\sim b_\Delta(\cdot|\textcolor{lightblue}{x_{t+1}})\\
            \textcolor{lightblue}{x_{t+1}\sim\mathcal{P}_{\Delta}(\cdot|x_{t}}, \textcolor{lightblue}{a_t})\\
            \textcolor{lightblue}{a_t \sim \pi(\cdot|x_t)}
            }}
            \left[
                \textcolor{lightred}{{x^{\tau}_{t+1}}}
            \right]
    $$

    Based on the above iterative equation, we have
    $$
        \begin{aligned}
            I(\textcolor{lightblue}{x_t}) &=
            \mathop{\mathbb{E}}_{\textcolor{lightred}{x^{\tau}_t}\sim b_\Delta(\cdot|\textcolor{lightblue}{x_t})}\left[\textcolor{lightred}{V^{\tau}(x^{\tau}_t)}\right] 
            - \mathop{\mathbb{E}}_{
                \textcolor{lightred}{x^{\tau}_t}\sim b_\Delta(\cdot|\textcolor{lightblue}{x_t})\atop
                \textcolor{lightblue}{a_t \sim \pi(\cdot|x_t)}
            }
            \left[
                \textcolor{lightred}{Q^{\tau}(x^{\tau}_t}, \textcolor{lightblue}{a_t})
            \right]
            + \gamma \mathop{\mathbb{E}}_{
            \textcolor{lightblue}{x_{t+1}\sim\mathcal{P}_{\Delta}(\cdot|x_{t}}, \textcolor{lightblue}{a_t})\atop
            \textcolor{lightblue}{a_t \sim \pi(\cdot|x_t)}
            }\left[{I(\textcolor{lightblue}{x_{t+1}})}\right]\\
            &= \sum_{i=0}^\infty\gamma^i
            \mathop{\mathbb{E}}_{
            \textcolor{lightblue}{x_{t+i}\sim\mathcal{P}_{\Delta}(\cdot|x_{t+i-1}}, \textcolor{lightblue}{a_{t+i-1}})\atop
            \textcolor{lightblue}{a_{t+i-1} \sim \pi(\cdot|x_{t+i-1})}}
            \left[\mathop{\mathbb{E}}_{\textcolor{lightred}{x^{\tau}_{t+i}}\sim b_\Delta(\cdot|\textcolor{lightblue}{x_{t+i}})}\left[\textcolor{lightred}{V^{\tau}(x^{\tau}_{t+i})}\right]
            - \mathop{\mathbb{E}}_{
                \textcolor{lightred}{x^{\tau}_{t+i}}\sim b_\Delta(\cdot|\textcolor{lightblue}{x_{t+i}})\atop
                \textcolor{lightblue}{a_{t+i} \sim \pi(\cdot|x_{t+i})}
            }
            \left[
                \textcolor{lightred}{Q^{\tau}(x^{\tau}_{t+i}}, \textcolor{lightblue}{a_{t+i}})
            \right]\right]\\
            &= \frac{1}{1-\gamma} \mathop{\mathbb{E}}_{\substack{
            \textcolor{lightred}{\hat{x}^{\tau}} \sim b_\Delta(\cdot|\textcolor{lightblue}{\hat{x}})\\
            \textcolor{lightblue}{a\sim\pi(\cdot|\hat{x})}\\
            \textcolor{lightblue}{\hat{x} \sim d_{x_t}^{\pi}}
            }}
            \left[\textcolor{lightred}{V^{\tau}(\hat{x}^{\tau})} - \textcolor{lightred}{Q^{\tau}}(\textcolor{lightred}{\hat{x}^{\tau}}, \textcolor{lightblue}{a})\right]\\
        \end{aligned}
    $$
\end{proof}

\begin{theorem}[Delayed Performance Difference Bound]
    \label{appendix_delayed_performance_difference_bound}
    For policies \textcolor{lightred}{$\pi^{\tau}$} and \textcolor{lightblue}{$\pi$}, with \textcolor{lightred}{$\Delta^{\tau}$} $<$ \textcolor{lightblue}{$\Delta$}. Given any \textcolor{lightblue}{$x_t$} $\in$ \textcolor{lightblue}{$\mathcal{X}$}, if \textcolor{lightred}{$Q^{\tau}$} is $L_Q$-LC, the performance difference between policies can be bounded as follow
    $$
    \begin{aligned}
        \mathop{\mathbb{E}}_{
        \textcolor{lightred}{x^{\tau}_t}\sim b_\Delta(\cdot|\textcolor{lightblue}{x_t})\atop
        \textcolor{lightblue}{a_t\sim\pi(\cdot|x_t)}
        }\left[
        \textcolor{lightred}{V^{\tau}(x^{\tau}_t)}
        -
        \textcolor{lightred}{Q^{\tau}}(\textcolor{lightred}{x^{\tau}_t}, \textcolor{lightblue}{a_t})
        \right]
        \leq 
        L_Q \mathop{\mathbb{E}}_{\textcolor{lightred}{x^{\tau}_t}\sim b_\Delta(\cdot|\textcolor{lightblue}{x_t})}
        \left[{\mathcal{W}
        (\textcolor{lightred}{\pi^{\tau}(\cdot|x^{\tau}_t)}} || 
        \textcolor{lightblue}{\pi(\cdot|x_t)})
        \right]
    \end{aligned}
    $$
\end{theorem}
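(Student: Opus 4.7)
The plan is to unfold the left-hand side using the identity $V^\tau(x^\tau_t) = \mathbb{E}_{a^\tau_t \sim \pi^\tau(\cdot|x^\tau_t)}[Q^\tau(x^\tau_t, a^\tau_t)]$, which turns the integrand into a difference of two expectations of the \emph{same} function $Q^\tau(x^\tau_t, \cdot)$ under two different action distributions: $\pi^\tau(\cdot|x^\tau_t)$ on one side and $\pi(\cdot|x_t)$ on the other. Concretely, I would write
\begin{equation*}
\mathop{\mathbb{E}}_{a_t\sim\pi(\cdot|x_t)}\!\left[\textcolor{lightred}{V^\tau(x^\tau_t)} - \textcolor{lightred}{Q^\tau(x^\tau_t,}\textcolor{lightblue}{a_t})\right]
= \mathop{\mathbb{E}}_{a^\tau_t\sim\textcolor{lightred}{\pi^\tau(\cdot|x^\tau_t)}}\!\left[\textcolor{lightred}{Q^\tau(x^\tau_t,a^\tau_t)}\right] - \mathop{\mathbb{E}}_{a_t\sim\textcolor{lightblue}{\pi(\cdot|x_t)}}\!\left[\textcolor{lightred}{Q^\tau(x^\tau_t,}\textcolor{lightblue}{a_t})\right],
\end{equation*}
so that both expectations sit on the same auxiliary-delayed state $x^\tau_t$ and only differ in the distribution over actions.

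Next, I would invoke Proposition \ref{relation_q_pi} (the Lipschitz Q-function bound), taking the two action distributions to be $\mu = \textcolor{lightred}{\pi^\tau(\cdot|x^\tau_t)}$ and $\upsilon = \textcolor{lightblue}{\pi(\cdot|x_t)}$. Since $\textcolor{lightred}{Q^\tau}$ is assumed $L_Q$-LC, this immediately yields
\begin{equation*}
\left|\mathop{\mathbb{E}}_{a^\tau_t\sim\textcolor{lightred}{\pi^\tau(\cdot|x^\tau_t)}}\!\left[\textcolor{lightred}{Q^\tau(x^\tau_t,a^\tau_t)}\right] - \mathop{\mathbb{E}}_{a_t\sim\textcolor{lightblue}{\pi(\cdot|x_t)}}\!\left[\textcolor{lightred}{Q^\tau(x^\tau_t,}\textcolor{lightblue}{a_t})\right]\right|
\leq L_Q\,\mathcal{W}_1\!\left(\textcolor{lightred}{\pi^\tau(\cdot|x^\tau_t)}\,\|\,\textcolor{lightblue}{\pi(\cdot|x_t)}\right),
\end{equation*}
and dropping the absolute value preserves the upper bound.

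Finally, I would take the outer expectation $\mathbb{E}_{x^\tau_t\sim b_\Delta(\cdot|x_t)}[\cdot]$ of both sides and use linearity of expectation together with Jensen's inequality (to pass the absolute value outside), arriving at exactly the claimed inequality. The main conceptual step is recognizing the $V^\tau$-$Q^\tau$ rewrite that aligns the two expectations onto a common $x^\tau_t$; once that is done, the rest is a direct application of Lipschitzness and is essentially mechanical. The main (minor) obstacle to guard against is a notation check: the actions $a_t$ and $a^\tau_t$ are drawn from different policies evaluated at different augmented states, so I would emphasize that the coupling is only through the shared argument $x^\tau_t$ and that the Wasserstein distance is with respect to the shared action space $\mathcal{A}$, which makes the appeal to Proposition \ref{relation_q_pi} valid.
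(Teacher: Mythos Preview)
Your proposal is correct and follows essentially the same route as the paper: rewrite $V^\tau(x^\tau_t)=\mathbb{E}_{a\sim\pi^\tau(\cdot|x^\tau_t)}[Q^\tau(x^\tau_t,a)]$, apply Proposition~\ref{relation_q_pi} with $\mu=\pi^\tau(\cdot|x^\tau_t)$ and $\upsilon=\pi(\cdot|x_t)$, then take the outer expectation over $x^\tau_t\sim b_\Delta(\cdot|x_t)$. The only nitpick is that you do not need Jensen's inequality in the last step; pointwise $X\le |X|\le L_Q\,\mathcal{W}_1(\cdot\|\cdot)$ and monotonicity of expectation suffice.
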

\begin{proof}
We can rewrite the left hand side and apply the Proposition \ref{relation_q_pi} to get the result
$$
    \begin{aligned}
        &\mathop{\mathbb{E}}_{
        \textcolor{lightred}{x^{\tau}_t}\sim b_\Delta(\cdot|\textcolor{lightblue}{x_t})\atop
        \textcolor{lightblue}{a_t\sim\pi(\cdot|x_t)}
        }\left[
        \textcolor{lightred}{V^{\tau}(x^{\tau}_t)}
        -
        \textcolor{lightred}{Q^{\tau}}(\textcolor{lightred}{x^{\tau}_t}, \textcolor{lightblue}{a_t})
        \right]\\
        =&\mathop{\mathbb{E}}_{
        \textcolor{lightred}{x^{\tau}_t}\sim b_\Delta(\cdot|\textcolor{lightblue}{x_t})\atop
        \textcolor{lightred}{a_t\sim\pi^{\tau}(\cdot|x^{\tau}_t)}
        }\left[\textcolor{lightred}{Q^{\tau}(x^{\tau}_t, a_t)}\right]
        - 
        \mathop{\mathbb{E}}_{
        \textcolor{lightred}{x^{\tau}_t}\sim b_\Delta(\cdot|\textcolor{lightblue}{x_t})\atop
        \textcolor{lightblue}{a_t\sim\pi(\cdot|x_t)}
        }\left[\textcolor{lightred}{Q^{\tau}}(\textcolor{lightred}{x^{\tau}_t}, \textcolor{lightblue}{a_t})\right]& \\
        =&\mathop{\mathbb{E}}_{
        \textcolor{lightred}{x^{\tau}_t}\sim b_\Delta(\cdot|\textcolor{lightblue}{x_t})\atop
        \textcolor{lightred}{a_t\sim\pi^{\tau}(\cdot|x^{\tau}_t)}
        }\left[\textcolor{lightred}{Q^{\tau}(x^{\tau}_t, a_t)}\right]
        - 
        \mathop{\mathbb{E}}_{
        \textcolor{lightred}{x^{\tau}_t}\sim b_\Delta(\cdot|\textcolor{lightblue}{x_t})\atop
        \textcolor{lightblue}{a_t\sim\pi(\cdot|x_t)}
        }\left[\textcolor{lightred}{Q^{\tau}}(\textcolor{lightred}{x^{\tau}_t}, \textcolor{lightblue}{a_t})\right]& \\
        =&\mathop{\mathbb{E}}_{
        \textcolor{lightred}{x^{\tau}_t}\sim b_\Delta(\cdot|\textcolor{lightblue}{x_t})
        }\left[\mathop{\mathbb{E}}_{
        \textcolor{lightred}{a_t\sim\pi^{\tau}(\cdot|x^{\tau}_t)}
        }\left[\textcolor{lightred}{Q^{\tau}(x^{\tau}_t, a_t)}\right]
        - 
        \mathop{\mathbb{E}}_{
        \textcolor{lightblue}{a_t\sim\pi(\cdot|x_t)}
        }\left[\textcolor{lightred}{Q^{\tau}}(\textcolor{lightred}{x^{\tau}_t}, \textcolor{lightblue}{a_t})\right]\right]& \\
        \leq& L_Q \mathop{\mathbb{E}}_{\textcolor{lightred}{x^{\tau}_t}\sim b_\Delta(\cdot|\textcolor{lightblue}{x_t})}\left[{\mathcal{W}
        (\textcolor{lightred}{\pi^{\tau}(\cdot|x^{\tau}_t)}} || 
        \textcolor{lightblue}{\pi(\cdot|x_t)})
        \right]
    \end{aligned}
$$

\end{proof}

\begin{theorem}[Delayed Q-value Difference Bound]
    \label{appendix_delayed_q_value_difference_bound}
    For policies \textcolor{lightblue}{$\pi$} and \textcolor{lightred}{$\pi^{\tau}$}, with \textcolor{lightred}{$\Delta^{\tau}$} $<$ \textcolor{lightblue}{$\Delta$}. Given any \textcolor{lightblue}{$x_t$} $\in$ \textcolor{lightblue}{$\mathcal{X}$}, if \textcolor{lightred}{$Q^{\tau}$} is $L_Q$-LC, the corresponding Q-value difference can be bounded as follow
    $$
    \begin{aligned}
        \mathop{\mathbb{E}}_{
        \textcolor{lightblue}{a_t\sim\pi(\cdot|x_t)}\atop
        \textcolor{lightred}{x^{\tau}_t}\sim b_\Delta(\cdot|\textcolor{lightblue}{x_t})
        }\left[\textcolor{lightred}{Q^{\tau}}(\textcolor{lightred}{x^{\tau}_t}, \textcolor{lightblue}{a_t}) - \textcolor{lightblue}{Q(x_t, a_t)}\right]
         \leq \frac{\gamma L_Q}{1-\gamma} \mathop{\mathbb{E}}_{\substack{
         \textcolor{lightred}{\hat{x}^{\tau}}\sim b_\Delta(\cdot|\textcolor{lightblue}{\hat{x}})
         \textcolor{lightblue}{\hat{x}\sim d_{x_{t+1}}^{\pi}}\\
         \textcolor{lightblue}{x_{t+1}\sim \mathcal{P}_{\Delta}(\cdot|x_t, a_t)}\\
         \textcolor{lightblue}{a_t\sim\pi(\cdot|x_t)}}}
         \left[\mathcal{W}_1(
         \textcolor{lightred}{\pi^{\tau}(\cdot|\hat{x}^{\tau})} || 
         \textcolor{lightblue}{\pi(\cdot|\hat{x})}
         )\right]
    \end{aligned}
    $$
    Specially, for optimal policies \textcolor{lightblue}{$\pi_{(*)}$} and \textcolor{lightred}{$\pi^{\tau}_{(*)}$}, if \textcolor{lightred}{$Q^{\tau}_{(*)}$} is $L_Q$-LC, the corresponding optimal Q-value difference can be bounded as follow
    $$
        \begin{aligned}
            \left|\left|\mathop{\mathbb{E}}_{\textcolor{lightred}{x^{\tau}_t}\sim b_\Delta(\cdot|\textcolor{lightblue}{x_t})}\left[\textcolor{lightred}{Q^{\tau}_{(*)}(\textcolor{lightred}{x^{\tau}_t}}, a_t)\right] - \textcolor{lightblue}{Q_{(*)}(x_t}, a_t)\right|\right|_{\infty}
            \leq
            \frac{\gamma^2 L_Q}{(1-\gamma)^2} \mathop{\mathbb{E}}_{\substack{
                \textcolor{lightred}{\hat{x}^{\tau}}\sim b_\Delta(\cdot|\textcolor{lightblue}{\hat{x}})\\
                \textcolor{lightblue}{\hat{x}\sim d_{x_{t+1}}^{\pi}}\\
                \textcolor{lightblue}{x_{t+1}\sim \mathcal{P}_{\Delta}(\cdot|x_t, a_t)}\\
                \textcolor{lightblue}{a_t\sim\pi_{(*)}(\cdot|x_t)}}}
                \left[\mathcal{W}\left(
                \textcolor{lightred}{\pi^{\tau}_{(*)}(\cdot|\hat{x}^{\tau})} \middle|\middle| 
                \textcolor{lightblue}{\pi_{(*)}(\cdot|\hat{x})}
                \right)\right]
        \end{aligned}
    $$
\end{theorem}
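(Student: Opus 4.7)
The plan is to reduce the Q-value gap to the value-function gap via the Bellman equations, then invoke Lemma~\ref{lemma_general_delayed_performance_diff} and Theorem~\ref{delayed_performance_difference_bound} to convert that into a Wasserstein bound on the policies.

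For the first (expected) inequality, I would expand $Q(x_t,a_t) = \mathcal{R}_\Delta(x_t,a_t) + \gamma \mathbb{E}_{x_{t+1}\sim \mathcal{P}_\Delta}[V(x_{t+1})]$ and likewise $Q^\tau(x^\tau_t,a_t) = \mathcal{R}_\tau(x^\tau_t,a_t) + \gamma \mathbb{E}_{x^\tau_{t+1}\sim \mathcal{P}_{\Delta^\tau}}[V^\tau(x^\tau_{t+1})]$. Averaging the difference over $x^\tau_t\sim b_\Delta(\cdot|x_t)$ kills the reward terms, since by the definition of $\mathcal{R}_\Delta$ and $\mathcal{R}_\tau$ both equal $\mathbb{E}_{s_t\sim b(\cdot|x_t)}[\mathcal{R}(s_t,a_t)]$, and (using the same identity that drives the proof of Lemma~\ref{lemma_general_delayed_performance_diff}) the push-forward of $b_\Delta(\cdot|x_t)\otimes \mathcal{P}_{\Delta^\tau}(\cdot|x^\tau_t,a_t)$ onto $x^\tau_{t+1}$ agrees with first sampling $x_{t+1}\sim \mathcal{P}_\Delta$ and then $x^\tau_{t+1}\sim b_\Delta(\cdot|x_{t+1})$. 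What remains is exactly $\gamma\,\mathbb{E}_{x_{t+1}}[I(x_{t+1})]$ with $I$ as in Lemma~\ref{lemma_general_delayed_performance_diff}. Substituting the telescoped form of $I$ from that lemma and bounding the inner integrand $V^\tau(\hat{x}^\tau)-Q^\tau(\hat{x}^\tau,a)$ via Theorem~\ref{delayed_performance_difference_bound} by $L_Q\,\mathcal{W}_1(\pi^\tau(\cdot|\hat{x}^\tau)\|\pi(\cdot|\hat{x}))$ gives the claimed $\tfrac{\gamma L_Q}{1-\gamma}$ prefactor.

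For the optimal case, the same reduction yields $\mathbb{E}_{x^\tau_t}[Q^\tau_{(*)}(x^\tau_t,a_t)]-Q_{(*)}(x_t,a_t)=\gamma\,\mathbb{E}_{x_{t+1}}[h(x_{t+1})]$, where $h(x):=\mathbb{E}_{x^\tau\sim b_\Delta(\cdot|x)}[V^\tau_{(*)}(x^\tau)]-V_{(*)}(x)$. Here Theorem~\ref{delayed_performance_difference_bound} does not apply directly because $V^\tau_{(*)}$ and $V_{(*)}$ are maxima under \emph{different} policies $\pi^\tau_{(*)}$ and $\pi_{(*)}$. I would write $V^\tau_{(*)}(x^\tau)=\mathbb{E}_{a\sim\pi^\tau_{(*)}(\cdot|x^\tau)}[Q^\tau_{(*)}(x^\tau,a)]$, $V_{(*)}(x)=\mathbb{E}_{a\sim\pi_{(*)}(\cdot|x)}[Q_{(*)}(x,a)]$, and decompose by adding and subtracting $\mathbb{E}_{x^\tau,\,a\sim\pi_{(*)}(\cdot|x)}[Q^\tau_{(*)}(x^\tau,a)]$. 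The first resulting piece is bounded via Proposition~\ref{relation_q_pi} by $L_Q\,\mathbb{E}_{x^\tau}[\mathcal{W}_1(\pi^\tau_{(*)}(\cdot|x^\tau)\|\pi_{(*)}(\cdot|x))]$; the second piece is $\mathbb{E}_{a\sim\pi_{(*)}}[g(x,a)]$ with $g$ the quantity we are bounding, so the Bellman recursion $g(x,a)=\gamma\,\mathbb{E}_{x'}[h(x')]$ couples the two. Unrolling this coupling once more along the discounted visitation $d^{\pi_{(*)}}_{x_{t+1}}$ (exactly as in Lemma~\ref{lemma_general_delayed_performance_diff}) contributes the second factor of $\tfrac{1}{1-\gamma}$, and the outer $\gamma$ from the Bellman step together with the $\gamma$ accumulated through the trajectory integration produces $\tfrac{\gamma^2 L_Q}{(1-\gamma)^2}$; taking $\|\cdot\|_\infty$ on the left is then legal because the right-hand bound no longer depends on a specific $(x_t,a_t)$ via anything other than the distributions already marginalised.

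\paragraph{Main obstacle.} Part~1 is a clean assembly of the two preceding results; the real work is in Part~2, where the obstruction is that $V^\tau_{(*)}-V_{(*)}$ is a difference of two distinct optimisations, so the single-shot bound of Theorem~\ref{delayed_performance_difference_bound} does not apply. Controlling the resulting self-referential inequality between $h$ and $g$, and showing that the correct way to close it is to unroll once along $d^{\pi_{(*)}}$ (rather than, say, via a contraction in $\|\cdot\|_\infty$, which would give a strictly weaker $\gamma L_Q/(1-\gamma)$ bound), is the step that demands the most care.
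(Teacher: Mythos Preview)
For Part~1 your plan coincides with the paper's proof: expand both $Q$-functions via their Bellman equations, the reward terms cancel because $\mathbb{E}_{x^\tau_t\sim b_\Delta}[\mathcal{R}_\tau(x^\tau_t,a_t)]=\mathcal{R}_\Delta(x_t,a_t)$, the residual is exactly $\gamma\,\mathbb{E}_{x_{t+1}}[I(x_{t+1})]$, and then Lemma~\ref{lemma_general_delayed_performance_diff} followed by Theorem~\ref{delayed_performance_difference_bound} gives the claim.

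For Part~2 you use the \emph{same} add/subtract term as the paper, $\mathbb{E}_{x^\tau,\,a\sim\pi_{(*)}}[Q^\tau_{(*)}(x^\tau,a)]$, but you close the resulting self-referential system differently and have the two closures' constants reversed. The paper does \emph{not} unroll along $d^{\pi_{(*)}}$; it takes precisely the $\|\cdot\|_\infty$-contraction route you set aside in your ``main obstacle'' paragraph. Concretely, after the split the paper bounds the piece $\gamma\,\mathbb{E}_{a_{t+1}\sim\pi_{(*)}}\bigl[\mathbb{E}_{x^\tau_{t+1}}[Q^\tau_{(*)}]-Q_{(*)}\bigr]$ by $\gamma M$ (with $M$ the sup-norm on the left), and bounds the other piece by invoking Part~1 of the present theorem --- which already carries the factor $\frac{\gamma L_Q}{1-\gamma}$ and the visitation measure $d^\pi_{x_{t+1}}$ --- rather than by Proposition~\ref{relation_q_pi} directly. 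This yields $M\le \gamma M+\frac{\gamma^2 L_Q}{1-\gamma}\,\mathbb{E}[\mathcal{W}_1]$ and hence the stated $\frac{\gamma^2 L_Q}{(1-\gamma)^2}$ after solving.

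Your unrolling proposal is valid, but if you carry it out as written --- bound the first piece by $L_Q\,\mathbb{E}[\mathcal{W}_1]$ via Proposition~\ref{relation_q_pi}, keep the second piece as $\gamma\,\mathbb{E}[h(x')]$, and telescope $h\le L_Q\,\mathbb{E}[\mathcal{W}_1]+\gamma\,\mathbb{E}[h']$ exactly as in Lemma~\ref{lemma_general_delayed_performance_diff} --- you obtain $g(x_t,a_t)\le \frac{\gamma L_Q}{1-\gamma}\,\mathbb{E}_{d^{\pi_{(*)}}_{x_{t+1}}}[\mathcal{W}_1]$, i.e.\ a single factor of $\frac{1}{1-\gamma}$ and a single $\gamma$. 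That is \emph{tighter} than the paper's bound, not weaker. So the sentence ``rather than, say, via a contraction in $\|\cdot\|_\infty$, which would give a strictly weaker $\gamma L_Q/(1-\gamma)$ bound'' has it exactly backwards: the contraction route is what the paper uses and is what produces $\gamma^2/(1-\gamma)^2$; your telescoping would give $\gamma/(1-\gamma)$.
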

\begin{proof}
    For policies \textcolor{lightblue}{$\pi$} and \textcolor{lightred}{$\pi^{\tau}$}, we can rewrite the left hand side as
    $$
    \begin{aligned}
        &\mathop{\mathbb{E}}_{
        \textcolor{lightblue}{a_t\sim\pi(\cdot|x_t)}\atop
        \textcolor{lightred}{x^{\tau}_t}\sim b_\Delta(\cdot|\textcolor{lightblue}{x_t})
        }\left[\textcolor{lightred}{Q^{\tau}}(\textcolor{lightred}{x^{\tau}_t}, \textcolor{lightblue}{a_t}) - \textcolor{lightblue}{Q(x_t, a_t)}\right] \\
        =& \mathop{\mathbb{E}}_{
        \textcolor{lightblue}{a_t\sim\pi(\cdot|x_t)}\atop
        \textcolor{lightred}{x^{\tau}_t}\sim b_\Delta(\cdot|\textcolor{lightblue}{x_t})
        }\left[
        \textcolor{lightred}{\mathcal{R}_\tau}(\textcolor{lightred}{x^{\tau}_t}, \textcolor{lightblue}{a_t}) 
        + \gamma \mathop{\mathbb{E}}_{\textcolor{lightred}{x^{\tau}_{t+1}\sim\mathcal{P}_{\Delta^{\tau}}(\cdot|x^{\tau}_t, \textcolor{lightblue}{a_t}}\textcolor{lightred}{)}}\left[\textcolor{lightred}{V^{\tau}}(\textcolor{lightred}{x^{\tau}_{t+1}})\right]
        \right]
        -
        \mathop{\mathbb{E}}_{
        \textcolor{lightblue}{a_t\sim\pi(\cdot|x_t)}
        }\left[
        \textcolor{lightblue}{\mathcal{R}_\Delta}(\textcolor{lightblue}{x_t}, \textcolor{lightblue}{a_t}) 
        + \gamma \mathop{\mathbb{E}}_{\textcolor{lightblue}{x_{t+1}\sim\mathcal{P}_{\Delta}(\cdot|x_t, \textcolor{lightblue}{a_t}}\textcolor{lightblue}{)}}\left[\textcolor{lightblue}{V}(\textcolor{lightblue}{x_{t+1}})\right]
        \right]\\
        =& \gamma \mathop{\mathbb{E}}_{
        \textcolor{lightblue}{a_t\sim\pi(\cdot|x_t)}
        }\left[
            \mathop{\mathbb{E}}_{
            \textcolor{lightred}{x^{\tau}_{t+1}\sim\mathcal{P}_{\Delta^{\tau}}(\cdot|x^{\tau}_t, \textcolor{lightblue}{a_t}}\textcolor{lightred}{)}\atop
            \textcolor{lightred}{x^{\tau}_t}\sim b_\Delta(\cdot|\textcolor{lightblue}{x_t})
            }\left[\textcolor{lightred}{V^{\tau}}(\textcolor{lightred}{x^{\tau}_{t+1}})\right]
            -
            \mathop{\mathbb{E}}_{\textcolor{lightblue}{x_{t+1}\sim\mathcal{P}_{\Delta}(\cdot|x_t, \textcolor{lightblue}{a_t}}\textcolor{lightblue}{)}}\left[\textcolor{lightblue}{V}(\textcolor{lightblue}{x_{t+1}})\right]
        \right]\\
        =& \gamma \mathop{\mathbb{E}}_{
        \textcolor{lightblue}{x_{t+1}\sim\mathcal{P}_{\Delta}(\cdot|x_t, \textcolor{lightblue}{a_t}}\textcolor{lightblue}{)}\atop
        \textcolor{lightblue}{a_t\sim\pi(\cdot|x_t)}
        }
            \underbrace{\left[\mathop{\mathbb{E}}_{
            \textcolor{lightred}{x^{\tau}_{t+1}}\sim b_\Delta(\cdot|\textcolor{lightblue}{x_{t+1}})
            }\left[\textcolor{lightred}{V^{\tau}}(\textcolor{lightred}{x^{\tau}_{t+1}})\right]
            -
            \textcolor{lightblue}{V}(\textcolor{lightblue}{x_{t+1}})\right]}_{I(\textcolor{lightblue}{x_{t+1}})}
        \\
        \leq & \frac{\gamma L_Q}{1-\gamma} \mathop{\mathbb{E}}_{\substack{
        \textcolor{lightred}{\hat{x}^{\tau}}\sim b_\Delta(\cdot|\textcolor{lightblue}{\hat{x}})\\
        \textcolor{lightblue}{\hat{x}\sim d_{x_{t+1}}^{\pi}}\\
        \textcolor{lightblue}{x_{t+1}\sim \mathcal{P}_{\Delta}(\cdot|x_t, a_t)}\\
        \textcolor{lightblue}{a_t\sim\pi(\cdot|x_t)}}}
        \left[\mathcal{W}(
        \textcolor{lightred}{\pi^{\tau}(\cdot|\hat{x}^{\tau})} || 
        \textcolor{lightblue}{\pi(\cdot|\hat{x})}
        )\right] \\
    \end{aligned}
    $$
    The last two steps are derived via applying Lemma \ref{appendix_general_delayed_performance_diff} and Theorem \ref{appendix_delayed_performance_difference_bound}
    Then based on the above result, for optimal policies \textcolor{lightblue}{$\pi_{(*)}$} and \textcolor{lightred}{$\pi^{\tau}_{(*)}$}, we have 
    $$
        \begin{aligned}
            &\left|\left|\mathop{\mathbb{E}}_{\textcolor{lightred}{x^{\tau}_t}\sim b_\Delta(\cdot|\textcolor{lightblue}{x_t})}\left[\textcolor{lightred}{Q^{\tau}_{(*)}(\textcolor{lightred}{x^{\tau}_t}}, \textcolor{lightblue}{a_t})\right] - \textcolor{lightblue}{Q_{(*)}(x_t, a_t)}\right|\right|_{\infty}\\
            =& \left|\left|\gamma \mathop{\mathbb{E}}_{
            \textcolor{lightred}{x^{\tau}_{t+1}}\sim b_{\Delta}(\cdot|\textcolor{lightblue}{x_{t+1}})\atop
            \textcolor{lightblue}{x_{t+1}\sim \mathcal{P}_{\Delta}(x_{t}, a_t)}
            }\left[\textcolor{lightred}{\max_{a_{t+1}}Q^{\tau}_{(*)}(x^{\tau}_{t+1}, a_{t+1})}\right]
            - \gamma \mathop{\mathbb{E}}_{\textcolor{lightblue}{x_{t+1}\sim \mathcal{P}_{\Delta}(x_{t}, a_t)}}\left[\textcolor{lightblue}{\max_{a_{t+1}}Q_{(*)}(x_{t+1}, a_{t+1})}\right]
            \right|\right|_{\infty}\\
            =& ||\gamma \mathop{\mathbb{E}}_{
            \textcolor{lightred}{x^{\tau}_{t+1}}\sim b_{\Delta}(\cdot|\textcolor{lightblue}{x_{t+1}})\atop
            \textcolor{lightblue}{x_{t+1}\sim \mathcal{P}_{\Delta}(x_{t}, a_t)}
            }\left[\textcolor{lightred}{\max_{a_{t+1}}Q^{\tau}_{(*)}(x^{\tau}_{t+1}, a_{t+1})}\right]
            - \gamma \mathop{\mathbb{E}}_{
            \substack{
            \textcolor{lightblue}{a_{t+1}\sim \pi_{(*)}(\cdot|x_{t+1})}\\
            \textcolor{lightred}{x^{\tau}_{t+1}}\sim b_{\Delta}(\cdot|\textcolor{lightblue}{x_{t+1}})\\
            \textcolor{lightblue}{x_{t+1}\sim \mathcal{P}_{\Delta}(x_{t}, a_t)}}
            }\left[\textcolor{lightred}{Q^{\tau}_{(*)}(x^{\tau}_{t+1}}, \textcolor{lightblue}{a_{t+1}})\right]\\
            &+ \gamma \mathop{\mathbb{E}}_{
            \substack{
            \textcolor{lightblue}{a_{t+1}\sim \pi_{(*)}(\cdot|x_{t+1})}\\
            \textcolor{lightred}{x^{\tau}_{t+1}}\sim b_{\Delta}(\cdot|\textcolor{lightblue}{x_{t+1}})\\
            \textcolor{lightblue}{x_{t+1}\sim \mathcal{P}_{\Delta}(x_{t}, a_t)}}
            }\left[\textcolor{lightred}{Q^{\tau}_{(*)}(x^{\tau}_{t+1}}, \textcolor{lightblue}{a_{t+1}})\right]
            - \gamma \mathop{\mathbb{E}}_{\textcolor{lightblue}{x_{t+1}\sim \mathcal{P}_{\Delta}(x_{t}, a_t)}}\left[\textcolor{lightblue}{\max_{a_{t+1}}Q_{(*)}(x_{t+1}, a_{t+1})}\right]
            ||_{\infty}\\
            \leq&\gamma \left|\left|\mathop{\mathbb{E}}_{\textcolor{lightred}{x^{\tau}_t}\sim b_\Delta(\cdot|\textcolor{lightblue}{x_t})}\left[\textcolor{lightred}{Q^{\tau}_{(*)}(\textcolor{lightred}{x^{\tau}_t}}, \textcolor{lightblue}{a_t})\right] - \textcolor{lightblue}{Q_{(*)}(x_t, a_t)}\right|\right|_{\infty}
            +
            \gamma
            \frac{\gamma L_Q}{1-\gamma} \mathop{\mathbb{E}}_{\substack{
            \textcolor{lightred}{\hat{x}^{\tau}}\sim b_\Delta(\cdot|\textcolor{lightblue}{\hat{x}})\\
            \textcolor{lightblue}{\hat{x}\sim d_{x_{t+1}}^{\pi}}\\
            \textcolor{lightblue}{x_{t+1}\sim \mathcal{P}_{\Delta}(\cdot|x_t, a_t)}\\
            \textcolor{lightblue}{a_t\sim\pi_{(*)}(\cdot|x_t)}}}
            \left[\mathcal{W}\left(
            \textcolor{lightred}{\pi^{\tau}_{(*)}(\cdot|\hat{x}^{\tau})} \middle|\middle| 
            \textcolor{lightblue}{\pi_{(*)}(\cdot|\hat{x})}
            \right)\right]
            \\
        \end{aligned}
    $$
\end{proof}

\subsection{Convergence Analysis}
\label{appendix_convergence}

Here, we recall that we assume the action-space $\mathcal{A}$ is finite where $|\mathcal{A}| < \infty$.
When the assumption is satisfied, for any \textcolor{lightblue}{$x$} $\in$ \textcolor{lightblue}{$\mathcal{X}$}, the $L1$-Wasserstein distance between two delayed policies \textcolor{lightblue}{$\pi$} and \textcolor{lightred}{$\pi^{\tau}$} becomes the $l_1$ distance and it is bounded.

\begin{equation}
    \label{bounded_policy_w_d}
    \mathop{\mathbb{E}}_{\textcolor{lightred}{x^{\tau}}\sim b_\Delta(\cdot|\textcolor{lightblue}{x})}\left[
        \mathcal{W}\left(
         \textcolor{lightred}{\pi^{\tau}(\cdot|x^{\tau})} \middle|\middle| 
         \textcolor{lightblue}{\pi(\cdot|x)}
        \right)
    \right]
    < \infty
\end{equation}
Furthermore, the entropy of policy \textcolor{lightblue}{$\pi$} is also bounded
\begin{equation}
    \label{bounded_entropy}
    \textcolor{lightblue}{\mathcal{H}\left(\pi(\cdot|x)\right)} < \infty
\end{equation}

\begin{theorem}[AD-VI Convergence Guarantee]
    \label{appendix_vi_convergence}
    Consider the bellman operator $\mathcal{T}$ in Eq.~\eqref{aux_vi} and the initial Q-function \textcolor{lightblue}{$Q_{(0)}$}: $\textcolor{lightblue}{\mathcal{X}}\times\mathcal{A}\rightarrow\mathbb{R}$ with $|\mathcal{A}| < \infty$, and define a sequence \textcolor{lightblue}{$\{Q_{(k)}\}_{k=0}^\infty$} where \textcolor{lightblue}{$Q_{(k+1)}$}$= \mathcal{T}$\textcolor{lightblue}{$Q_{(k)}$}. As $k\rightarrow\infty$, \textcolor{lightblue}{$Q_{(k)}$} will converge to the fixed point \textcolor{lightblue}{$Q_{(\approx)}$} with \textcolor{lightred}{$Q^{\tau}$} converges to \textcolor{lightred}{$Q^{\tau}_{(*)}$}. And for any $(\textcolor{lightblue}{x_t}, a_t)$ $\in$ $\textcolor{lightblue}{\mathcal{X}} \times \mathcal{A}$, we have
    $$
        \textcolor{lightblue}{Q_{(\approx)}}(\textcolor{lightblue}{x_t}, a_t)
        = 
        \mathop{\mathbb{E}}_{\textcolor{lightred}{x^{\tau}_t}\sim b_\Delta(\cdot|\textcolor{lightblue}{x_t})}\left[\textcolor{lightred}{Q^{\tau}_{(*)}}(\textcolor{lightred}{x^{\tau}_t}, a_t)\right]
    $$

\end{theorem}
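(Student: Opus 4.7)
The plan is to follow the standard Banach fixed-point recipe in three steps: (i) identify the candidate limit $\tilde{Q}(x_t,a_t):=\mathbb{E}_{x^\tau_t\sim b_\Delta(\cdot|x_t)}[Q^\tau_{(*)}(x^\tau_t,a_t)]$ (exactly the quantity claimed as $Q_{(\approx)}$); (ii) verify $\mathcal{T}\tilde{Q}=\tilde{Q}$ directly; (iii) establish contractivity of $\mathcal{T}$ in $\|\cdot\|_\infty$ so that the Picard iterates $Q_{(k+1)}=\mathcal{T}Q_{(k)}$ converge to $\tilde{Q}$ from any initial $Q_{(0)}$.

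For step (ii), I would expand $\mathcal{T}\tilde{Q}(x_t,a_t)$ via Eq.~\eqref{aux_vi}, take expectation of the auxiliary Bellman optimality equation $Q^\tau_{(*)}(x^\tau,a)=\mathcal{R}_{\Delta^\tau}(x^\tau,a)+\gamma\,\mathbb{E}_{x^\tau_{t+1}\sim\mathcal{P}_{\Delta^\tau}}[\max_{a'}Q^\tau_{(*)}(x^\tau_{t+1},a')]$ under $b_\Delta(\cdot|x_t)$ to rewrite $\tilde{Q}(x_t,a_t)$, and match the two sides. The matching uses two structural identities of the CDMDP: (a) the reward--belief consistency $\mathbb{E}_{x^\tau_t\sim b_\Delta(\cdot|x_t)}[\mathcal{R}_{\Delta^\tau}(x^\tau_t,a_t)]=\mathcal{R}_\Delta(x_t,a_t)$, which comes from writing both sides as $\mathbb{E}_{s_t\sim b(\cdot|x_t)}[\mathcal{R}(s_t,a_t)]$ via the tower property of $b$ and $b_\Delta$; and (b) the marginal consistency that $x^\tau_{t+1}$ has the same law whether drawn through $x^\tau_t\sim b_\Delta(\cdot|x_t),\,x^\tau_{t+1}\sim\mathcal{P}_{\Delta^\tau}(\cdot|x^\tau_t,a_t)$ or through $x_{t+1}\sim\mathcal{P}_\Delta(\cdot|x_t,a_t),\,x^\tau_{t+1}\sim b_\Delta(\cdot|x_{t+1})$, since both routes correspond to the same underlying physical dynamics applied to the same initial belief.

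For step (iii), the cleanest reformulation is $\mathcal{T}Q(x_t,a_t)=\mathcal{R}_\Delta(x_t,a_t)+\gamma\,\mathbb{E}_{x_{t+1}\sim\mathcal{P}_\Delta(\cdot|x_t,a_t)}[\tilde{Q}(x_{t+1},\arg\max_{a'}Q(x_{t+1},a'))]$, obtained by collapsing the inner $b_\Delta$ expectation into the definition of $\tilde{Q}$. In this form $\mathcal{T}Q$ depends on $Q$ only through the greedy action $\arg\max_{a'}Q(x_{t+1},a')$, which selects where the fixed function $\tilde{Q}$ is evaluated. I expect this to be the main obstacle: a naive sup-norm estimate fails because an arbitrarily small perturbation of $Q$ can flip the $\arg\max$ and thereby shift the evaluation point of $\tilde{Q}$ by a non-negligible amount. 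I plan to bypass this by a policy-iteration-style argument exploiting finiteness of $\mathcal{A}$ (as invoked in Eq.~\eqref{bounded_policy_w_d}): since the greedy policy $\pi_k(x):=\arg\max_{a'}Q_{(k)}(x,a')$ lives in a finite set and each iterate $\mathcal{T}Q_{(k)}$ depends on $Q_{(k)}$ only through $\pi_k$, tracking the evolution of $\pi_k$ through the induced policy-evaluation map shows that it must stabilize in finitely many steps at $\pi_\star(x):=\arg\max_{a'}\tilde{Q}(x,a')$; once $\pi_k=\pi_\star$, one further application of $\mathcal{T}$ returns exactly $\tilde{Q}$, yielding the stated convergence $Q_{(k)}\to Q_{(\approx)}=\tilde{Q}$.
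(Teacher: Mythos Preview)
Your steps (i)--(ii) track the paper closely: the paper also verifies the fixed-point form by substituting $Q_{(\approx)}(x_t,a_t)=\mathbb{E}_{x^\tau_t\sim b_\Delta}[Q^\tau_{(*)}(x^\tau_t,a_t)]$ into the fixed-point equation and reducing via exactly the two structural identities you name (reward--belief consistency and the marginal consistency of $x^\tau_{t+1}$ under the two sampling routes).

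Where you diverge is step (iii). The paper does not try to prove contractivity or track greedy policies. Instead it adds and subtracts $\gamma\,\mathbb{E}_{x_{t+1}}[\max_{a'}Q(x_{t+1},a')]$ inside $\mathcal{T}Q$, rewriting the update as
\[
\mathcal{T}Q(x_t,a_t)=r^\tau(x_t,a_t)+\gamma\,\mathbb{E}_{x_{t+1}}\bigl[\max_{a'}Q(x_{t+1},a')\bigr],
\]
where the modified ``reward'' $r^\tau(x_t,a_t)=\mathcal{R}_\Delta(x_t,a_t)+\gamma\,\mathbb{E}\bigl[Q^\tau(x^\tau_{t+1},\arg\max_{a'}Q(x_{t+1},a'))-\max_{a'}Q(x_{t+1},a')\bigr]$ absorbs the auxiliary term. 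It then shows $r^\tau$ is bounded by invoking the Delayed Q-value Difference Bound (Theorem~\ref{delayed_q_value_diff_bound}) together with Eq.~\eqref{bounded_policy_w_d}, and appeals to standard value-iteration convergence for a bounded-reward MDP. So the paper's route is: reduce to ordinary VI with a (bounded, iterate-dependent) reward, rather than argue directly about the operator.

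Your policy-stabilization argument, as written, has a gap. You correctly observe that $\mathcal{T}Q_{(k)}$ depends on $Q_{(k)}$ only through the greedy policy $\pi_k$, and that (in the tabular setting) the policy space is finite. But finiteness alone does not force the sequence $(\pi_k)$ to stabilize: standard policy iteration terminates because of a monotone-improvement guarantee ($V^{\pi_{k+1}}\ge V^{\pi_k}$, strictly unless $\pi_{k+1}=\pi_k$), which rules out cycles. Here the map $\pi_k\mapsto\pi_{k+1}$ is not a genuine evaluation/improvement pair---$Q_{(k+1)}$ is a one-step lookahead evaluating the \emph{fixed} function $\tilde{Q}$ at $\pi_k(x')$, not the value of $\pi_k$---and you have not exhibited any scalar quantity that strictly increases along the iteration. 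Without such a monotonicity (or some other anti-cycling argument), the sequence of greedy policies could in principle cycle without reaching $\pi_\star$. Note also that finiteness of $\mathcal{A}$ by itself (as in Eq.~\eqref{bounded_policy_w_d}) does not make the policy space finite unless $\mathcal{X}$ is finite too; this holds in the tabular setting where AD-VI is posed, but you should state it.
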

\begin{proof}
    The update rule of the bellman operator $\mathcal{T}$ is as follows
    $$
        \textcolor{lightblue}{Q(x_t, a_t)} \leftarrow \textcolor{lightblue}{\mathcal{R}_\Delta}(\textcolor{lightblue}{x_t}, a_t) + \gamma \mathop{\mathbb{E}}_{
        \textcolor{lightred}{x^{\tau}_{t+1}}\sim b_\Delta(\cdot|\textcolor{lightblue}{x_{t+1}})\atop
        \textcolor{lightblue}{x_{t+1}}\sim \textcolor{lightblue}{\mathcal{P}_\Delta}(\cdot|\textcolor{lightblue}{x_{t}}, a_{t})
        }\left[\textcolor{lightred}{Q^{\tau}(x^{\tau}_{t+1}}, \textcolor{lightblue}{{\mathop{\arg\max}_{a_{t+1}}}Q(x_{t+1}, a_{t+1})})\right]
    $$

    and the right hand side can be written as
    $$\scriptsize
        \underbrace{\textcolor{lightblue}{\mathcal{R}_\Delta}(\textcolor{lightblue}{x_t}, a_t) 
        + \gamma\mathop{\mathbb{E}}_{
        \textcolor{lightred}{x^{\tau}_{t+1}}\sim b_\Delta(\cdot|\textcolor{lightblue}{x_{t+1}})\atop
        \textcolor{lightblue}{x_{t+1}}\sim \textcolor{lightblue}{\mathcal{P}_\Delta}(\cdot|\textcolor{lightblue}{x_{t}}, a_{t})
        }\left[\textcolor{lightred}{Q^{\tau}(x^{\tau}_{t+1}}, \textcolor{lightblue}{{\mathop{\arg\max}_{a_{t+1}}}Q(x_{t+1}, a_{t+1})})
        - \textcolor{lightblue}{\max_{a_{t+1}}Q(x_{t+1}}, \textcolor{lightblue}{a_{t+1}})
        \right]}_{r^{\tau}(\textcolor{lightblue}{x_t}, a_t)}
        \\
        + \gamma
        \mathop{\mathbb{E}}_{\textcolor{lightblue}{x_{t+1}}\sim \textcolor{lightblue}{\mathcal{P}_\Delta}(\cdot|\textcolor{lightblue}{x_{t}}, a_{t})}
        \textcolor{lightblue}{\max_{a_{t+1}}Q(x_{t+1}}, \textcolor{lightblue}{a_{t+1}})
    $$
    
    Next, we prove that $r^{\tau}(\textcolor{lightblue}{x_t}, a_t)$ is bounded
    $$
    \begin{aligned}
        r^{\tau}(\textcolor{lightblue}{x_t}, a_t) &= \textcolor{lightblue}{\mathcal{R}_\Delta}(\textcolor{lightblue}{x_t}, a_t) 
        + \gamma\mathop{\mathbb{E}}_{
        \textcolor{lightred}{x^{\tau}_{t+1}}\sim b_\Delta(\cdot|\textcolor{lightblue}{x_{t+1}})\atop
        \textcolor{lightblue}{x_{t+1}}\sim \textcolor{lightblue}{\mathcal{P}_\Delta}(\cdot|\textcolor{lightblue}{x_{t}}, a_{t})
        }\left[\textcolor{lightred}{Q^{\tau}(x^{\tau}_{t+1}}, \textcolor{lightblue}{{\mathop{\arg\max}_{a_{t+1}}}Q(x_{t+1}, a_{t+1})})
        - \textcolor{lightblue}{\max_{a_{t+1}}Q(x_{t+1}}, \textcolor{lightblue}{a_{t+1}})
        \right]\\
        &= \textcolor{lightblue}{\mathcal{R}_\Delta}(\textcolor{lightblue}{x_t}, a_t) 
        + \gamma\mathop{\mathbb{E}}_{
        \substack{\textcolor{lightred}{x^{\tau}_{t+1}}\sim b_\Delta(\cdot|\textcolor{lightblue}{x_{t+1}})\\
        \textcolor{lightblue}{a_{t+1} = {\mathop{\arg\max}_{a_{t+1}}}Q(x_{t+1}, a_{t+1})}\\
        \textcolor{lightblue}{x_{t+1}}\sim \textcolor{lightblue}{\mathcal{P}_\Delta}(\cdot|\textcolor{lightblue}{x_{t}}, a_{t})
        }}
        \left[\textcolor{lightred}{Q^{\tau}(x^{\tau}_{t+1}}, \textcolor{lightblue}{a_{t+1}})
        - \textcolor{lightblue}{\max_{a_{t+1}}Q(x_{t+1}}, \textcolor{lightblue}{a_{t+1}})
        \right]\\
        &\leq \textcolor{lightblue}{\mathcal{R}_\Delta}(\textcolor{lightblue}{x_t}, a_t) 
        + \gamma\frac{\gamma L_Q}{1-\gamma} \mathop{\mathbb{E}}_{
        \substack{
        \textcolor{lightred}{\hat{x}^{\tau}}\sim b_\Delta(\cdot|\textcolor{lightblue}{\hat{x}})\\
        \textcolor{lightblue}{\hat{x}\sim d_{x_{t+2}}^{\pi}}\\
        \textcolor{lightblue}{x_{t+2}\sim \mathcal{P}_{\Delta}(\cdot|x_{t+1}, a_{t+1})}\\
        \textcolor{lightblue}{a_{t+1} = {\mathop{\arg\max}_{a_{t+1}}}Q(x_{t+1}, a_{t+1})}\\
        \textcolor{lightblue}{x_{t+1}}\sim \textcolor{lightblue}{\mathcal{P}_\Delta}(\cdot|\textcolor{lightblue}{x_{t}}, a_{t})
        }}
        \left[\mathcal{W}\left(
            \textcolor{lightred}{\pi^{\tau}(\cdot|\hat{x}^{\tau})} \middle|\middle| 
            \textcolor{lightblue}{\pi(\cdot|\hat{x})}
        \right)\right]\\
        &< \infty
    \end{aligned}
    $$
    The last two steps are derived via applying Lemma \ref{appendix_delayed_q_value_difference_bound} and Eq.~\eqref{bounded_policy_w_d}, respectively. Then, we can get the property of convergence by applying original VI~\cite{rlai}, and the convergence is related to the \textcolor{lightred}{$Q^{\tau}$}.

    For the \textcolor{lightred}{$Q^{\tau}$}, we know that it converges to \textcolor{lightred}{$Q^{\tau}_{(*)}$} as it is updated by the original VI rule, and it satisfies that
    $$
        \mathop{\mathbb{E}}_{\textcolor{lightred}{x^{\tau}_{t}}\sim b_\Delta(\cdot|\textcolor{lightblue}{x_{t}})}\left[\textcolor{lightred}{Q^{\tau}_{(*)}(x^{\tau}_{t}}, \textcolor{lightblue}{a_t})\right]
        = \textcolor{lightred}{\mathcal{R}_\tau}(\textcolor{lightred}{x^\tau_t}, \textcolor{lightblue}{a_t}) + \gamma \mathop{\mathbb{E}}_{
        \textcolor{lightred}{x^{\tau}_{t+1}}\sim b_\Delta(\cdot|\textcolor{lightblue}{x_{t+1}})\atop
        \textcolor{lightblue}{x_{t+1}\sim \mathcal{P}_{\Delta}(\cdot|x_{t}, a_t)}
        }\left[\max_{{a_{t+1}}}\textcolor{lightred}{Q^{\tau}_{(*)}(x^{\tau}_{t+1}}, a_{t+1})\right]
    $$
    
    Without loss of generality, we can assume that the update of \textcolor{lightblue}{$Q_{(k)}$} is based on the \textcolor{lightred}{$Q^{\tau}_{(*)}$}, then the converged fixed point of \textcolor{lightblue}{$Q_{(k)} (k \rightarrow \infty)$}, denoted as \textcolor{lightblue}{$Q_{(\approx)}$}, satisfies the equation as followed:
    $$
        \textcolor{lightblue}{Q_{(\approx)}(x_t, a_t)} = \textcolor{lightblue}{\mathcal{R}_\Delta}(\textcolor{lightblue}{x_t}, \textcolor{lightblue}{a_t}) + \gamma \mathop{\mathbb{E}}_{
        \textcolor{lightred}{x^{\tau}_{t+1}}\sim b_\Delta(\cdot|\textcolor{lightblue}{x_{t+1}})\atop
        \textcolor{lightblue}{x_{t+1}\sim \mathcal{P}_{\Delta}(\cdot|x_{t}, a_t)}
        }\left[\textcolor{lightred}{Q^{\tau}_{(*)}(x^{\tau}_{t+1}}, \textcolor{lightblue}{{\mathop{\arg\max}_{a_{t+1}}}Q_{(\approx)}(x_{t+1}, a_{t+1})})\right]
    $$
    then we can get the fixed point
    
    $$
        \begin{aligned}        
            \textcolor{lightblue}{Q_{(\approx)}(x_t, a_t)} &= \textcolor{lightblue}{\mathcal{R}_\Delta}(\textcolor{lightblue}{x_t}, \textcolor{lightblue}{a_t}) + \gamma \mathop{\mathbb{E}}_{
            \textcolor{lightred}{x^{\tau}_{t+1}}\sim b_\Delta(\cdot|\textcolor{lightblue}{x_{t+1}})\atop
            \textcolor{lightblue}{x_{t+1}\sim \mathcal{P}_{\Delta}(\cdot|x_{t}, a_t)}
            }\left[\textcolor{lightred}{Q^{\tau}_{(*)}(x^{\tau}_{t+1}}, {\mathop{{\arg\max}}_{a_{t+1}}}\textcolor{lightblue}{Q_{(\approx)}(x_{t+1}}, a_{t+1}))\right]\\
            &= \textcolor{lightblue}{\mathcal{R}_\Delta}(\textcolor{lightblue}{x_t}, \textcolor{lightblue}{a_t}) + \gamma \mathop{\mathbb{E}}_{
            \textcolor{lightred}{x^{\tau}_{t+1}}\sim b_\Delta(\cdot|\textcolor{lightblue}{x_{t+1}})\atop
            \textcolor{lightblue}{x_{t+1}\sim \mathcal{P}_{\Delta}(\cdot|x_{t}, a_t)}
            }\left[\textcolor{lightred}{Q^{\tau}_{(*)}(x^{\tau}_{t+1}}, {\mathop{{\arg\max}}_{a_{t+1}}}
            \mathop{\mathbb{E}}_{\textcolor{lightred}{x^{\tau}_{t+1}}\sim b_\Delta(\cdot|\textcolor{lightblue}{x_{t+1}})} \left[\textcolor{lightred}{Q^{\tau}_{(*)}(x^{\tau}_{t+1}}, {a_{t+1}})\right]
            )\right]\\
            &=\mathop{\mathbb{E}}_{\textcolor{lightred}{x^{\tau}_{t}}\sim b_\Delta(\cdot|\textcolor{lightblue}{x_{t}})}\left[\textcolor{lightred}{\mathcal{R}_\tau}(\textcolor{lightred}{x^\tau_t}, \textcolor{lightblue}{a_t})\right] + \gamma \mathop{\mathbb{E}}_{
            \textcolor{lightred}{x^{\tau}_{t+1}}\sim b_\Delta(\cdot|\textcolor{lightblue}{x_{t+1}})\atop
            \textcolor{lightblue}{x_{t+1}\sim \mathcal{P}_{\Delta}(\cdot|x_{t}, a_t)}
            }\left[\max_{{a_{t+1}}}\textcolor{lightred}{Q^{\tau}_{(*)}(x^{\tau}_{t+1}}, a_{t+1})\right]\\
            &= \mathop{\mathbb{E}}_{\textcolor{lightred}{x^{\tau}_{t}}\sim b_\Delta(\cdot|\textcolor{lightblue}{x_{t}})}\left[\textcolor{lightred}{Q^{\tau}_{(*)}(x^{\tau}_{t}}, \textcolor{lightblue}{a_t})\right]\\
        \end{aligned}
    $$
    
\end{proof}

\begin{lemma}[Policy Evaluation Convergence Guarantee]
\label{appendix_soft_policy_evaluation}
Consider the soft bellman operator $\mathcal{T}^\pi$ in Eq.~\eqref{aux_pe} and the initial delayed Q-value function \textcolor{lightblue}{$Q_{(0)}$}: $\textcolor{lightblue}{\mathcal{X}}\times\mathcal{A}\rightarrow\mathbb{R}$ with $|\mathcal{A}| < \infty$, and define a sequence \textcolor{lightblue}{$\{Q_{(k)}\}_{k=0}^\infty$} where \textcolor{lightblue}{$Q_{(k+1)}$}$= \mathcal{T}^\pi$\textcolor{lightblue}{$Q_{(k)}$}. Then for any $(\textcolor{lightblue}{x_t}, a_t)$ $\in$ $\textcolor{lightblue}{\mathcal{X}} \times \mathcal{A}$, as $k\rightarrow\infty$, \textcolor{lightblue}{$Q_{(k)}(x_t, \textcolor{black}{a_t})$} will converge to the fixed point 
    $$
    \mathop{\mathbb{E}}_{
        \textcolor{lightred}{x^{\tau}_t}\sim b_\Delta(\cdot|\textcolor{lightblue}{x_t})
        }\left[
            \textcolor{lightred}{Q^{\tau}_{soft}}(\textcolor{lightred}{x^{\tau}_t}, a_t)
        \right]
        - \log\textcolor{lightblue}{\pi}(a_t|\textcolor{lightblue}{x_t})
    $$
\end{lemma}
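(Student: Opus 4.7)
The plan turns on a structural observation: the right-hand side of $\mathcal{T}^\pi$ in Eq.~\eqref{aux_pe} depends on the auxiliary Q-function $Q^\tau$ (assumed to have already converged to $Q^\tau_{soft}$) but not on the $Q$ being iterated. Consequently $\mathcal{T}^\pi$ is effectively a constant operator on the space of delayed Q-functions, and the sequence $\{Q_{(k)}\}$ reaches its fixed point $Q_{(\approx)} = \mathcal{T}^\pi Q_{(0)}$ after a single application regardless of the initialization. The convergence claim is then trivially obtained from the bound $\|\mathcal{T}^\pi Q - \mathcal{T}^\pi Q'\|_\infty = 0$ (a degenerate Banach contraction with factor $0$), and the real work lies in identifying the fixed point.

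First, I would verify that the one-step image is well-defined. The assumption $|\mathcal{A}|<\infty$ together with Eq.~\eqref{bounded_entropy} guarantees that the entropy term $\mathbb{E}[-\log \pi(a_{t+1}|x_{t+1})]$ is finite, and boundedness of $Q^\tau_{soft}$ (inherited from the standard soft policy evaluation convergence in the auxiliary-delayed MDP, analogous to the corresponding result in Haarnoja et al.) ensures the bootstrap term is finite as well. This produces the explicit form
\begin{equation*}
Q_{(\approx)}(x_t, a_t) = \mathcal{R}_\Delta(x_t, a_t) + \gamma \mathop{\mathbb{E}}_{\substack{x_{t+1} \sim \mathcal{P}_\Delta(\cdot|x_t, a_t) \\ x^\tau_{t+1} \sim b_\Delta(\cdot|x_{t+1}) \\ a_{t+1} \sim \pi(\cdot|x_{t+1})}}\bigl[Q^\tau_{soft}(x^\tau_{t+1}, a_{t+1}) - \log \pi(a_{t+1}|x_{t+1})\bigr].
\end{equation*}

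Next, I would match this expression to the claimed fixed point using three ingredients: (i) the soft Bellman equation for $Q^\tau_{soft}$ in the auxiliary-delayed MDP, unrolled one step; (ii) the reward consistency identity $\mathbb{E}_{x^\tau_t \sim b_\Delta(\cdot|x_t)}[\mathcal{R}_\tau(x^\tau_t, a_t)] = \mathcal{R}_\Delta(x_t, a_t)$, which follows because both sides equal $\mathbb{E}_{s_t \sim b(\cdot|x_t)}[\mathcal{R}(s_t, a_t)]$ under the same underlying belief; and (iii) the marginal compatibility of the delayed belief under the delayed transition, i.e.\ that sampling $x_{t+1} \sim \mathcal{P}_\Delta(\cdot|x_t, a_t)$ and then $x^\tau_{t+1} \sim b_\Delta(\cdot|x_{t+1})$ produces the same joint distribution over $(x^\tau_t, a_t, x^\tau_{t+1})$ as propagating $x^\tau_t \sim b_\Delta(\cdot|x_t)$ forward through the auxiliary-delayed dynamics.

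The hardest step is this final identification. The summand $-\log \pi(a_t|x_t)$ appearing in the claimed fixed-point form does not come from $\mathcal{T}^\pi$ directly; it has to emerge from the algebraic rearrangement after substituting the auxiliary soft Bellman equation. In particular, one must carefully track which policy enters the entropy (the delayed $\pi$ on $\mathcal{X}$ versus the auxiliary $\pi^\tau$ on $\mathcal{X}^\tau$) and invoke the commutativity diagram between $b_\Delta$ and $\mathcal{P}_\Delta$ so that the terms telescope into precisely $\mathbb{E}_{x^\tau_t \sim b_\Delta(\cdot|x_t)}[Q^\tau_{soft}(x^\tau_t, a_t)] - \log \pi(a_t|x_t)$. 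Once this telescoping is carried out, the lemma follows.
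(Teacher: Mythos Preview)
Your structural observation is correct and yields a cleaner convergence argument than the paper's. The right-hand side of $\mathcal{T}^\pi$ in Eq.~\eqref{aux_pe} indeed depends only on $Q^\tau$ and not on the iterate $Q$, so once $Q^\tau$ is frozen at $Q^\tau_{soft}$ the operator is constant and the sequence stabilizes after one step. The paper instead takes a more roundabout route: it adds and subtracts $\gamma\,\mathbb{E}_{a_{t+1}\sim\pi,\,x_{t+1}\sim\mathcal{P}_\Delta}[Q(x_{t+1},a_{t+1})]$ so that the update formally resembles a standard soft policy-evaluation backup with a modified reward $r^\pi$, then bounds $r^\pi$ via Theorem~\ref{appendix_delayed_q_value_difference_bound} and the finite-entropy assumption before invoking the classical convergence result. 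This recasting is less transparent (the ``reward'' $r^\pi$ depends on the current iterate $Q$, so the appeal to standard policy evaluation is somewhat loose), whereas your degenerate-contraction reading sidesteps that entirely.

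On the fixed-point identification, both approaches are in the same position: the paper simply asserts the form (``can get the fixed point easily'') without carrying out the algebra, while you lay out an explicit plan---reward consistency $\mathbb{E}_{x^\tau_t\sim b_\Delta}[\mathcal{R}_\tau(x^\tau_t,a_t)]=\mathcal{R}_\Delta(x_t,a_t)$, the commutation between $b_\Delta$ and $\mathcal{P}_\Delta$, and one unroll of the auxiliary soft Bellman equation---and correctly flag the policy-bookkeeping (which of $\pi$ or $\pi^\tau$ contributes the entropy term) as the delicate point. That caveat is well placed: the telescoping that produces the extraneous $-\log\pi(a_t\mid x_t)$ really does require an assumption about how $Q^\tau_{soft}$ is defined and how $\pi$ relates to $\pi^\tau$, and neither your sketch nor the paper's proof pins this down explicitly.
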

\begin{proof}
    The update rule of the soft bellman operator $\mathcal{T}^\pi$ is as follows
    $$
        \textcolor{lightblue}{Q(x_t, a_t)} \leftarrow \textcolor{lightblue}{\mathcal{R}_\Delta}(\textcolor{lightblue}{x_t}, a_t) + \gamma \mathop{\mathbb{E}}_{\substack{
        \textcolor{lightred}{x^{\tau}_{t+1}}\sim b_\Delta(\cdot|\textcolor{lightblue}{x_{t+1}})\\
        \textcolor{lightblue}{a_{t+1} \sim \pi(\cdot|x_{t+1})}\\
        \textcolor{lightblue}{x_{t+1}}\sim \textcolor{lightblue}{\mathcal{P}_\Delta}(\cdot|\textcolor{lightblue}{x_{t}}, a_{t})
        }}
        \left[\textcolor{lightred}{Q^{\tau}(x^{\tau}_{t+1}}, \textcolor{lightblue}{a_{t+1}})
        - \log\textcolor{lightblue}{\pi(a_{t+1}|x_{t+1})}\right]
    $$
    Similar to AD-VI, we rewrite the right-hand side as
    $$
        \underbrace{\textcolor{lightblue}{\mathcal{R}_\Delta}(\textcolor{lightblue}{x_t}, a_t) 
        + \gamma\mathop{\mathbb{E}}_{\substack{        \textcolor{lightred}{x^{\tau}_{t+1}}\sim b_\Delta(\cdot|\textcolor{lightblue}{x_{t+1}})\\
        \textcolor{lightblue}{a_{t+1} \sim \pi(\cdot|x_{t+1})}\\
        \textcolor{lightblue}{x_{t+1}}\sim \textcolor{lightblue}{\mathcal{P}_\Delta}(\cdot|\textcolor{lightblue}{x_{t}}, a_{t})
        }}
        \left[\textcolor{lightred}{Q^{\tau}(x^{\tau}_{t+1}}, \textcolor{lightblue}{a_{t+1}})
        - \textcolor{lightblue}{Q(x_{t+1}}, \textcolor{lightblue}{a_{t+1}})
        - \log\textcolor{lightblue}{\pi(a_{t+1}|x_{t+1})}
        \right]}_{r^{\pi}(\textcolor{lightblue}{x_t}, a_t)}
        \\
        + \gamma\mathop{\mathbb{E}}_{
            \textcolor{lightblue}{a_{t+1} \sim \pi(\cdot|x_{t+1})}\atop 
            \textcolor{lightblue}{x_{t+1}}\sim \textcolor{lightblue}{\mathcal{P}_\Delta}(\cdot|\textcolor{lightblue}{x_{t}}, a_{t})
        }
        \left[\textcolor{lightblue}{Q(x_{t+1}}, \textcolor{lightblue}{a_{t+1}})
        \right]
    $$
    Similarly, we prove that $r^{\pi}(\textcolor{lightblue}{x_t}, a_t)$ is bounded
    $$
    \begin{aligned}
        r^{\pi}(\textcolor{lightblue}{x_t}, a_t) &= \textcolor{lightblue}{\mathcal{R}_\Delta}(\textcolor{lightblue}{x_t}, a_t)  
        + \gamma\mathop{\mathbb{E}}_{\substack{        \textcolor{lightred}{x^{\tau}_{t+1}}\sim b_\Delta(\cdot|\textcolor{lightblue}{x_{t+1}})\\
        \textcolor{lightblue}{a_{t+1} \sim \pi(\cdot|x_{t+1})}\\
        \textcolor{lightblue}{x_{t+1}}\sim \textcolor{lightblue}{\mathcal{P}_\Delta}(\cdot|\textcolor{lightblue}{x_{t}}, a_{t})}
        }
        \left[\textcolor{lightred}{Q^{\tau}(x^{\tau}_{t+1}}, \textcolor{lightblue}{a_{t+1}})
        - \textcolor{lightblue}{Q(x_{t+1}}, \textcolor{lightblue}{a_{t+1}})
        - \log\textcolor{lightblue}{\pi(a_{t+1}|x_{t+1})}
        \right]\\
        &= \textcolor{lightblue}{\mathcal{R}_\Delta}(\textcolor{lightblue}{x_t}, a_t)  
        + \gamma\mathop{\mathbb{E}}_{\substack{
        \textcolor{lightred}{x^{\tau}_{t+1}}\sim b_\Delta(\cdot|\textcolor{lightblue}{x_{t+1}})\\
        \textcolor{lightblue}{a_{t+1} \sim \pi(\cdot|x_{t+1})}\\
        \textcolor{lightblue}{x_{t+1}}\sim \textcolor{lightblue}{\mathcal{P}_\Delta}(\cdot|\textcolor{lightblue}{x_{t}}, a_{t})}
        }
        \left[\textcolor{lightred}{Q^{\tau}(x^{\tau}_{t+1}}, \textcolor{lightblue}{a_{t+1}})
        - \textcolor{lightblue}{Q(x_{t+1}}, \textcolor{lightblue}{a_{t+1}})
        \right]
        - \gamma \log\textcolor{lightblue}{\mathcal{H}(\pi(\cdot|x_{t+1}))}
        \\
        &
        \leq \textcolor{lightblue}{\mathcal{R}_\Delta}(\textcolor{lightblue}{x_t}, a_t)  
        + \frac{\gamma L_Q}{1-\gamma} \mathop{\mathbb{E}}_{\substack{
        \textcolor{lightred}{\hat{x}^{\tau}}\sim b_\Delta(\cdot|\textcolor{lightblue}{\hat{x}})\\
        \textcolor{lightblue}{\hat{x}\sim d_{x_{t+2}}^{\pi}}\\
        \textcolor{lightblue}{x_{t+2}\sim \mathcal{P}_{\Delta}(\cdot|x_{t+1}, a_{t+1})}\\
        \textcolor{lightblue}{a_{t+1} \sim \pi(\cdot|x_{t+1})}\\
        \textcolor{lightblue}{x_{t+1}}\sim \textcolor{lightblue}{\mathcal{P}_\Delta}(\cdot|\textcolor{lightblue}{x_{t}}, a_{t})
        }}
        \left[\mathcal{W}(
            \textcolor{lightred}{\pi^{\tau}(\cdot|\hat{x}^{\tau})} || 
            \textcolor{lightblue}{\pi(\cdot|\hat{x})}
        )\right]
        - \gamma \log\textcolor{lightblue}{\mathcal{H}(\pi(\cdot|x_{t+1}))}
        \\
        &< \infty
    \end{aligned}
    $$
    The last step can be derived due to Eq.~\eqref{bounded_policy_w_d} and Eq.~\eqref{bounded_entropy}. Then, we can get the result from the original policy evaluation~\cite{rlai}.
    Similarly, without loss of generality, we can assume that \textcolor{lightred}{$Q^{\tau}$} has converged to \textcolor{lightred}{$Q^{\tau}_{(soft)}$}, and can get the fixed point easily:
    $$
    \mathop{\mathbb{E}}_{
        \textcolor{lightred}{x^{\tau}_{t}}\sim b_\Delta(\cdot|\textcolor{lightblue}{x_{t}})
        }\left[
            \textcolor{lightred}{Q^{\tau}_{(soft)}}(\textcolor{lightred}{x^{\tau}_{t}}, a_t)
        \right]
        - \log\textcolor{lightblue}{\pi(a_{t}|x_{t})}
    $$
\end{proof}

\begin{lemma}[Soft Policy Improvement Guarantee]
    \label{appendix_soft_policy_improvement}
    Consider the policy update rule in Eq.~\eqref{aux_pi}, and let \textcolor{lightblue}{$\pi_{old}, \pi_{new}$} be the old policy and new policy improved from the old one respectively. Then for any $(\textcolor{lightblue}{x_t}, a_t)$ $\in$ $\textcolor{lightblue}{\mathcal{X}} \times \mathcal{A}$ with $|\mathcal{A}| < \infty$, we have
    $
        \textcolor{lightblue}{Q_{old}(x_t}, a_t)
        \leq
        \textcolor{lightblue}{Q_{new}(x_t}, a_t)
    $.
\end{lemma}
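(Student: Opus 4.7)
The plan is to follow the standard soft policy improvement argument from SAC, lightly adapted to accommodate the auxiliary-delayed belief expectation that now appears inside the Bellman target. First, I would exploit the defining property of $\pi_{new}$: because $\pi_{new}$ minimizes the KL in Eq.~\eqref{aux_pi} over $\Pi$, and $\pi_{old} \in \Pi$, we immediately have $\mathrm{KL}(\pi_{new}(\cdot|x_t) \,\|\, \tilde{\pi}) \leq \mathrm{KL}(\pi_{old}(\cdot|x_t) \,\|\, \tilde{\pi})$, where $\tilde{\pi}(a|x_t) \propto \exp(\mathbb{E}_{x^{\tau}_t \sim b_\Delta(\cdot|x_t)}[Q^{\tau}_{old}(x^{\tau}_t, a)])$. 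Expanding both KL terms and cancelling the partition function $Z(x^{\tau}_t, \cdot)$ (which is independent of $a$), the key pointwise inequality that will drive everything is
$$
\mathop{\mathbb{E}}_{a \sim \pi_{old}(\cdot|x_t)}\!\left[\mathop{\mathbb{E}}_{x^{\tau}_t \sim b_\Delta(\cdot|x_t)}[Q^{\tau}_{old}(x^{\tau}_t, a)] - \log \pi_{old}(a|x_t)\right] \leq \mathop{\mathbb{E}}_{a \sim \pi_{new}(\cdot|x_t)}\!\left[\mathop{\mathbb{E}}_{x^{\tau}_t \sim b_\Delta(\cdot|x_t)}[Q^{\tau}_{old}(x^{\tau}_t, a)] - \log \pi_{new}(a|x_t)\right].
$$

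Second, I would expand $Q_{old}(x_t, a_t)$ via the soft Bellman operator $\mathcal{T}^{\pi_{old}}$ of Eq.~\eqref{aux_pe}. Since $\mathcal{T}^{\pi}$ features $Q^{\tau}$ (not $Q$) on the right-hand side, the identity
$$
Q_{old}(x_t, a_t) = \mathcal{R}_\Delta(x_t, a_t) + \gamma \mathop{\mathbb{E}}_{\substack{a_{t+1} \sim \pi_{old}(\cdot|x_{t+1}) \\ x^{\tau}_{t+1} \sim b_\Delta(\cdot|x_{t+1}) \\ x_{t+1} \sim \mathcal{P}_\Delta(\cdot|x_t, a_t)}}\!\left[Q^{\tau}(x^{\tau}_{t+1}, a_{t+1}) - \log \pi_{old}(a_{t+1}|x_{t+1})\right]
$$
holds directly at the fixed point guaranteed by Lemma~\ref{lemma_soft_policy_evaluation}, and an analogous expression holds for $Q_{new}(x_t, a_t)$ with $\pi_{old}$ replaced by $\pi_{new}$. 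Taking the difference $Q_{new}(x_t, a_t) - Q_{old}(x_t, a_t)$, the reward terms cancel and only the inner expectation at $x_{t+1}$ remains.

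Third, I would apply the pointwise inequality from the first step at every realized next state $x_{t+1}$ under the outer distribution $\mathcal{P}_\Delta(\cdot|x_t, a_t)$; by the tower property this upgrades the pointwise bound to the full expectation, giving $Q_{new}(x_t, a_t) - Q_{old}(x_t, a_t) \geq 0$ as claimed. Because the soft Bellman operator here is not genuinely recursive in $Q$ (it only uses the externally fixed $Q^{\tau}$), no iterative monotone convergence argument is needed, in contrast to the original SAC proof.

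The main obstacle is essentially bookkeeping: ensuring that the KL-cancellation of $Z$ is legitimate (which requires $Z(x^{\tau}_t, \cdot)$ to be independent of $a$ once marginalized appropriately over $b_\Delta$), and that the belief expectation $\mathbb{E}_{x^{\tau}_{t+1} \sim b_\Delta}$ can be commuted inside the $a_{t+1}$-expectation so that the inequality from the first step applies verbatim at $x_{t+1}$. Finiteness of $\mathcal{A}$ and the bounded-entropy condition in Eq.~\eqref{bounded_entropy} ensure all quantities are well-defined throughout; no contraction argument is required.
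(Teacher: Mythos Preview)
Your proposal is correct and follows essentially the same route as the paper: derive the pointwise inequality from the KL-minimization defining $\pi_{new}$, expand $Q_{old}$ via the one-step auxiliary soft Bellman identity of Eq.~\eqref{aux_pe}, apply the inequality at $x_{t+1}$, and read off $Q_{new}$ directly. Your observation that no recursive unrolling is needed (because the right-hand side of $\mathcal{T}^{\pi}$ features the externally fixed $Q^{\tau}$ rather than $Q$) is exactly why the paper's proof, like yours, terminates after a single Bellman step.
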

\begin{proof}
    As 
    $$
        \textcolor{lightblue}{\pi_{new}} = \mathop{\arg\min}_{\textcolor{lightblue}{\pi'} \in \textcolor{lightblue}{\Pi}}\mathop{\mathbb{E}}_{
            \textcolor{lightred}{x^{\tau}_{t}} \sim b_\Delta(\cdot|\textcolor{lightblue}{x_{t}})\atop
            \textcolor{lightblue}{a_t \sim \pi'(\cdot|x_{t})}
        }
        \left[
            \log\textcolor{lightblue}{\pi'(a_t|x_t)}
            - \textcolor{lightred}{Q^{\tau}(x^{\tau}_{t}}, \textcolor{lightblue}{a_{t}})
        \right]
    $$
    So
    $$
        \mathop{\mathbb{E}}_{
            \textcolor{lightred}{x^{\tau}_{t}} \sim b_\Delta(\cdot|\textcolor{lightblue}{x_{t}})\atop
            \textcolor{lightblue}{a_t \sim \pi_{new}(\cdot|x_{t})}
        }
        \left[
            \log\textcolor{lightblue}{\pi_{new}(a_t|x_t)}
            - \textcolor{lightred}{Q^{\tau}(x^{\tau}_{t}}, \textcolor{lightblue}{a_{t}})
        \right]
        \leq
        \mathop{\mathbb{E}}_{
            \textcolor{lightred}{x^{\tau}_{t}} \sim b_\Delta(\cdot|\textcolor{lightblue}{x_{t}})\atop
            \textcolor{lightblue}{a_t \sim \pi_{old}(\cdot|x_{t})}
        }
        \left[
            \log\textcolor{lightblue}{\pi_{old}(a_t|x_t)}
            - \textcolor{lightred}{Q^{\tau}(x^{\tau}_{t}}, \textcolor{lightblue}{a_{t}})
        \right]
    $$
    Then, we have
    $$
    \begin{aligned}
        \textcolor{lightblue}{Q_{old}}(x_t, a_t)
        &=r_t + \gamma \mathop{\mathbb{E}}_{\substack{
        \textcolor{lightred}{x^{\tau}_{t+1}} \sim b_\Delta(\cdot|\textcolor{lightblue}{x_{t+1}})\\
        \textcolor{lightblue}{a_{t+1} \sim \pi_{old}(\cdot|x_{t+1})}\\
        \textcolor{lightblue}{x_{t+1} \sim \mathcal{P}_{\Delta}(\cdot|x_{t}, a_{t})}
        }}
        \left[
        \textcolor{lightred}{Q^{\tau}(x^{\tau}_{t+1}}, \textcolor{lightblue}{a_{t+1}})
         - \log\textcolor{lightblue}{\pi_{old}(a_{t+1}|x_{t+1})}
        \right]\\
        &\leq r_t + \gamma \mathop{\mathbb{E}}_{\substack{
        \textcolor{lightred}{x^{\tau}_{t+1}} \sim b_\Delta(\cdot|\textcolor{lightblue}{x_{t+1}})\\
        \textcolor{lightblue}{a_{t+1} \sim \pi_{new}(\cdot|x_{t+1})}\\
        \textcolor{lightblue}{x_{t+1} \sim \mathcal{P}_{\Delta}(\cdot|x_{t}, a_{t})}
        }}
        \left[
        \textcolor{lightred}{Q^{\tau}(x^{\tau}_{t+1}}, \textcolor{lightblue}{a_{t+1}})
         - \log\textcolor{lightblue}{\pi_{new}(a_{t+1}|x_{t+1})}
        \right]\\
        &=\textcolor{lightblue}{Q_{new}}(x_t, a_t)
    \end{aligned}
    $$
\end{proof}

\begin{theorem}[Soft Policy Iteration Convergence Guarantee]
    \label{appendix_spi_convergence}
    Applying policy evaluation in Eq.~\eqref{aux_pe} and policy improvement in Eq.~\eqref{aux_pi} repeatedly to any given policy \textcolor{lightblue}{$\pi\in\Pi$}, it converges to \textcolor{lightblue}{$\pi_{(\approx)}$} such that \textcolor{lightblue}{$Q^{\pi}(x_t$}$, a_t) \leq$ \textcolor{lightblue}{$Q^{\pi}_{(\approx)}(x_t$}$, a_t)$ for any \textcolor{lightblue}{$\pi\in\Pi$}, $(\textcolor{lightblue}{x_t}, a_t)$ $\in$ $\textcolor{lightblue}{\mathcal{X}} \times \mathcal{A}$ with $|\mathcal{A}| < \infty$.
\end{theorem}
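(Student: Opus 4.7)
The plan is to mirror the classical soft policy iteration convergence argument, adapted to the AD-SPI setting by invoking Lemma~\ref{lemma_soft_policy_evaluation} and Lemma~\ref{lemma_soft_policy_improvement}. Starting from any \textcolor{lightblue}{$\pi_0\in\Pi$}, I would construct a sequence $\{\textcolor{lightblue}{\pi_i}\}_{i\ge 0}\subset\textcolor{lightblue}{\Pi}$ in which each \textcolor{lightblue}{$\pi_{i+1}$} is obtained by first running the policy evaluation operator of Eq.~\eqref{aux_pe} to convergence (which, by Lemma~\ref{lemma_soft_policy_evaluation}, identifies the inner limit with \textcolor{lightblue}{$Q^{\pi_i}$}) and then applying the policy improvement rule of Eq.~\eqref{aux_pi}. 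Lemma~\ref{lemma_soft_policy_improvement} then yields the pointwise monotonicity $\textcolor{lightblue}{Q^{\pi_i}(x_t,a_t)}\le \textcolor{lightblue}{Q^{\pi_{i+1}}(x_t,a_t)}$ on $\textcolor{lightblue}{\mathcal{X}}\times\mathcal{A}$.

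Next I would show that $\{\textcolor{lightblue}{Q^{\pi_i}}\}$ is uniformly bounded from above: rewards are bounded, $\gamma\in(0,1)$, \textcolor{lightred}{$Q^{\tau}_{soft}$} is a single fixed point that does not depend on \textcolor{lightblue}{$\pi_i$}, and by Eq.~\eqref{bounded_entropy} combined with $|\mathcal{A}|<\infty$ the entropy contribution is uniformly bounded; consequently each \textcolor{lightblue}{$Q^{\pi_i}$} is dominated by a constant independent of $i$. The monotone convergence theorem then produces a pointwise limit \textcolor{lightblue}{$Q^{\pi_{(\approx)}}$} with associated policy \textcolor{lightblue}{$\pi_{(\approx)}\in\Pi$}, which must be a fixed point of the composite operator, since one more improvement step on \textcolor{lightblue}{$\pi_{(\approx)}$} cannot produce strict improvement without contradicting Lemma~\ref{lemma_soft_policy_improvement}.

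Finally, for the optimality claim \textcolor{lightblue}{$Q^{\pi}(x_t,a_t)\le Q^{\pi_{(\approx)}}(x_t,a_t)$} over all \textcolor{lightblue}{$\pi\in\Pi$}, I would exploit the fixed-point characterization of \textcolor{lightblue}{$\pi_{(\approx)}$} implied by Eq.~\eqref{aux_pi}: for every \textcolor{lightblue}{$\pi'\in\Pi$},
$$
\mathop{\mathbb{E}}_{\textcolor{lightred}{x^{\tau}_t}\sim b_\Delta(\cdot|\textcolor{lightblue}{x_t}),\,\textcolor{lightblue}{a_t\sim\pi_{(\approx)}(\cdot|x_t)}}\!\!\left[\log\textcolor{lightblue}{\pi_{(\approx)}(a_t|x_t)}-\textcolor{lightred}{Q^{\tau}(x^{\tau}_t,a_t)}\right]\le\mathop{\mathbb{E}}_{\textcolor{lightred}{x^{\tau}_t}\sim b_\Delta(\cdot|\textcolor{lightblue}{x_t}),\,\textcolor{lightblue}{a_t\sim\pi'(\cdot|x_t)}}\!\!\left[\log\textcolor{lightblue}{\pi'(a_t|x_t)}-\textcolor{lightred}{Q^{\tau}(x^{\tau}_t,a_t)}\right].
$$
Unrolling this inequality step by step through the soft Bellman backup of Eq.~\eqref{aux_pe}, exactly as in the proof of Lemma~\ref{lemma_soft_policy_improvement}, telescopes it into \textcolor{lightblue}{$Q^{\pi'}(x_t,a_t)\le Q^{\pi_{(\approx)}}(x_t,a_t)$}. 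The main obstacle I expect is the careful bookkeeping around the nested limits, namely the inner policy-evaluation limit performed at each outer improvement step and the outer limit over $i$; I would handle this by explicitly appealing to Lemma~\ref{lemma_soft_policy_evaluation} to identify each inner limit with \textcolor{lightblue}{$Q^{\pi_i}$} before proceeding, and by verifying that the upper bound on \textcolor{lightblue}{$Q^{\pi_i}$} is genuinely $i$-independent so the monotone convergence step is legitimate.
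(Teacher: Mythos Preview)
Your proposal is correct and follows the paper's approach: invoke Lemmas~\ref{lemma_soft_policy_evaluation} and~\ref{lemma_soft_policy_improvement}, then use the KL-minimizer characterization of \textcolor{lightblue}{$\pi_{(\approx)}$} coming from Eq.~\eqref{aux_pi} to obtain the optimality inequality $\textcolor{lightblue}{Q^{\pi}}\le\textcolor{lightblue}{Q^{\pi_{(\approx)}}}$. One simplification worth noting: since both the evaluation backup in Eq.~\eqref{aux_pe} and the improvement rule in Eq.~\eqref{aux_pi} depend only on the fixed \textcolor{lightred}{$Q^{\tau}_{soft}$} and not on the current \textcolor{lightblue}{$Q$} or \textcolor{lightblue}{$\pi$}, the iteration actually reaches \textcolor{lightblue}{$\pi_{(\approx)}$} after a single step, so your monotone-convergence machinery and the ``unrolling'' are unnecessary here---the paper's proof is accordingly just the one-step comparison you describe at the end.
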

\begin{proof}
    Based on Lemma \ref{appendix_soft_policy_evaluation} and Lemma \ref{appendix_soft_policy_improvement}, we have
    $$
    \begin{aligned}
        \textcolor{lightblue}{Q^{\pi}}(x_t, a_t)
        &=r_t + \gamma \mathop{\mathbb{E}}_{\substack{
        \textcolor{lightred}{x^{\tau}_{t+1}} \sim b_\Delta(\cdot|\textcolor{lightblue}{x_{t+1}})\\
        \textcolor{lightblue}{a_{t+1} \sim \pi(\cdot|x_{t+1})}\\
        \textcolor{lightblue}{x_{t+1} \sim \mathcal{P}_{\Delta}(\cdot|x_{t}, a_{t})}
        }}
        \left[
        \textcolor{lightred}{Q^{\tau}_{(soft)}(x^{\tau}_{t+1}}, \textcolor{lightblue}{a_{t+1}})
         - \log\textcolor{lightblue}{\pi(a_{t+1}|x_{t+1})}
        \right]\\
        &\leq r_t + \gamma \mathop{\mathbb{E}}_{\substack{
        \textcolor{lightred}{x^{\tau}_{t+1}} \sim b_\Delta(\cdot|\textcolor{lightblue}{x_{t+1}})\\
        \textcolor{lightblue}{a_{t+1} \sim \pi_{(\approx)}(\cdot|x_{t+1})}\\
        \textcolor{lightblue}{x_{t+1} \sim \mathcal{P}_{\Delta}(\cdot|x_{t}, a_{t})}
        }}
        \left[
        \textcolor{lightred}{Q^{\tau}_{(soft)}(x^{\tau}_{t+1}}, \textcolor{lightblue}{a_{t+1}})
         - \log\textcolor{lightblue}{\pi_{(\approx)}(a_{t+1}|x_{t+1})}
        \right]\\
        &=\textcolor{lightblue}{Q^{\pi}_{(\approx)}}(x_t, a_t)
    \end{aligned}
    $$
\end{proof}


\clearpage
\section{Stochastic MDP Case: Selection of Auxiliary Delays \textcolor{lightred}{$\Delta^\tau$}}
\label{appendix_sto}
We give the following Theorem \ref{sto_performance_diff} to exemplify that the naive selection (e.g. \textcolor{lightred}{$\Delta^\tau$} $=0$) might cause a larger performance gap and potential approximation error.

First of all, we introduce a stochastic MDP $\langle \mathcal{S}, \mathcal{A}, \mathcal{P}, \mathcal{R}, \gamma, \rho \rangle$~\cite{dida} where
\begin{itemize}
    \item $\mathcal{S} = \mathbb{R}$
    \item $\mathcal{A} = \mathbb{R}$
    \item $\mathcal{P}(s'|s, a) = \mathcal{N}(s + \frac{a}{L_\pi}, \sigma^2)$ which means that $s' = s + \frac{a}{L_\pi} + \epsilon$ where $\epsilon \sim \mathcal{N}(0, \sigma^2)$
    \item $\mathcal{R}(s, a) = -L_Q L_\pi |s + \frac{a}{L_\pi}|$
\end{itemize}
\begin{lemma}[value function upper bound~\cite{dida}] 
\label{vfub_sto_mdp}
Let $L_\pi, L_Q>0$, in the MDP defined above with a $L_\pi$-LC optimal policy and $L_Q$-LC optimal value function, given $\Delta > 0$, for any $\Delta$-delayed policy $\pi$ and any augmented state $x\in \mathcal{X}$, it's value function $V^\pi$ has following upper bound
\begin{equation}
\begin{aligned}
    \label{sto_mdp_v}
    V^\pi(x) 
    &\leq 
    - \frac{L_Q L_\pi}{1-\gamma} \frac{\sqrt{2\Delta}}{\sqrt{\pi}}\sigma \\
    &= V^{\pi}_{(*)}(x) \\
\end{aligned}
\end{equation}
where $V^{\pi}_{(*)}(x)$ is the value function of the optimal $\Delta$-delayed policy $\pi_{(*)}$.
\end{lemma}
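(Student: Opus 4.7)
}
The plan is to unroll the delayed dynamics so that the instantaneous reward at each step is governed by a one-dimensional Gaussian whose variance is determined by $\Delta$ and $\sigma$, and then argue that the expected absolute value of such a Gaussian is minimized when its mean is zero. Concretely, given augmented state $x_t=(s_{t-\Delta},a_{t-\Delta},\ldots,a_{t-1})$ and a candidate action $a_t$, iterate the transition rule $s'=s+a/L_\pi+\epsilon$ with $\epsilon\sim\mathcal{N}(0,\sigma^2)$ from time $t-\Delta$ to time $t$ to write
$$
s_t+\tfrac{a_t}{L_\pi}\;=\;\underbrace{s_{t-\Delta}+\tfrac{1}{L_\pi}\!\sum_{i=0}^{\Delta-1}a_{t-\Delta+i}+\tfrac{a_t}{L_\pi}}_{\textstyle\mu(x_t,a_t)}\;+\;\underbrace{\sum_{i=0}^{\Delta-1}\epsilon_{t-\Delta+i}}_{\textstyle Z\,\sim\,\mathcal{N}(0,\Delta\sigma^2)}.
$$
Thus the delayed expected reward decomposes as $\mathcal{R}_\Delta(x_t,a_t)=-L_QL_\pi\,\mathbb{E}\bigl[\lvert\mu(x_t,a_t)+Z\rvert\bigr]$, i.e., the negative absolute first moment of a normal variable with fixed variance $\Delta\sigma^2$ whose mean is chosen by the policy via $a_t$.

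Next, I would invoke the standard fact that for $Y\sim\mathcal{N}(\mu,v^2)$ the function $\mu\mapsto\mathbb{E}\lvert Y\rvert$ attains its minimum at $\mu=0$, where it equals $v\sqrt{2/\pi}$; this follows either by direct computation of the folded-normal mean or by convexity of $y\mapsto|y|$ combined with Jensen applied to $|Z|-|\mu+Z|$. Applying this with $v=\sqrt{\Delta}\sigma$ gives the tight per-step bound
$$
\mathcal{R}_\Delta(x_t,a_t)\;\le\;-\,L_QL_\pi\sqrt{\tfrac{2\Delta}{\pi}}\,\sigma,
$$
uniformly in $x_t$ and $a_t$, for every policy $\pi$. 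Summing the geometric series over the horizon yields
$$
V^\pi(x)\;=\;\mathbb{E}\!\left[\sum_{t=0}^{\infty}\gamma^t\mathcal{R}_\Delta(x_t,a_t)\right]\;\le\;-\,\frac{L_QL_\pi}{1-\gamma}\frac{\sqrt{2\Delta}}{\sqrt{\pi}}\,\sigma,
$$
which is the claimed upper bound.

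For the equality with $V^{\pi}_{(*)}$, it suffices to exhibit a policy that attains the per-step bound at every step. The choice $\pi_{(*)}(x_t)=-L_\pi s_{t-\Delta}-\sum_{i=0}^{\Delta-1}a_{t-\Delta+i}$ makes $\mu(x_t,\pi_{(*)}(x_t))\equiv 0$, so the inequality above becomes an equality at every time step and the sum matches $-\frac{L_QL_\pi}{1-\gamma}\sqrt{2\Delta/\pi}\,\sigma$. Since by assumption the optimal policy is $L_\pi$-LC and the corresponding value function is $L_Q$-LC, this policy lies in the admissible class and can be identified with $\pi_{(*)}$, giving $V^{\pi}_{(*)}(x)=-\frac{L_QL_\pi}{1-\gamma}\sqrt{2\Delta/\pi}\,\sigma$.

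The main obstacle I anticipate is not the per-step computation, which is essentially a folded-normal calculation, but rather reconciling the construction of $\pi_{(*)}$ with the Lipschitz class in which the lemma places the optimal policy: the affine form above has Lipschitz constant $L_\pi$ in $s_{t-\Delta}$, but the dependence on the action history carries its own Lipschitz factors, so one must either argue via the stated LC assumption on the optimal policy (treating it as a hypothesis whose existence is postulated for this constructed MDP) or verify explicitly that the chosen metric on $\mathcal{X}$ makes this map $L_\pi$-LC. Everything else reduces to the Gaussian moment identity and geometric summation.
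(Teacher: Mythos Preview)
The paper does not actually prove this lemma; it is stated with a citation to~\cite{dida} and used as a black box to derive Theorem~\ref{sto_performance_diff}. So there is no in-paper proof to compare against.

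That said, your argument is correct and is essentially the canonical one. The decomposition $s_t+a_t/L_\pi=\mu(x_t,a_t)+Z$ with $Z\sim\mathcal{N}(0,\Delta\sigma^2)$ is exactly right, and the folded-normal minimization $\min_{\mu}\mathbb{E}|\mu+Z|=\sigma\sqrt{2\Delta/\pi}$ (via convexity plus symmetry of $Z$) gives the tight per-step reward bound uniformly over $(x_t,a_t)$; the geometric sum then yields the value bound. Your exhibited policy $a_t=-L_\pi s_{t-\Delta}-\sum_{i=0}^{\Delta-1}a_{t-\Delta+i}$ attains the bound at every step, establishing the equality with $V^{\pi}_{(*)}$. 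The concern you flag about the Lipschitz class is the right caveat: the lemma's hypothesis that the optimal policy is $L_\pi$-LC is really a hypothesis about the constructed MDP (the delay-free optimal policy $s\mapsto -L_\pi s$ is $L_\pi$-LC in $s$), and the lemma is best read as assuming rather than deriving that regularity. Nothing in your proof depends on resolving that issue.
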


Then, we can extend Lemma \ref{vfub_sto_mdp} in the context of our AD-RL.
Given any delay \textcolor{lightblue}{$\Delta$} and auxiliary delays \textcolor{lightred}{$\Delta^\tau$} ($<$\textcolor{lightblue}{$\Delta$}), the corresponding optimal value functions are \textcolor{lightblue}{$V_{(*)}$} and \textcolor{lightred}{$V^\tau_{(*)}$}, respectively.
For \textcolor{lightblue}{$V_{(*)}$} and \textcolor{lightred}{$V^\tau_{(*)}$}, we can derive the performance difference in the following Theorem.

\begin{theorem}
\label{sto_performance_diff}
    For optimal policies \textcolor{lightblue}{$\pi_{(*)}$} and \textcolor{lightred}{$\pi^\tau_{(*)}$}, for any \textcolor{lightblue}{$x_t$} $\in$ \textcolor{lightblue}{$\mathcal{X}$}, their corresponding performance difference $I(\textcolor{lightblue}{x_t})$ is
    $$
    I(\textcolor{lightblue}{x_t}) = 
    \frac{L_Q L_\pi}{1-\gamma} \sigma \frac{\sqrt{2}}{\sqrt{\pi}}
    \left[
    \sqrt{\textcolor{lightblue}{\Delta}}
    -
    \sqrt{\textcolor{lightred}{\Delta^\tau}}
    \right]
    $$
\end{theorem}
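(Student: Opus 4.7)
The plan is to apply Lemma \ref{vfub_sto_mdp} separately to the original \textcolor{lightblue}{$\Delta$}-delayed task and to the auxiliary \textcolor{lightred}{$\Delta^\tau$}-delayed task, and then exploit the fact that, in this particular toy MDP, the resulting optimal value functions are state-independent constants depending only on the delay. Consequently, the delayed-belief expectation appearing in the definition of $I(\textcolor{lightblue}{x_t})$ collapses and the conclusion follows from a single subtraction.

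First, I would invoke Lemma \ref{vfub_sto_mdp} with the optimal \textcolor{lightblue}{$\Delta$}-delayed policy \textcolor{lightblue}{$\pi_{(*)}$}, obtaining
$$\textcolor{lightblue}{V_{(*)}(x_t)} \;=\; -\frac{L_Q L_\pi}{1-\gamma}\,\frac{\sqrt{2\textcolor{lightblue}{\Delta}}}{\sqrt{\pi}}\,\sigma$$
for every \textcolor{lightblue}{$x_t$}$\in$\textcolor{lightblue}{$\mathcal{X}$}. Second, I would apply the same lemma to the auxiliary task, now with the optimal \textcolor{lightred}{$\Delta^\tau$}-delayed policy \textcolor{lightred}{$\pi^\tau_{(*)}$}, yielding
$$\textcolor{lightred}{V^\tau_{(*)}(x^\tau_t)} \;=\; -\frac{L_Q L_\pi}{1-\gamma}\,\frac{\sqrt{2\textcolor{lightred}{\Delta^\tau}}}{\sqrt{\pi}}\,\sigma$$
for every \textcolor{lightred}{$x^\tau_t$}$\in$\textcolor{lightred}{$\mathcal{X}^\tau$}. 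Crucially, both expressions are constants independent of the augmented state.

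Third, I would substitute these closed-form values into the definition $I(\textcolor{lightblue}{x_t}) = \mathbb{E}_{\textcolor{lightred}{x^\tau_t}\sim b_\Delta(\cdot|\textcolor{lightblue}{x_t})}[\textcolor{lightred}{V^\tau_{(*)}(x^\tau_t)}]-\textcolor{lightblue}{V_{(*)}(x_t)}$ provided by Lemma \ref{lemma_general_delayed_performance_diff}. Because the integrand is constant in \textcolor{lightred}{$x^\tau_t$}, the expectation under the delayed belief $b_\Delta$ is trivial, and a direct subtraction followed by the factorization $\sqrt{2\textcolor{lightblue}{\Delta}} - \sqrt{2\textcolor{lightred}{\Delta^\tau}} = \sqrt{2}\,(\sqrt{\textcolor{lightblue}{\Delta}} - \sqrt{\textcolor{lightred}{\Delta^\tau}})$ gives the claimed equality.

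The main obstacle is conceptual rather than computational: one must check that Lemma \ref{vfub_sto_mdp} is legitimately applicable in both delay regimes, i.e.\ that the Lipschitz constants $L_\pi$ and $L_Q$ of the optimal policies and optimal value functions remain the same when the delay is reduced from \textcolor{lightblue}{$\Delta$} to \textcolor{lightred}{$\Delta^\tau$}. Since both constants are governed purely by the reward and transition structure of the underlying MDP—which is shared by the two tasks—this verification is immediate, and no further work is required.
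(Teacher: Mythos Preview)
Your proposal is correct and follows essentially the same approach as the paper: apply Lemma \ref{vfub_sto_mdp} to both the \textcolor{lightblue}{$\Delta$}- and \textcolor{lightred}{$\Delta^\tau$}-delayed tasks to obtain the constant optimal values, then subtract using the definition of $I(\textcolor{lightblue}{x_t})$ from Lemma \ref{lemma_general_delayed_performance_diff}. Your explicit observation that the optimal values are state-independent (making the $b_\Delta$-expectation trivial) and your check that $L_\pi,L_Q$ are delay-independent are minor elaborations the paper leaves implicit, but the core argument is identical.
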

\begin{proof}
    By applying Lemma \ref{vfub_sto_mdp}, we can get that
    $$
    \begin{aligned}
        \textcolor{lightblue}{V_{(*)}(x_t)} 
        &= \mathop{\mathbb{E}}_{
                \textcolor{lightblue}{a_t}\sim \textcolor{lightblue}{\pi_{(*)}}(\cdot|\textcolor{lightblue}{x_t})
            } [\textcolor{lightblue}{Q_{(*)}(x_t, a_t)}]\\
        &= - \frac{L_Q L_\pi}{1-\gamma} \frac{\sqrt{2\textcolor{lightblue}{\Delta}}}{\sqrt{\pi}}\sigma\\
    \end{aligned}
    $$
    and
    $$
    \begin{aligned}
    \mathop{\mathbb{E}}_{\textcolor{lightred}{x^\tau_t} \sim b_\Delta(\cdot|\textcolor{lightblue}{x_t})}\left[
        \textcolor{lightred}{V^\tau_{(*)}(x^\tau_t)} 
    \right]
        &= \mathop{\mathbb{E}}_{
                \textcolor{lightred}{a_t \sim \pi^\tau_{(*)}(\cdot|x^\tau_t)}\atop
                \textcolor{lightred}{x^\tau_t} \sim b_\Delta(\cdot|\textcolor{lightblue}{x_t})
            } \left[\textcolor{lightred}{Q^\tau_{(*)}(x^\tau_t, a_t)}\right]\\
        &= - \frac{L_Q L_\pi}{1-\gamma} \frac{\sqrt{2\textcolor{lightred}{\Delta^\tau}}}{\sqrt{\pi}}\sigma\\
    \end{aligned}
    $$
    then based on Lemma \ref{lemma_general_delayed_performance_diff}, we can derive that
    $$
    \begin{aligned}
    I(\textcolor{lightblue}{x_t}) &=
            \mathop{\mathbb{E}}_{
                \textcolor{lightred}{x^{\tau}_t}\sim b_\Delta(\cdot|\textcolor{lightblue}{x_t})
            }\left[
                \textcolor{lightred}{V^{\tau}(x^{\tau}_t)}
            \right] 
            - \textcolor{lightblue}{V(x_t)}\\
    &=
    \frac{L_Q L_\pi}{1-\gamma} \sigma \frac{\sqrt{2}}{\sqrt{\pi}}
    \left[
    \sqrt{\textcolor{lightblue}{\Delta}}
    -
    \sqrt{\textcolor{lightred}{\Delta^\tau}}
    \right]\\
    \end{aligned}
    $$
\end{proof}

From Theorem \ref{sto_performance_diff}, we can observe that when the difference between delays (\textcolor{lightred}{$\Delta^\tau$} and \textcolor{lightblue}{$\Delta$}) increases, the performance difference becomes larger.
In other words, if selecting a extremely short \textcolor{lightred}{$\Delta^\tau$} (e.g., \textcolor{lightred}{$\Delta^\tau$} $=0$) for a long \textcolor{lightblue}{$\Delta$}, the value difference between \textcolor{lightred}{$Q^\tau_{(*)}$} and \textcolor{lightblue}{$Q_{(*)}$} might be enlarged, and introducing potential estimation bias.
Based on the analysis presented above, it actually exists a trade-off between sample efficiency (shorter \textcolor{lightred}{$\Delta^\tau$} is better) and approximation error (\textcolor{lightred}{$\Delta^\tau$} is closer to \textcolor{lightblue}{$\Delta$}).

\clearpage
\section{MuJoCo: Additional Experimental Results}
\label{appendix_mujoco_results}
\begin{table*}[h]
\centering
\caption{Results of MuJoCo tasks with 5, 25 and 50 delays for 1M global time-steps. Each method was evaluated with 10 trials and is shown with the standard deviation denoted by $\pm$. The best performance is in blue.}
\scalebox{0.9}{
\begin{tabular}{c|cc|c|ccccc}
\hline
Task                                 & Delay-free SAC              & Random                     & Delays & A-SAC             & DC/AC              & DIDA               & BPQL               & AD-SAC (ours)  \\ \hline
                                     &                             &                            & 5      & $0.18_{\pm 0.01}$ & $0.25_{\pm 0.05}$  & $0.89_{\pm 0.03}$  & \textcolor{blue}{$0.96_{\pm 0.03}$}& $0.72_{\pm 0.25}$  \\
                                     &                             &                            & 25     & $0.07_{\pm 0.07}$ & $0.19_{\pm 0.02}$  & $0.29_{\pm 0.07}$  & $0.57_{\pm 0.11}$  & \textcolor{blue}{$0.66_{\pm 0.04}$} \\
\multirow{-3}{*}{Ant-v4}             & \multirow{-3}{*}{5053.30}   & \multirow{-3}{*}{-123.88}  & 50     & $0.02_{\pm 0.04}$ & $0.19_{\pm 0.02}$  & $0.19_{\pm 0.05}$  & $0.38_{\pm 0.07}$  & \textcolor{blue}{$0.48_{\pm 0.06}$} \\ \hline
                                     &                             &                            & 5      & $0.35_{\pm 0.15}$ & $0.40_{\pm 0.23}$  & $0.90_{\pm 0.01}$  & $1.00_{\pm 0.06}$  & \textcolor{blue}{$1.07_{\pm 0.06}$} \\
                                     &                             &                            & 25     & $0.04_{\pm 0.01}$ & $0.16_{\pm 0.07}$  & $0.12_{\pm 0.03}$  & \textcolor{blue}{$0.87_{\pm 0.04}$}& $0.71_{\pm 0.12}$ \\
\multirow{-3}{*}{HalfCheetah-v4}     & \multirow{-3}{*}{5322.74}   & \multirow{-3}{*}{-304.29}  & 50     & $0.12_{\pm 0.17}$ & $0.12_{\pm 0.13}$  & $0.15_{\pm 0.03}$  & $0.73_{\pm 0.17}$  & \textcolor{blue}{$0.74_{\pm 0.10}$} \\ \hline
                                     &                             &                            & 5      & $1.02_{\pm 0.28}$ & $1.16_{\pm 0.25}$  & $0.40_{\pm 0.40}$  & \textcolor{blue}{$1.25_{\pm 0.09}$}& $1.07_{\pm 0.30}$ \\
                                     &                             &                            & 25     & $0.13_{\pm 0.04}$ & $0.19_{\pm 0.04}$  & $0.27_{\pm 0.08}$  & \textcolor{blue}{$1.21_{\pm 0.18}$}& $0.86_{\pm 0.25}$ \\
\multirow{-3}{*}{Hopper-v4}          & \multirow{-3}{*}{2455.18}   & \multirow{-3}{*}{12.93}    & 50     & $0.04_{\pm 0.01}$ & $0.04_{\pm 0.01}$  & $0.09_{\pm 0.01}$  & $0.71_{\pm 0.13}$  & \textcolor{blue}{$0.72_{\pm 0.03}$} \\ \hline
                                     &                             &                            & 5      & $0.13_{\pm 0.02}$ & $0.59_{\pm 0.17}$  & $0.08_{\pm 0.04}$  & $0.96_{\pm 0.05}$  & \textcolor{blue}{$0.98_{\pm 0.07}$} \\
                                     &                             &                            & 25     & $0.05_{\pm 0.01}$ & $0.04_{\pm 0.01}$  & $0.07_{\pm 0.00}$  & $0.12_{\pm 0.01}$  & \textcolor{blue}{$0.25_{\pm 0.16}$} \\
\multirow{-3}{*}{Humanoid-v4}        & \multirow{-3}{*}{5269.34}   & \multirow{-3}{*}{135.47}   & 50     & $0.04_{\pm 0.01}$ & $0.03_{\pm 0.01}$  & $0.07_{\pm 0.00}$  & $0.08_{\pm 0.01}$  & \textcolor{blue}{$0.10_{\pm 0.01}$} \\ \hline
                                     &                             &                            & 5      & $1.02_{\pm 0.08}$ & $1.16_{\pm 0.12}$  & $1.00_{\pm 0.00}$  & $1.13_{\pm 0.07}$  & \textcolor{blue}{$1.22_{\pm 0.03}$} \\
                                     &                             &                            & 25     & $0.97_{\pm 0.09}$ & $1.03_{\pm 0.03}$  & $0.97_{\pm 0.02}$  & $1.09_{\pm 0.05}$  & \textcolor{blue}{$1.15_{\pm 0.08}$} \\
\multirow{-3}{*}{HumanoidStandup-v4} & \multirow{-3}{*}{129827.70} & \multirow{-3}{*}{35743.26} & 50     & $0.90_{\pm 0.02}$ & $1.02_{\pm 0.07}$  & $0.89_{\pm 0.06}$  & $1.06_{\pm 0.04}$  & \textcolor{blue}{$1.12_{\pm 0.02}$} \\ \hline
                                     &                             &                            & 5      & $1.11_{\pm 0.02}$ & $1.29_{\pm 0.05}$  & $1.01_{\pm 0.01}$  & $1.06_{\pm 0.08}$  & \textcolor{blue}{$1.36_{\pm 0.01}$} \\
                                     &                             &                            & 25     & $0.49_{\pm 0.32}$ & $1.12_{\pm 0.02}$  & $1.04_{\pm 0.01}$  & $1.07_{\pm 0.06}$  & \textcolor{blue}{$1.29_{\pm 0.03}$} \\
\multirow{-3}{*}{Pusher-v4}          & \multirow{-3}{*}{-56.03}    & \multirow{-3}{*}{-148.51}  & 50     & $0.00_{\pm 0.05}$ & $1.13_{\pm 0.01}$  & $1.04_{\pm 0.02}$  & $1.09_{\pm 0.05}$  & \textcolor{blue}{$1.23_{\pm 0.02}$} \\ \hline
                                     &                             &                            & 5      & $0.97_{\pm 0.01}$ & $1.02_{\pm 0.00}$  & $1.03_{\pm 0.00}$  & $1.00_{\pm 0.01}$  & \textcolor{blue}{$1.03_{\pm 0.01}$} \\
                                     &                             &                            & 25     & $0.96_{\pm 0.02}$ & \textcolor{blue}{$1.00_{\pm 0.00}$}  & $0.98_{\pm 0.01}$  & $0.87_{\pm 0.05}$  & $0.98_{\pm 0.02}$ \\
\multirow{-3}{*}{Reacher-v4}         & \multirow{-3}{*}{-5.89}     & \multirow{-3}{*}{-44.11}   & 50     & $0.86_{\pm 0.02}$ & $0.89_{\pm 0.01}$  & \textcolor{blue}{$0.93_{\pm 0.02}$}  & $0.90_{\pm 0.02}$  & $0.91_{\pm 0.03}$ \\ \hline
                                     &                             &                            & 5      & $0.88_{\pm 0.09}$ & $1.11_{\pm 0.30}$  & $1.05_{\pm 0.01}$  & $0.97_{\pm 0.02}$  & \textcolor{blue}{$1.82_{\pm 0.78}$} \\
                                     &                             &                            & 25     & $0.72_{\pm 0.02}$ & $0.78_{\pm 0.12}$  & $0.93_{\pm 0.09}$  & $1.36_{\pm 0.56}$  & \textcolor{blue}{$2.52_{\pm 0.40}$} \\
\multirow{-3}{*}{Swimmer-v4}         & \multirow{-3}{*}{49.41}     & \multirow{-3}{*}{-3.18}    & 50     & $0.69_{\pm 0.04}$ & $0.68_{\pm 0.06}$  & $0.87_{\pm 0.03}$  & $2.23_{\pm 0.55}$  & \textcolor{blue}{$2.71_{\pm 0.14}$} \\ \hline
                                     &                             &                            & 5      & $0.76_{\pm 0.21}$ & $0.85_{\pm 0.12}$  & $0.61_{\pm 0.07}$  & \textcolor{blue}{$1.20_{\pm 0.11}$}& $1.12_{\pm 0.09}$ \\
                                     &                             &                            & 25     & $0.12_{\pm 0.02}$ & $0.26_{\pm 0.08}$  & $0.10_{\pm 0.02}$  & $0.59_{\pm 0.30}$  & \textcolor{blue}{$0.72_{\pm 0.11}$} \\
\multirow{-3}{*}{Walker2d-v4}        & \multirow{-3}{*}{3999.54}   & \multirow{-3}{*}{1.48}     & 50     & $0.11_{\pm 0.02}$ & $0.11_{\pm 0.02}$  & $0.08_{\pm 0.01}$  & \textcolor{blue}{$0.23_{\pm 0.10}$}& $0.00_{\pm 0.11}$ \\ \hline
\end{tabular}
}
\end{table*}

\begin{figure*}[h]
\begin{center}
\centerline{
    \subfigure[Ant-v4]{\includegraphics[width=0.33\linewidth]{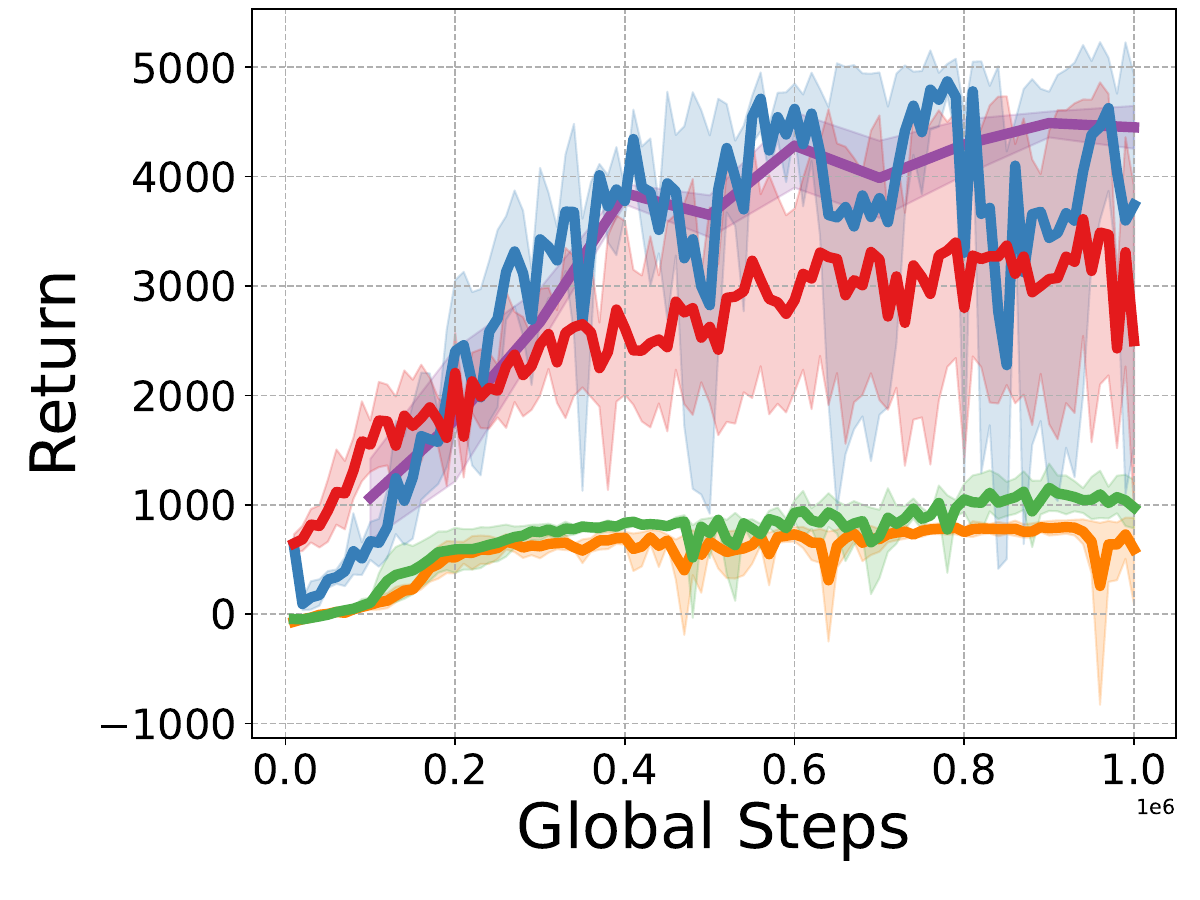}}
    \subfigure[HalfCheetah-v4]{\includegraphics[width=0.33\linewidth]{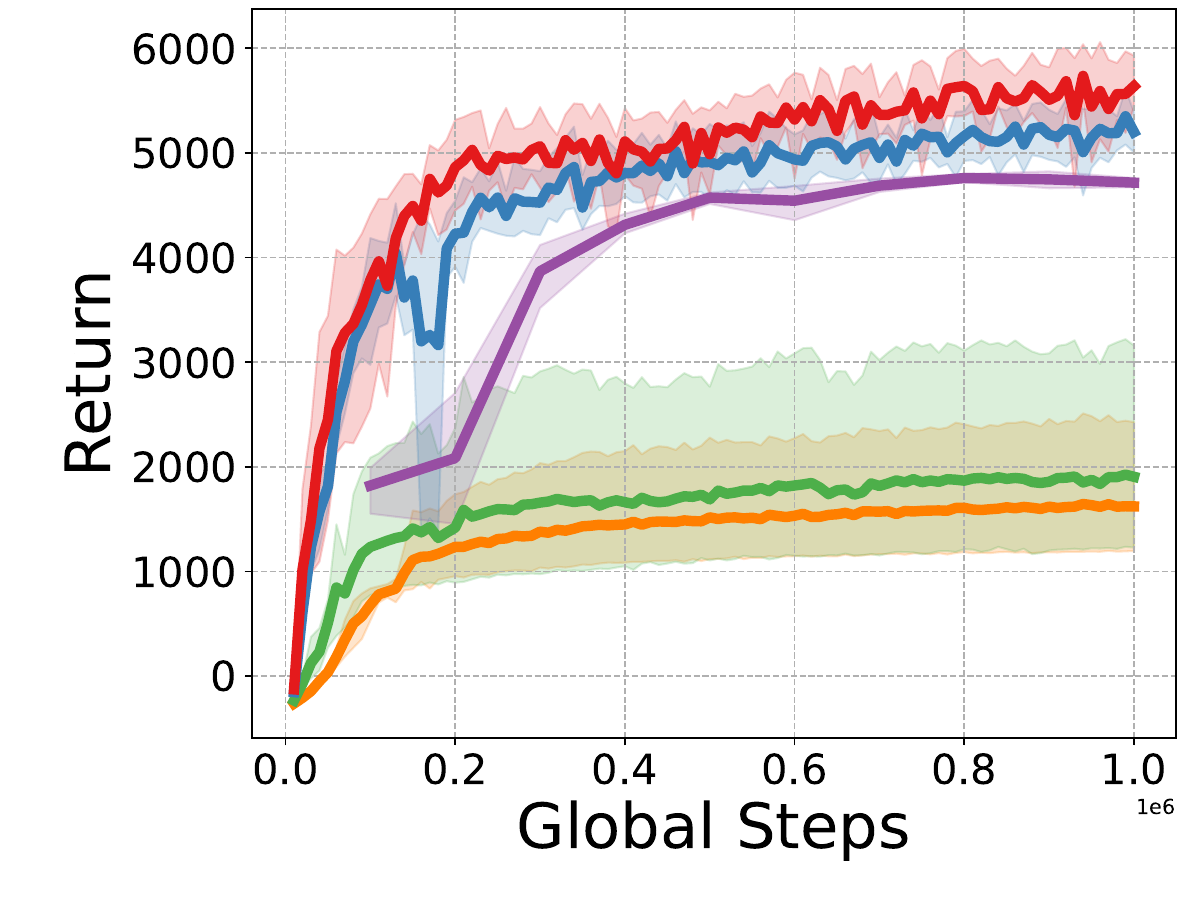}}
    \subfigure[Hopper-v4]{\includegraphics[width=0.33\linewidth]{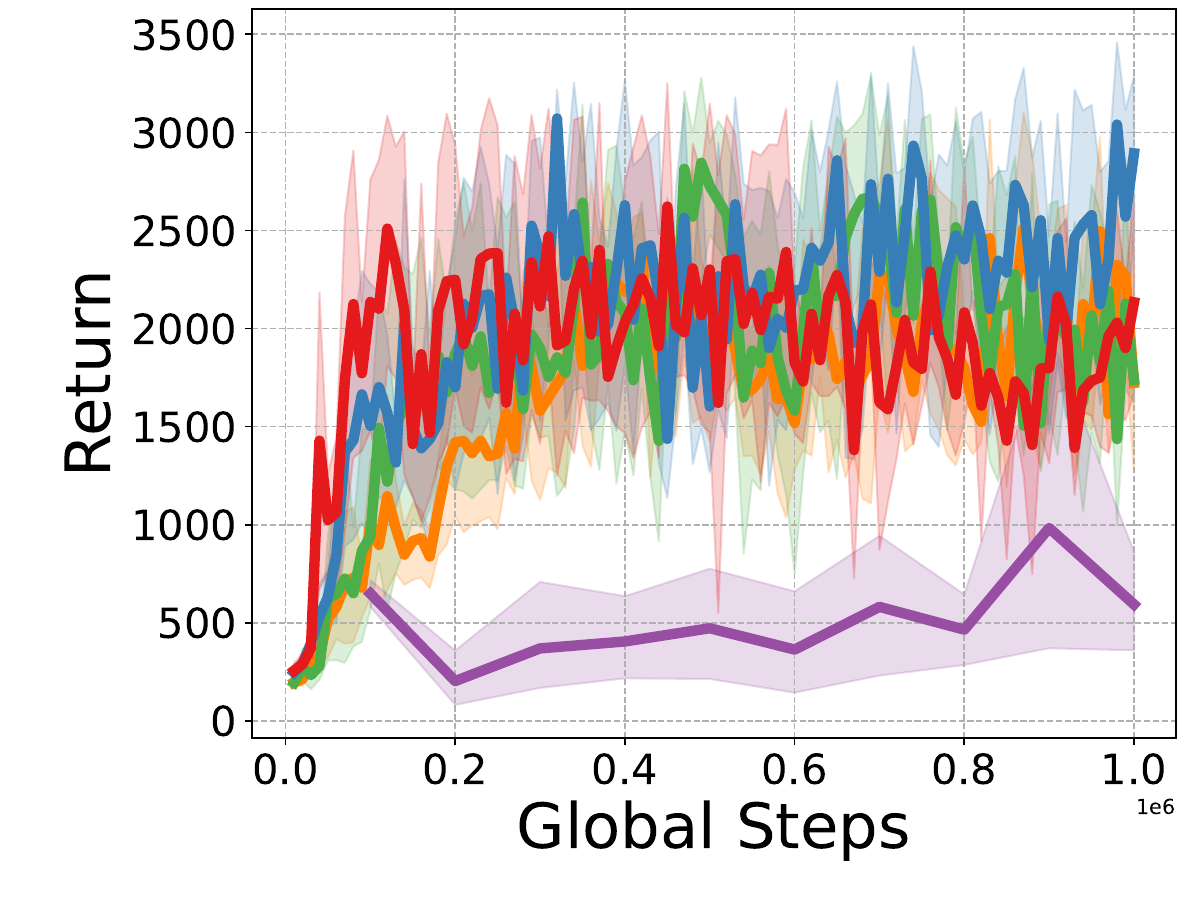}}
}
\centerline{
    \subfigure[Humanoid-v4]{\includegraphics[width=0.33\linewidth]{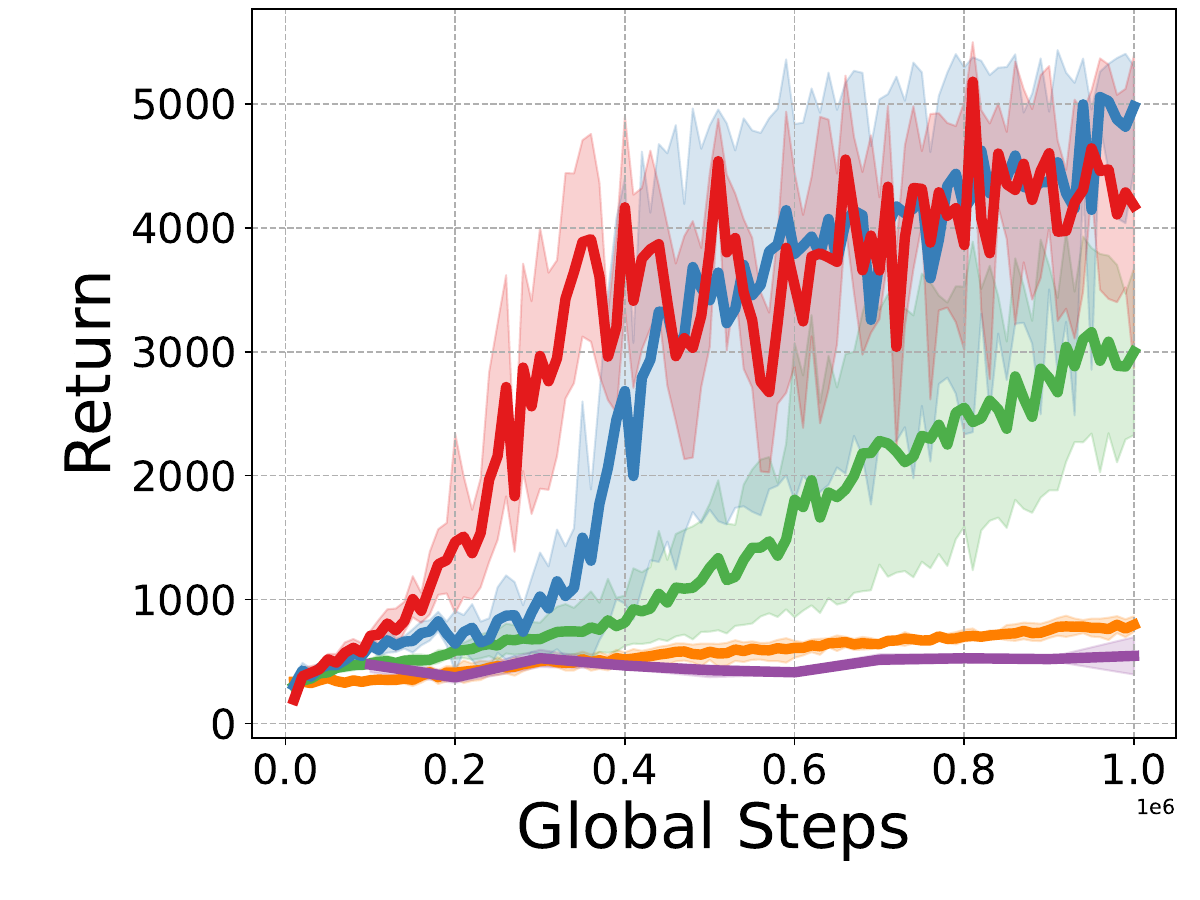}}
    \subfigure[HumanoidStandup-v4]{\includegraphics[width=0.33\linewidth]{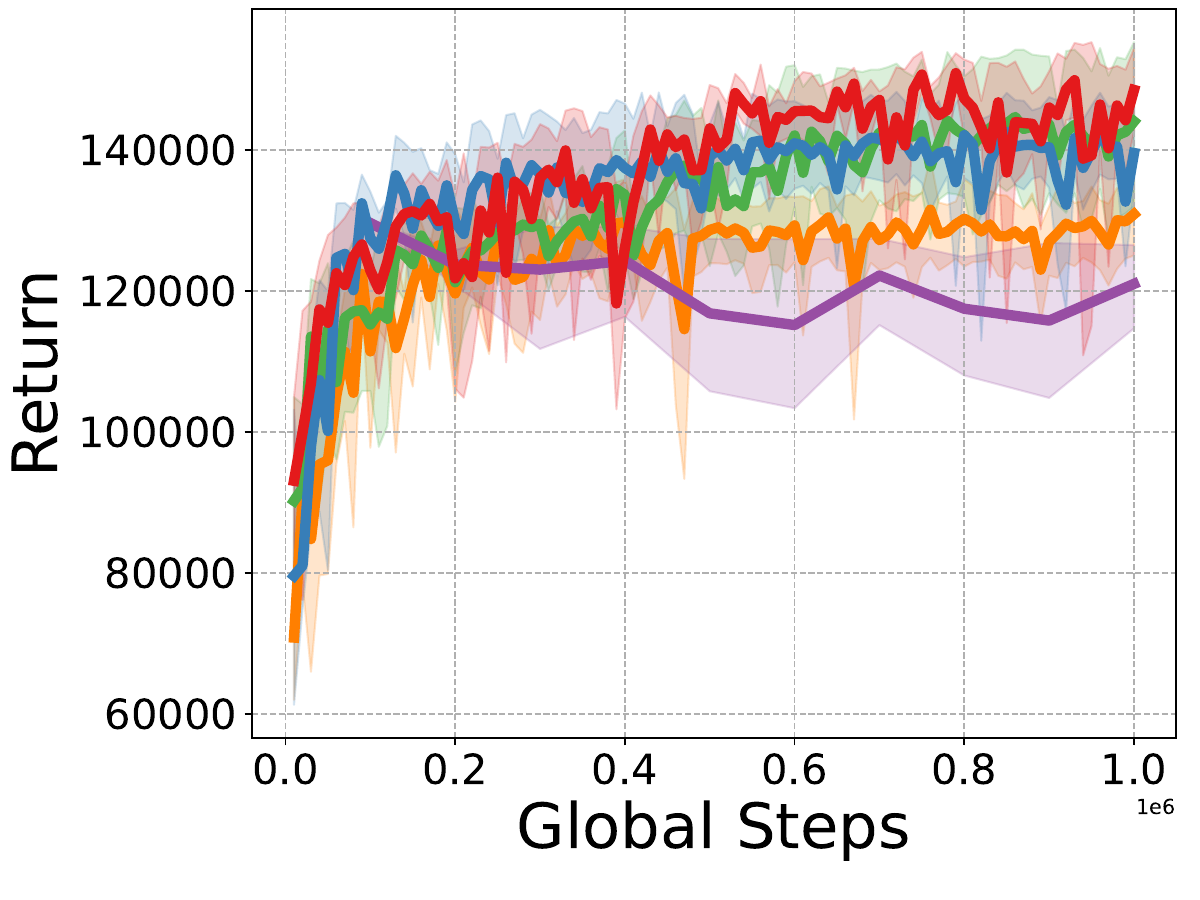}}
    \subfigure[Pusher-v4]{\includegraphics[width=0.33\linewidth]{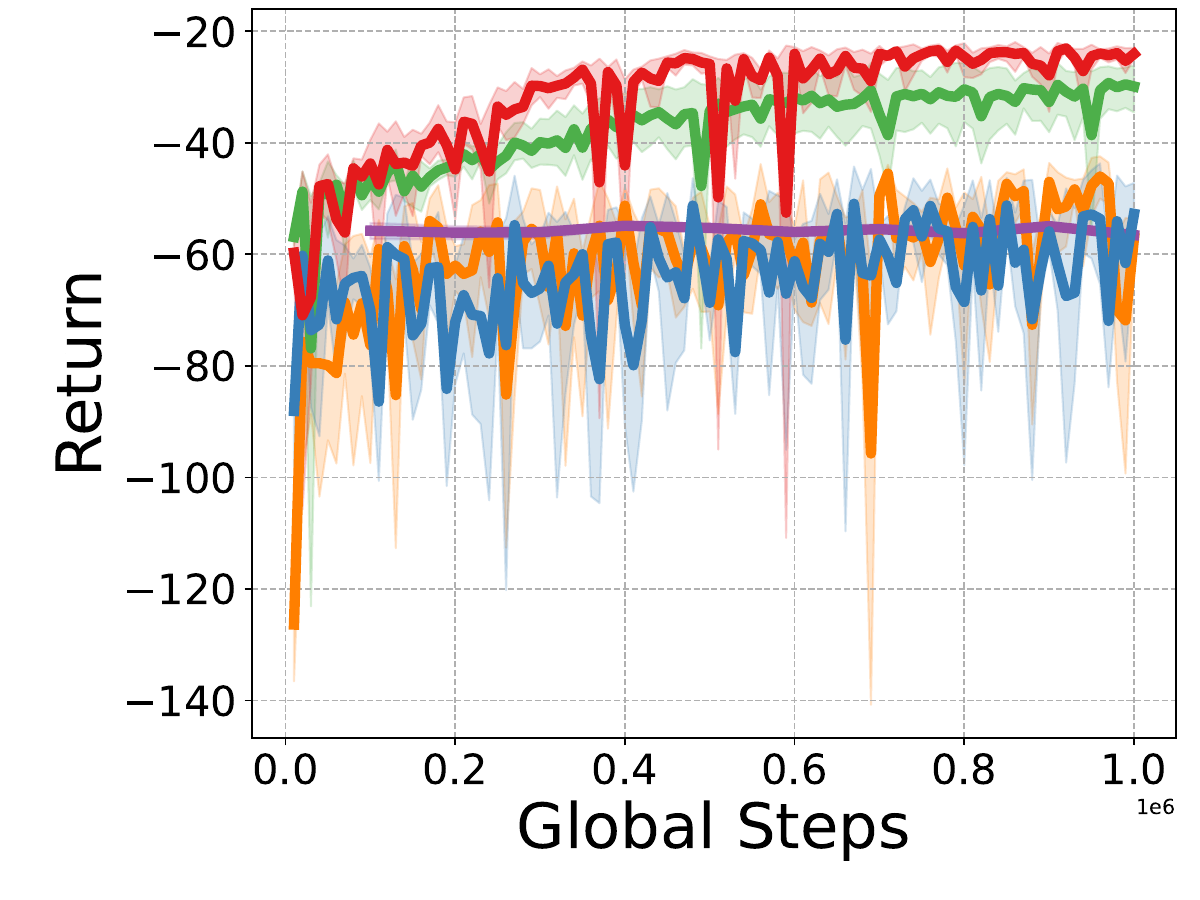}}
}
\centerline{
    \subfigure[Reacher-v4]{\includegraphics[width=0.33\linewidth]{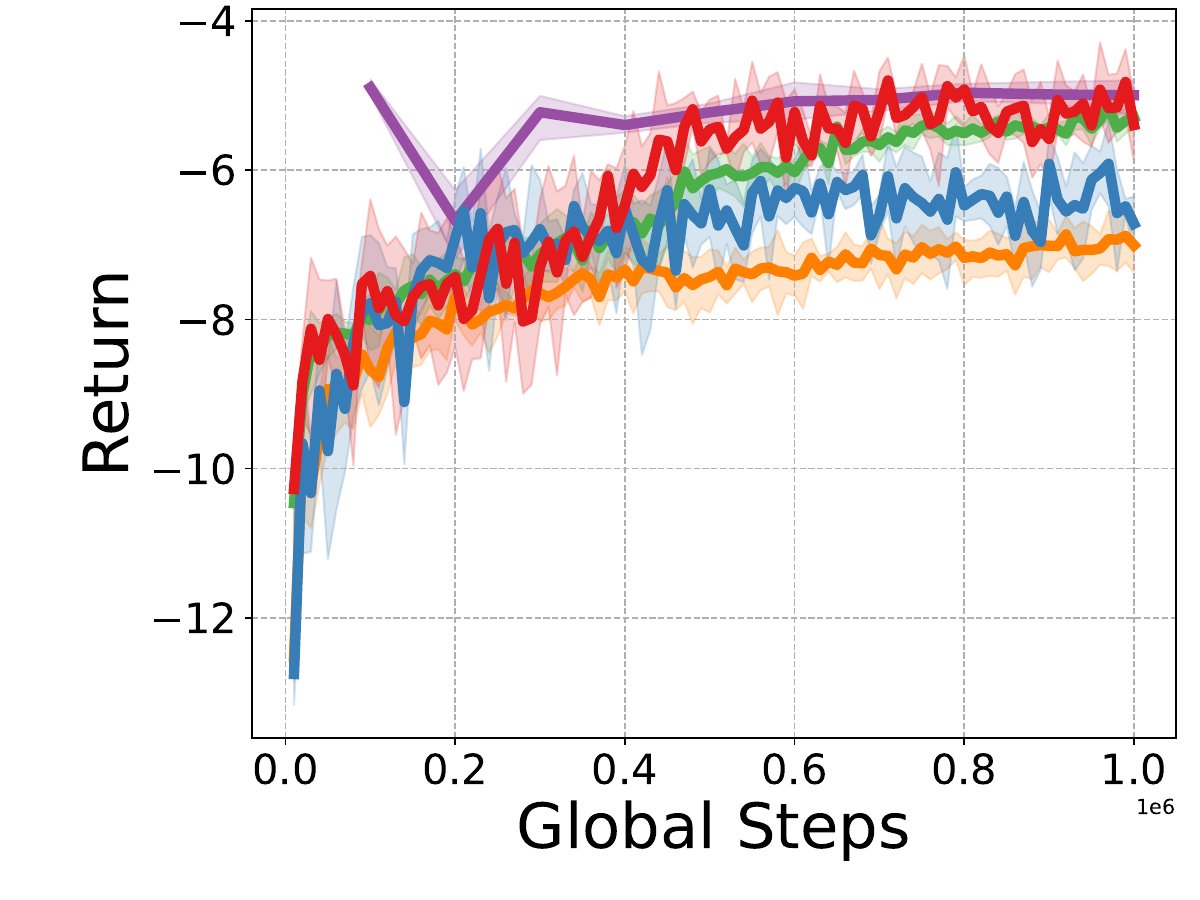}}
    \subfigure[Swimmer-v4]{\includegraphics[width=0.33\linewidth]{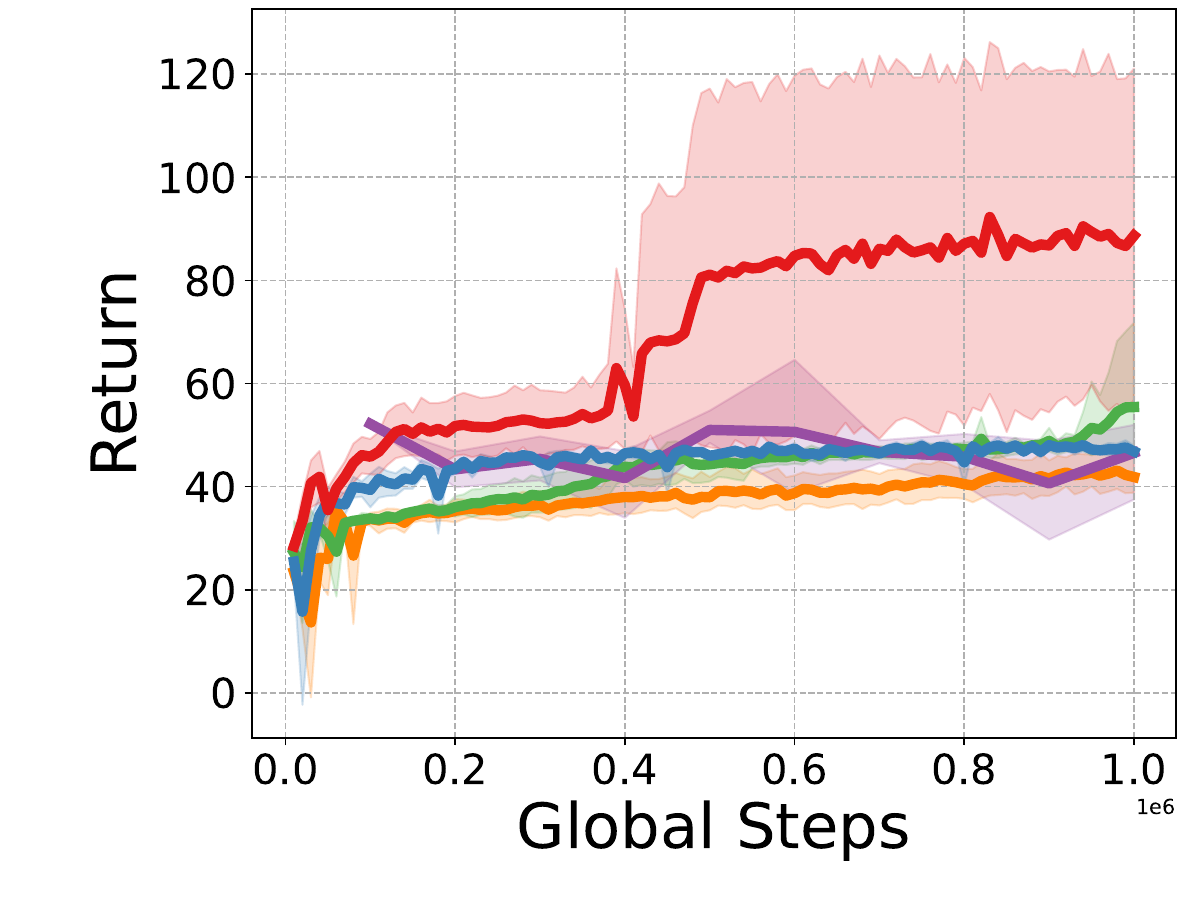}}
    \subfigure[Walker2d-v4]{\includegraphics[width=0.33\linewidth]{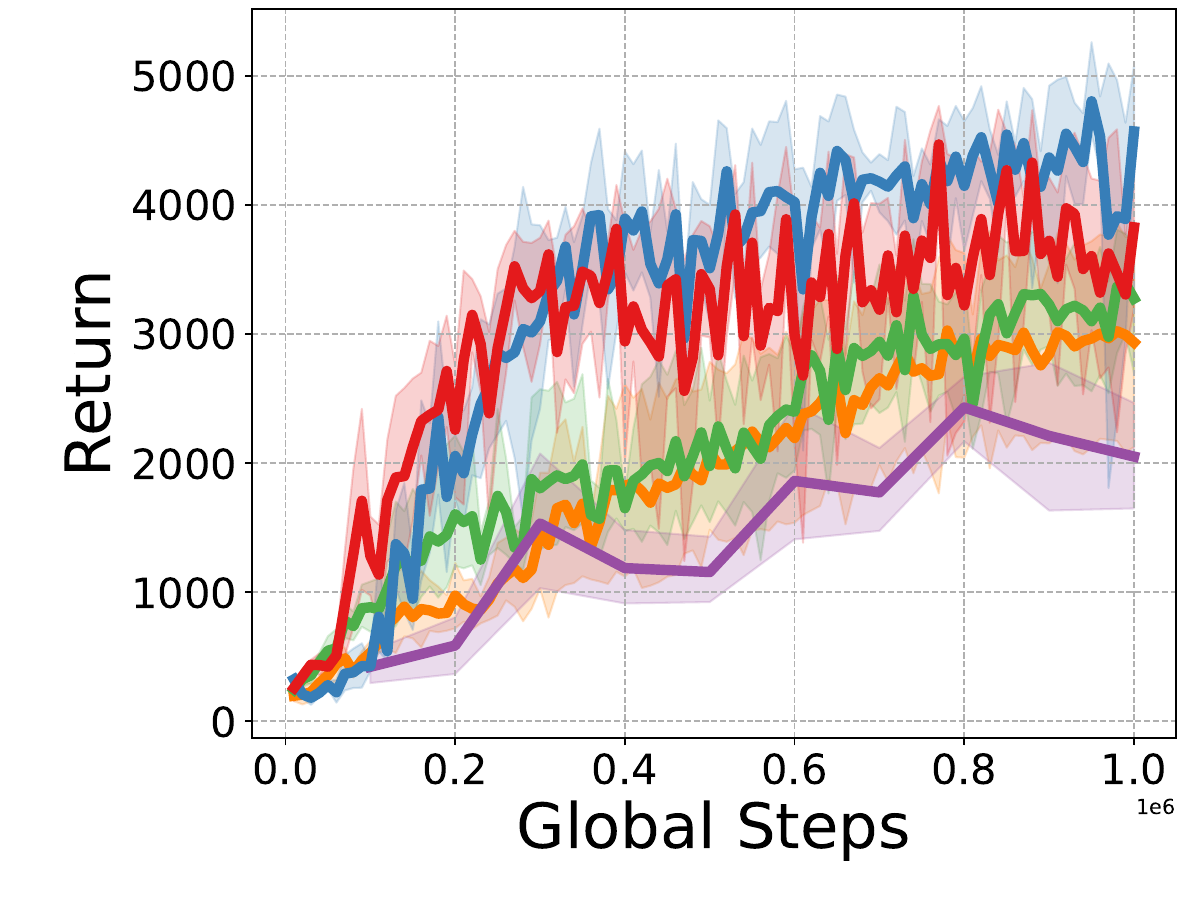}}
}
\centerline{
    \includegraphics[width=0.9\linewidth]{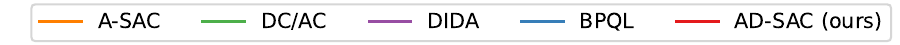}
}
\caption{Results of MuJoCo tasks with 5 delays for learning curves. The shaded areas represented the standard deviation (10 seeds).}
\label{appendix_mujoco_5_delay_step}
\end{center}
\vskip -0.3in
\end{figure*}

\begin{figure*}[h]
\begin{center}
\centerline{
    \subfigure[Ant-v4]{\includegraphics[width=0.33\linewidth]{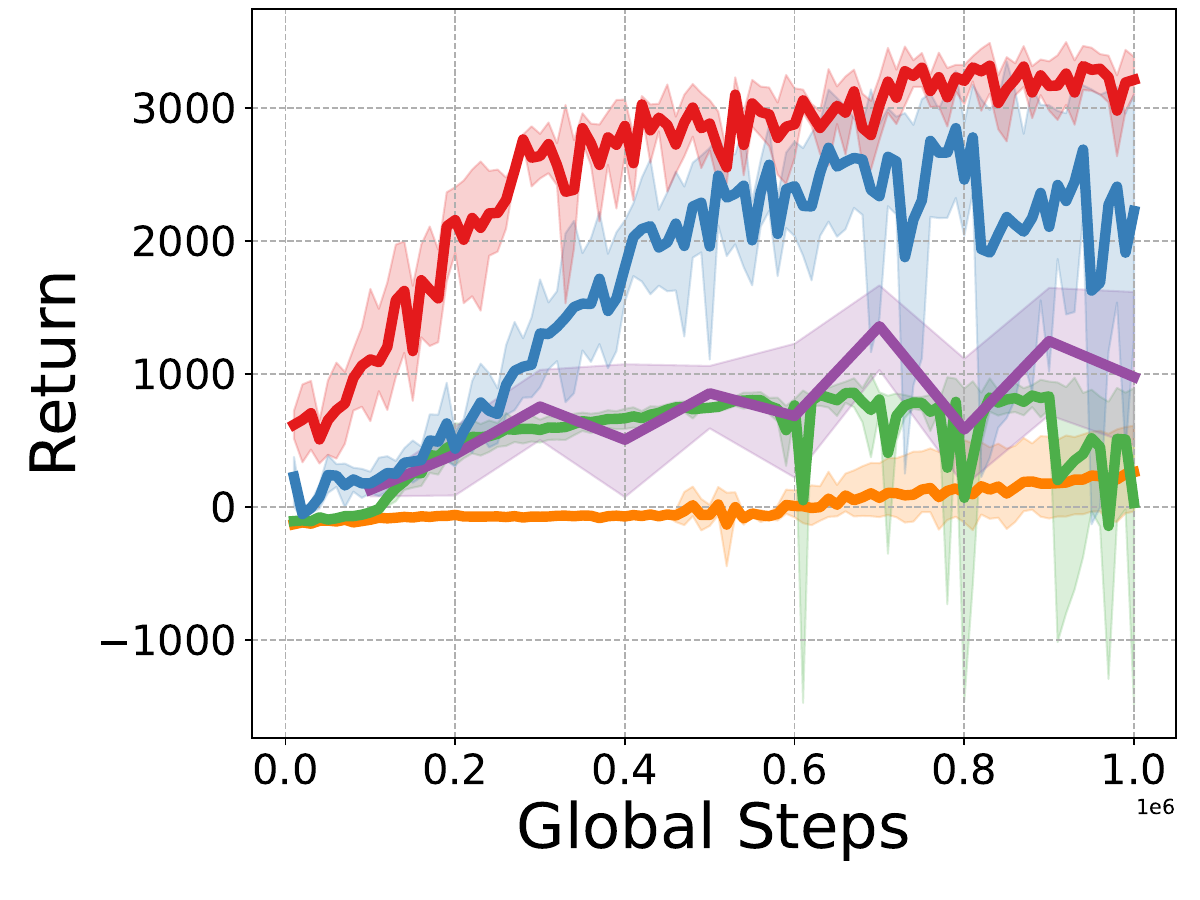}}
    \subfigure[HalfCheetah-v4]{\includegraphics[width=0.33\linewidth]{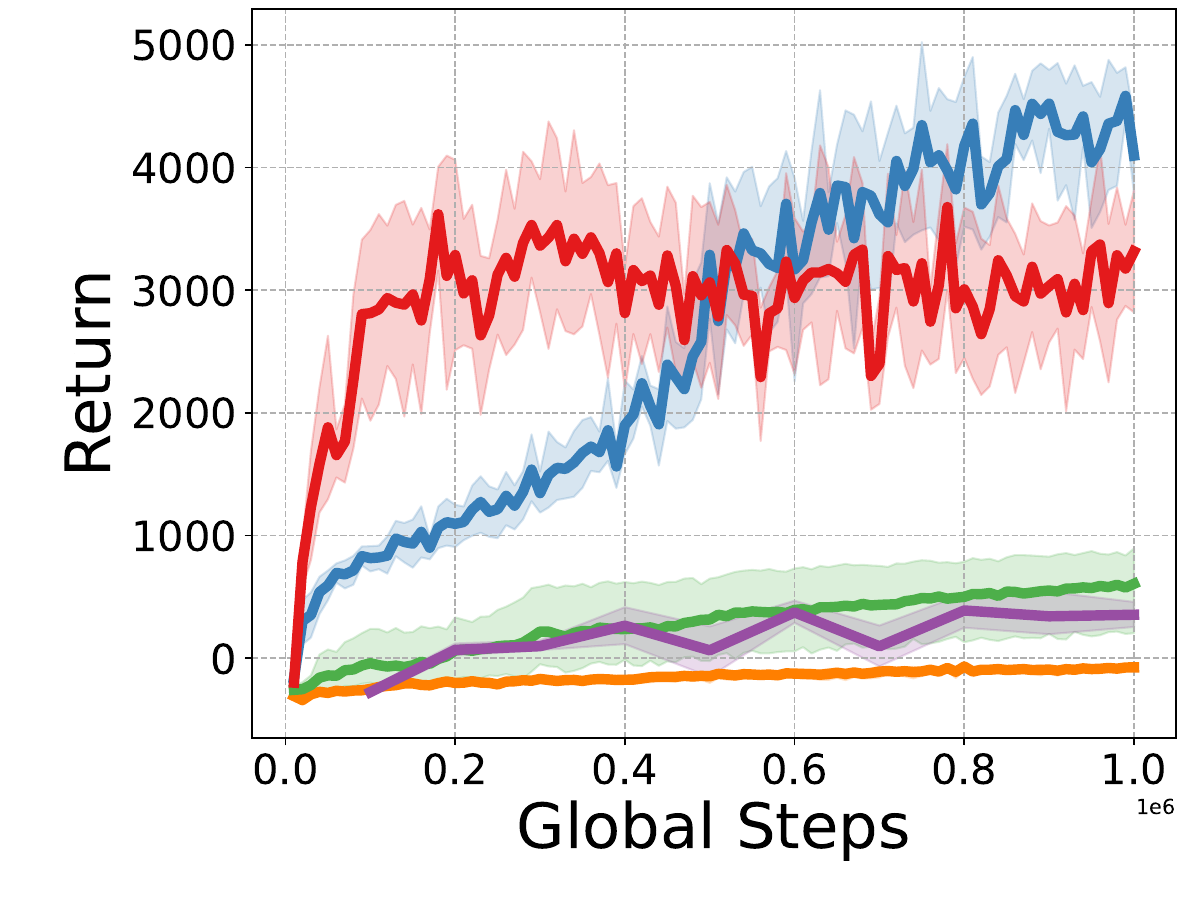}}
    \subfigure[Hopper-v4]{\includegraphics[width=0.33\linewidth]{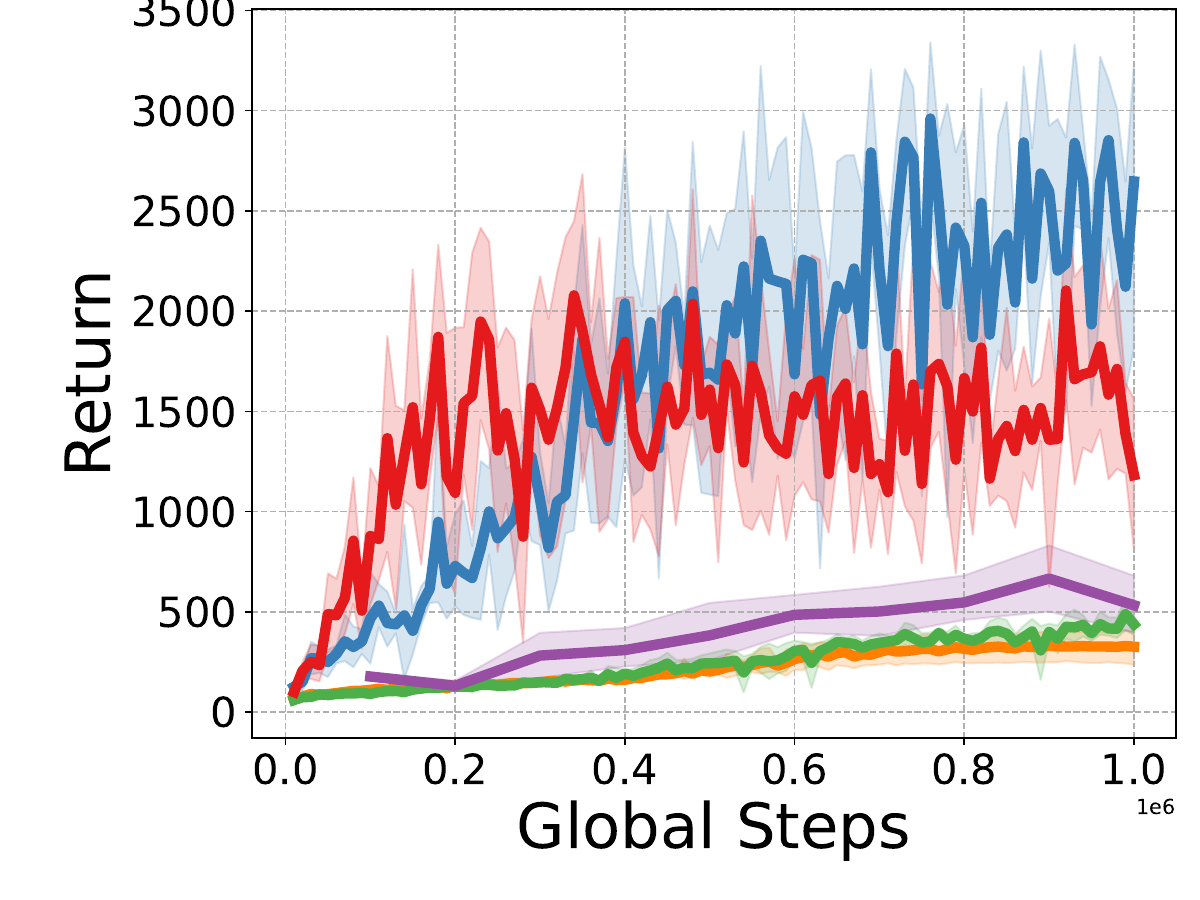}}
}
\centerline{
    \subfigure[Humanoid-v4]{\includegraphics[width=0.33\linewidth]{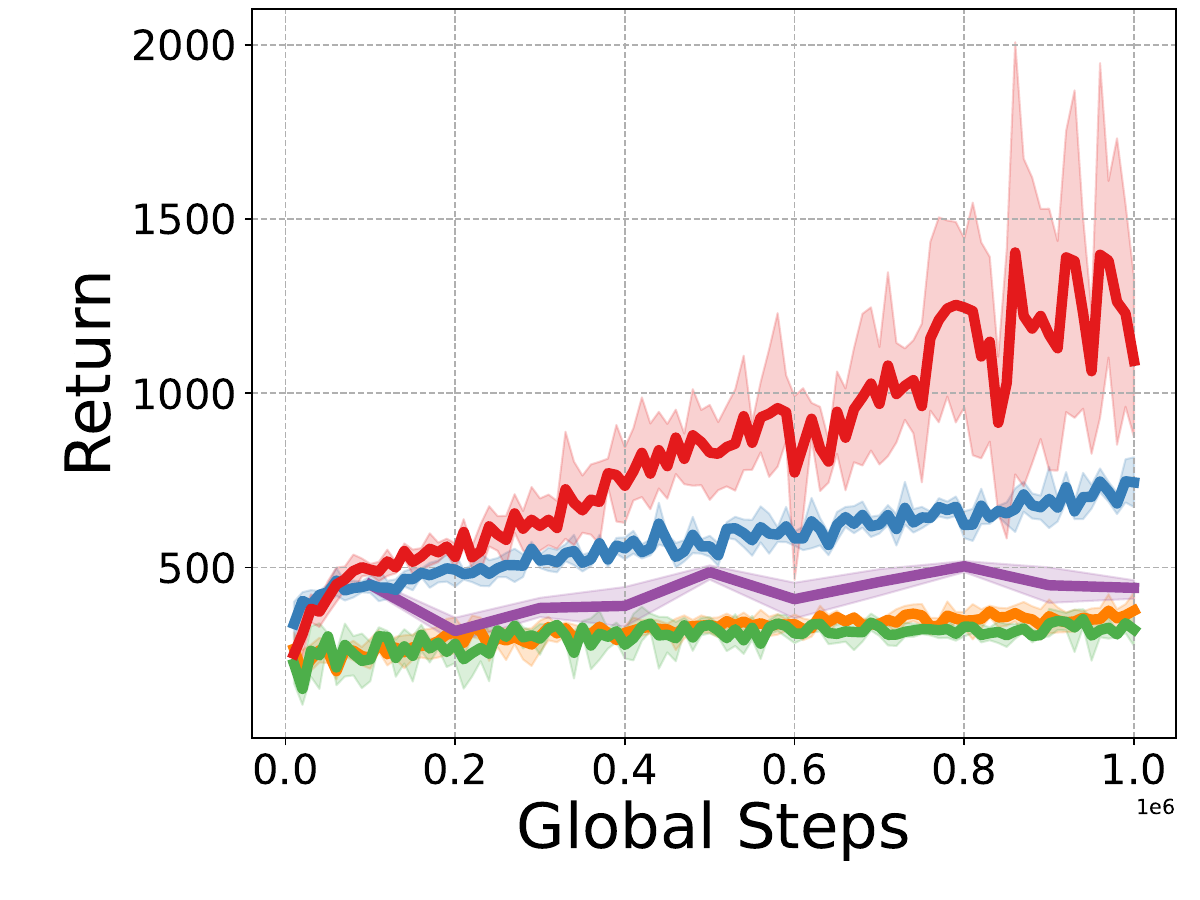}}
    \subfigure[HumanoidStandup-v4]{\includegraphics[width=0.33\linewidth]{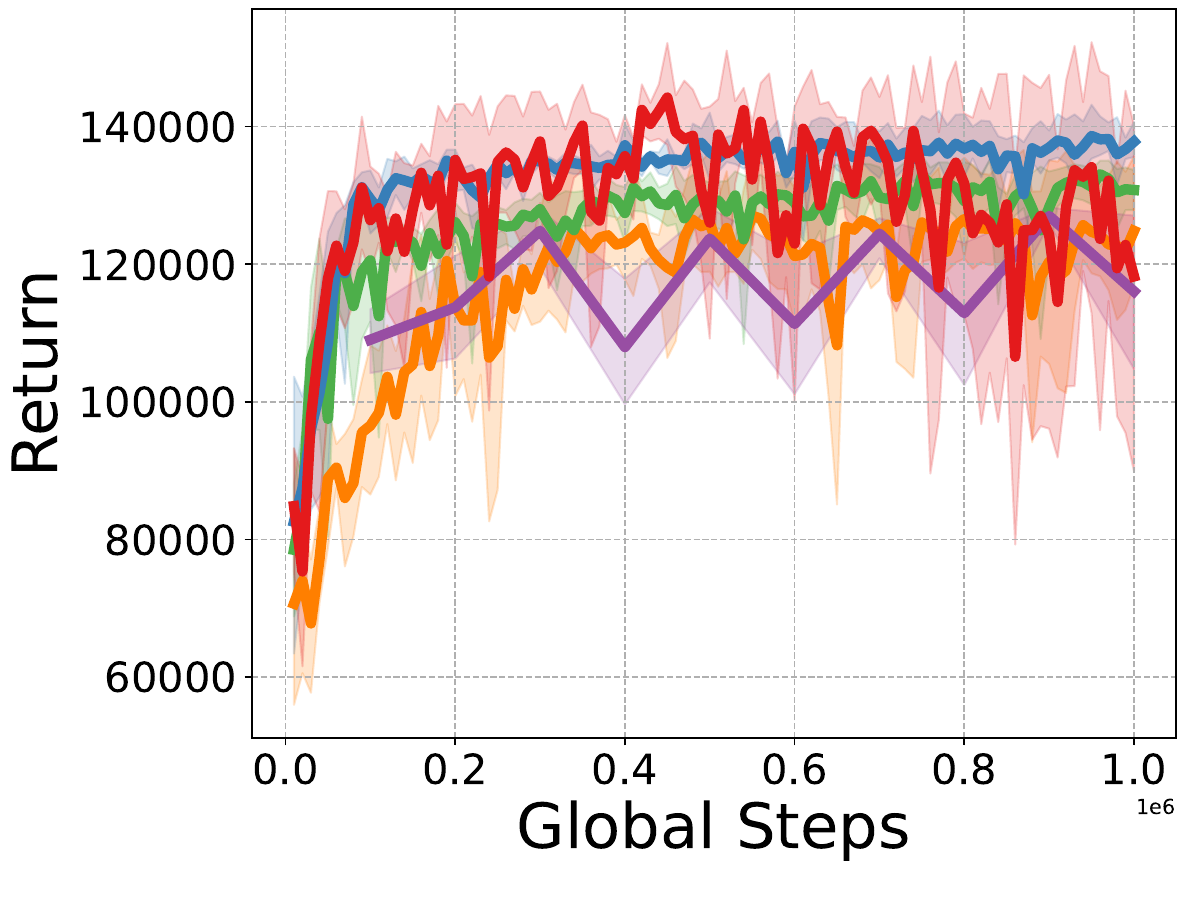}}
    \subfigure[Pusher-v4]{\includegraphics[width=0.33\linewidth]{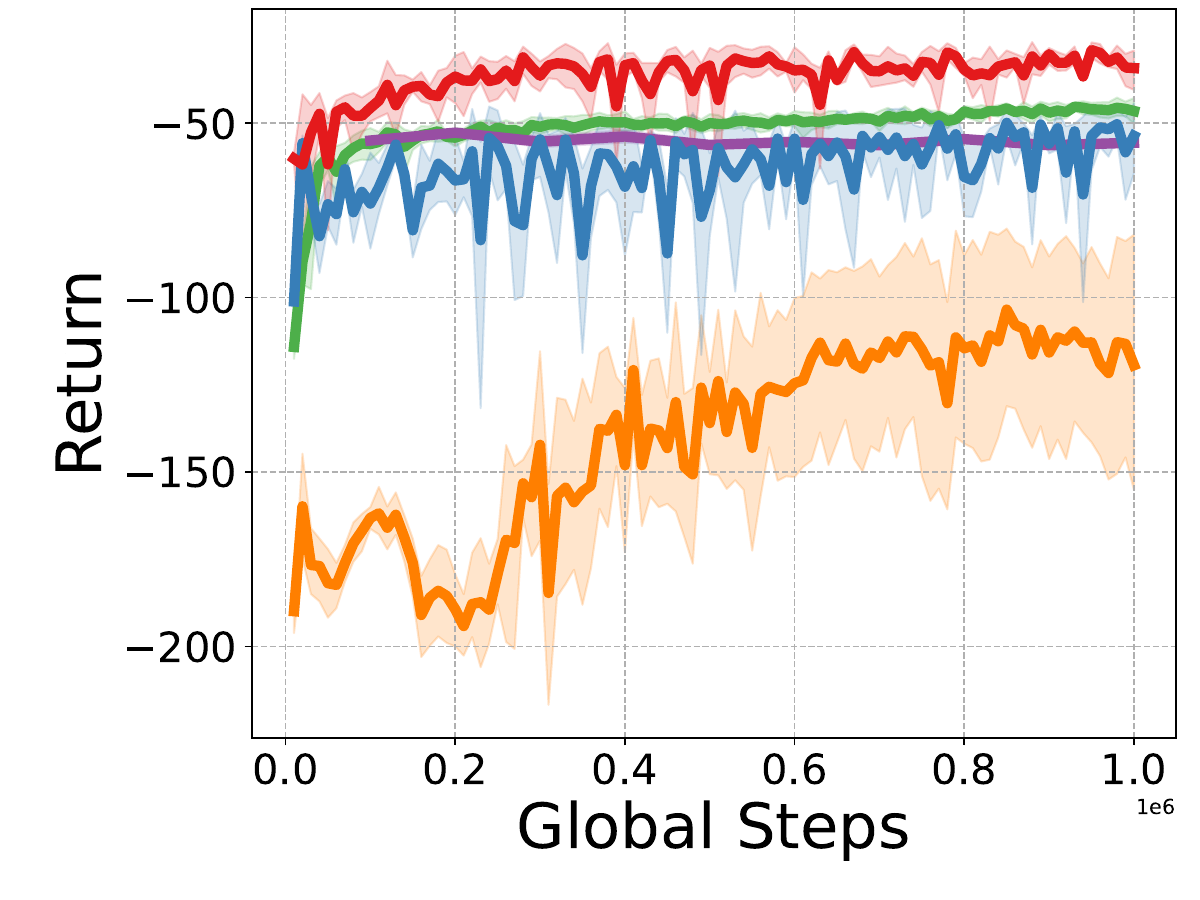}}
}
\centerline{
    \subfigure[Reacher-v4]{\includegraphics[width=0.33\linewidth]{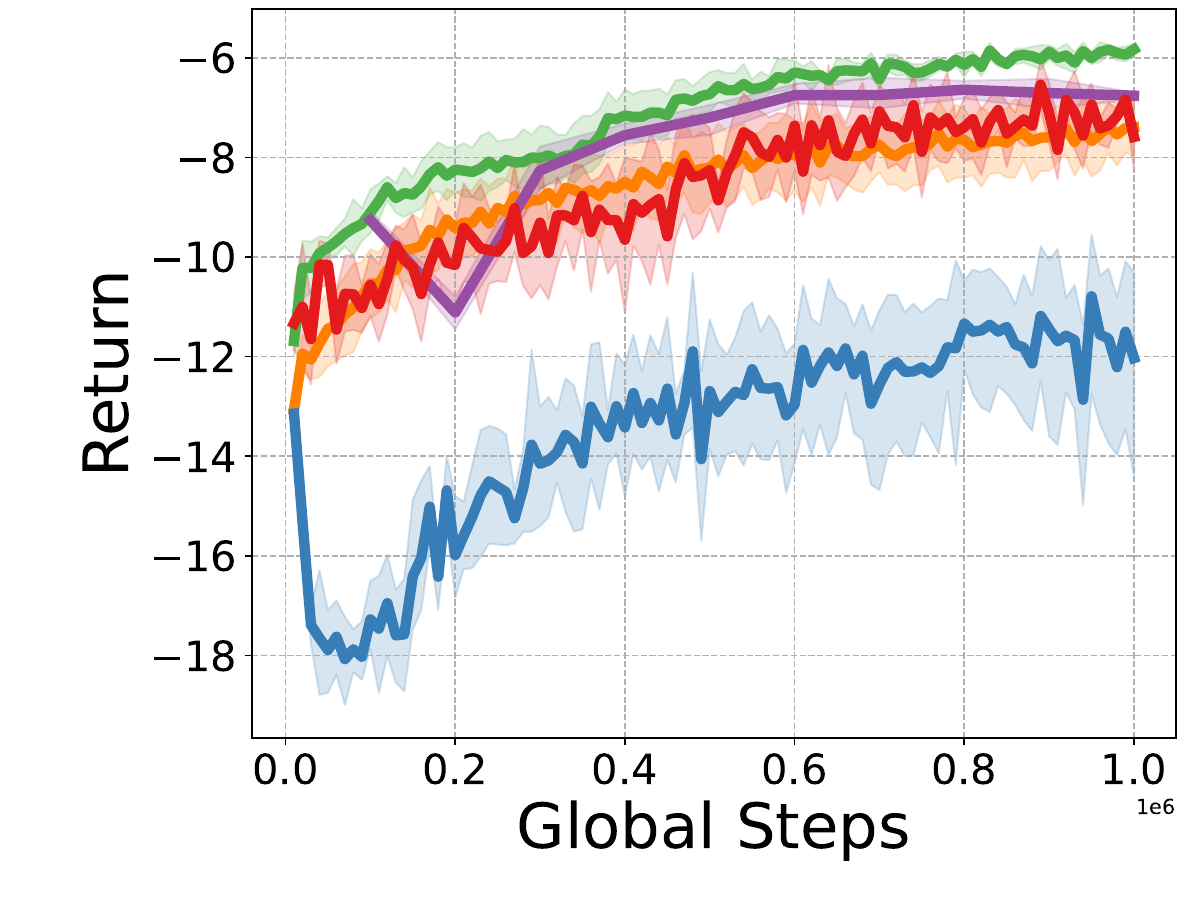}}
    \subfigure[Swimmer-v4]{\includegraphics[width=0.33\linewidth]{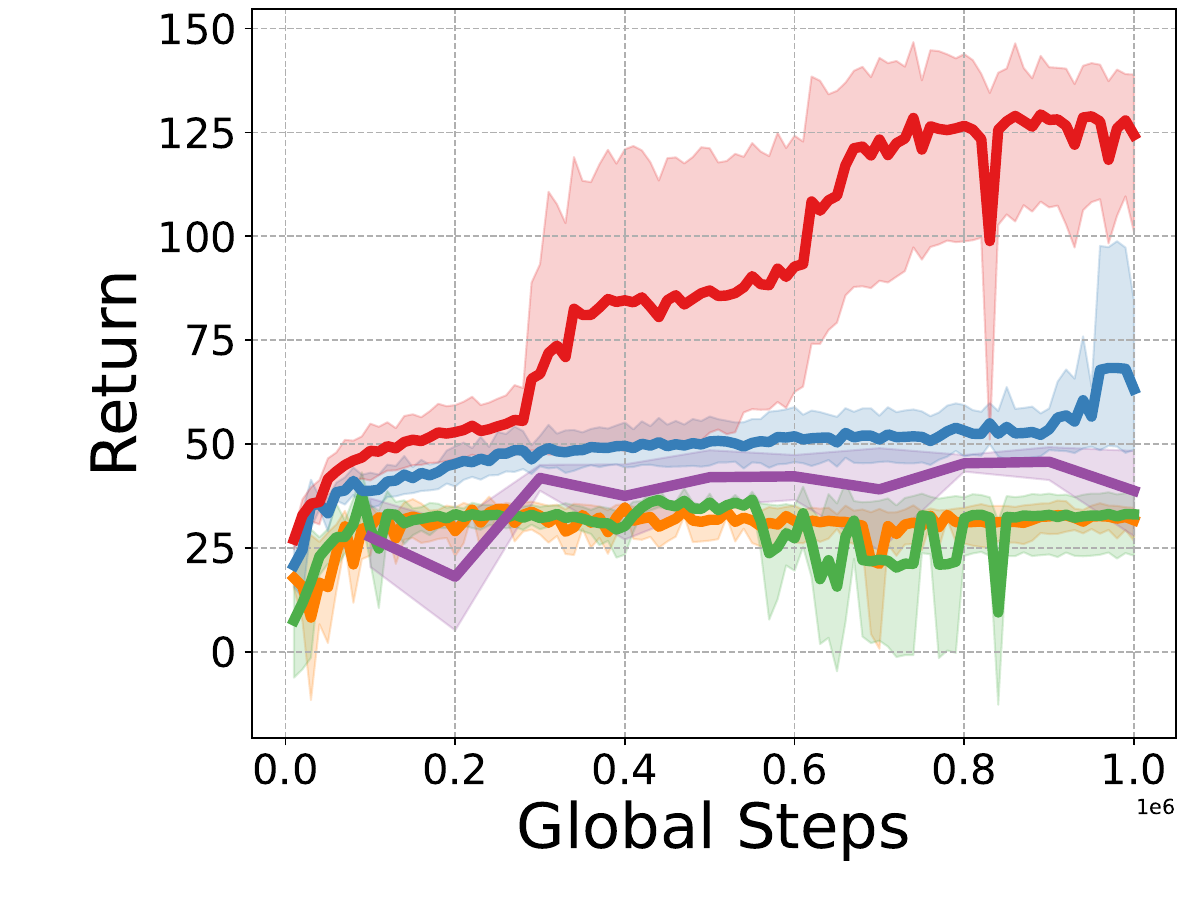}}
    \subfigure[Walker2d-v4]{\includegraphics[width=0.33\linewidth]{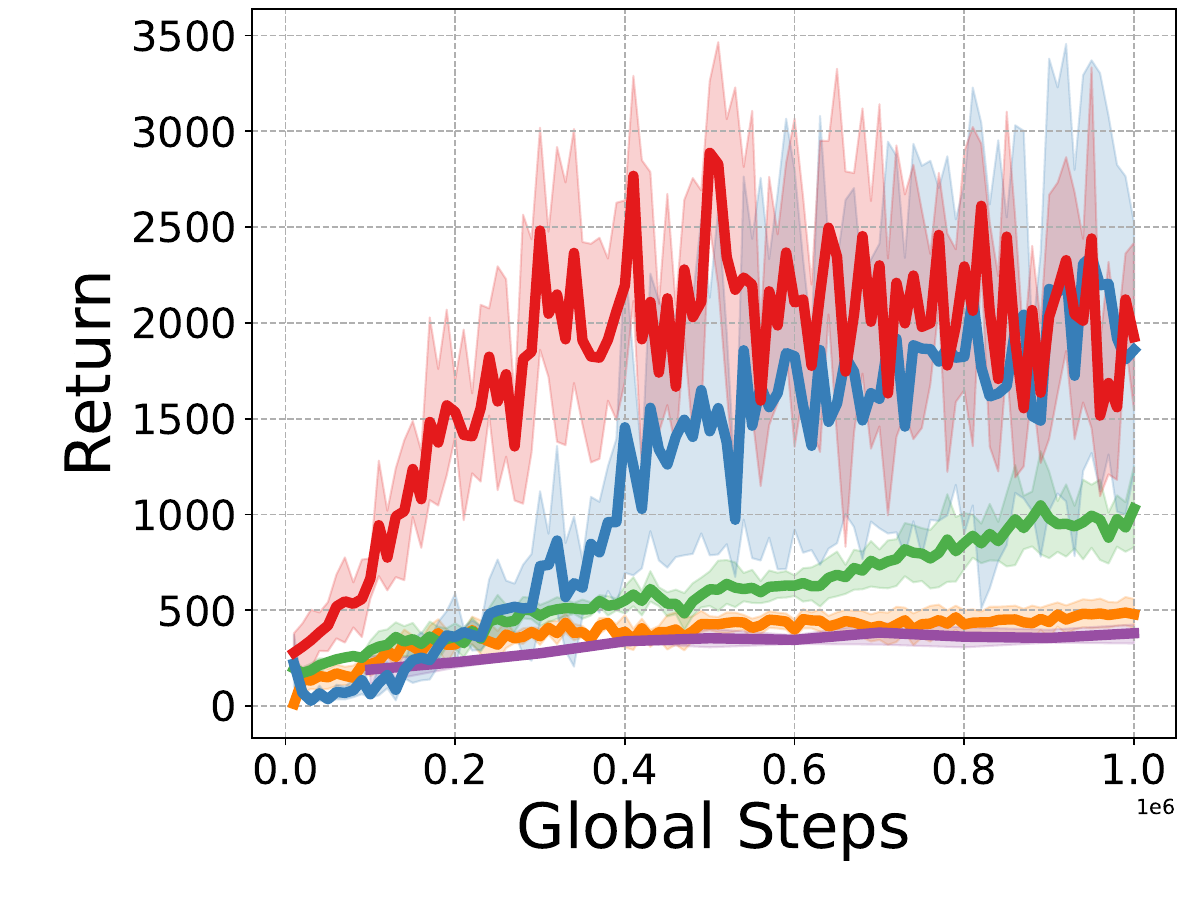}}
}
\centerline{
    \includegraphics[width=0.9\linewidth]{figs/mujoco/legend.pdf}
}
\caption{Results of MuJoCo tasks with 25 delays for learning curves. The shaded areas represented the standard deviation (10 seeds).}
\label{appendix_mujoco_25_delay_step}
\end{center}
\vskip -0.3in
\end{figure*}

\begin{figure*}[h]
\begin{center}
\centerline{
    \subfigure[Ant-v4]{\includegraphics[width=0.33\linewidth]{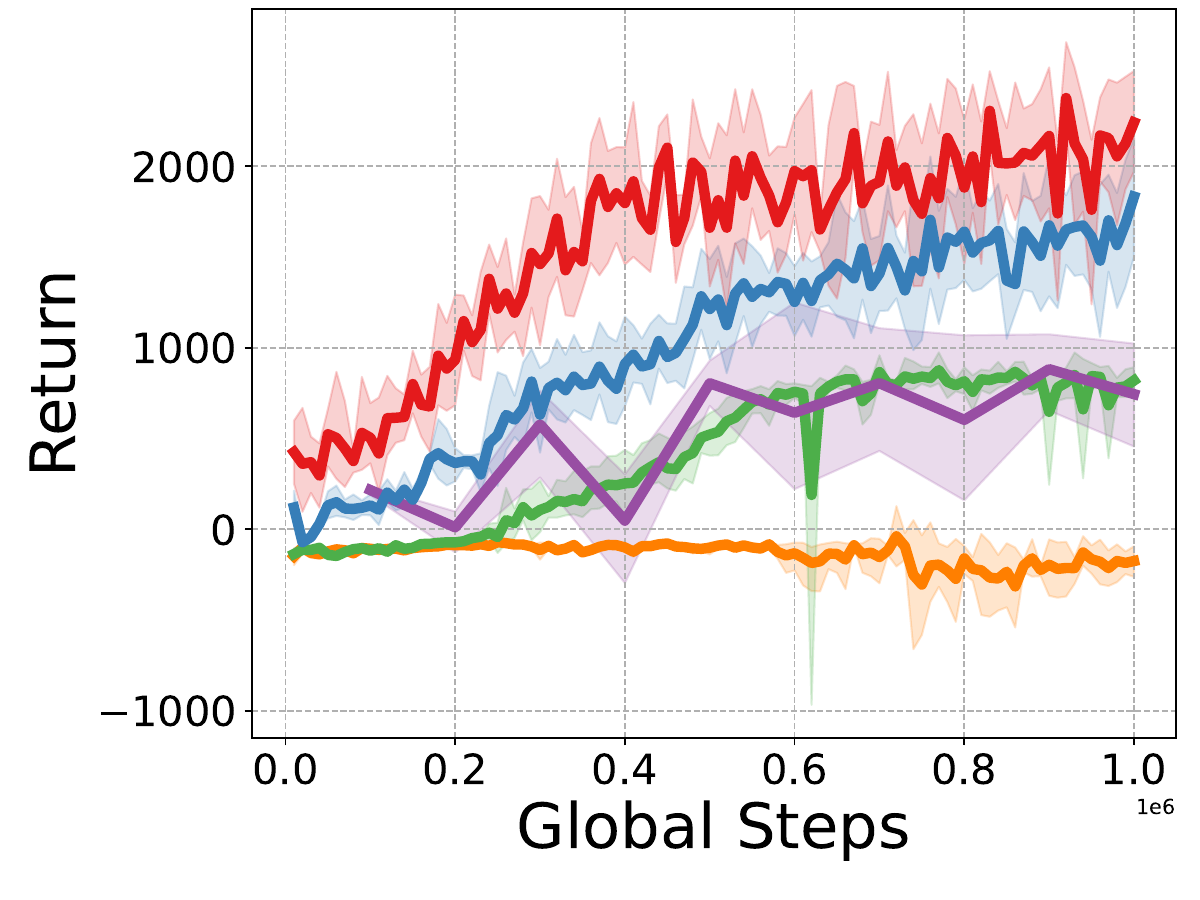}}
    \subfigure[HalfCheetah-v4]{\includegraphics[width=0.33\linewidth]{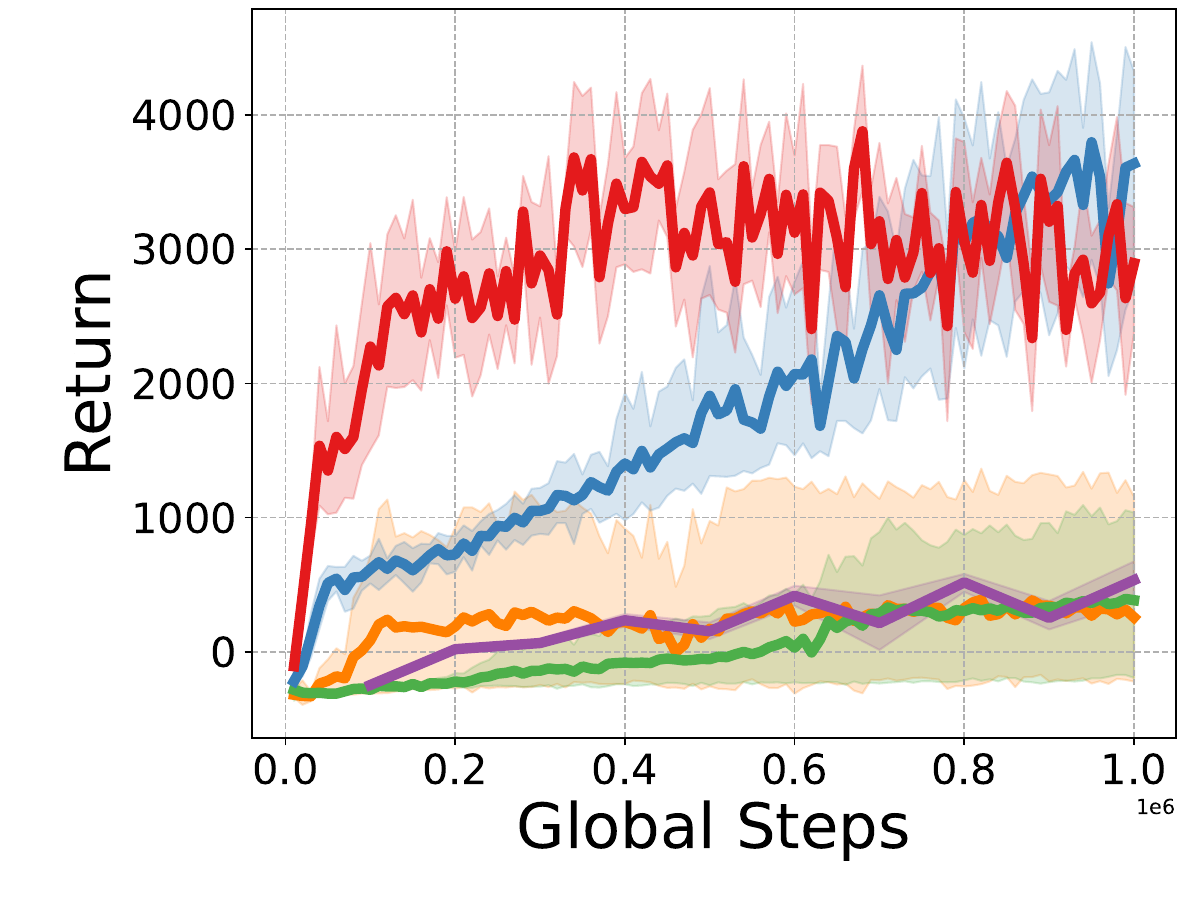}}
    \subfigure[Hopper-v4]{\includegraphics[width=0.33\linewidth]{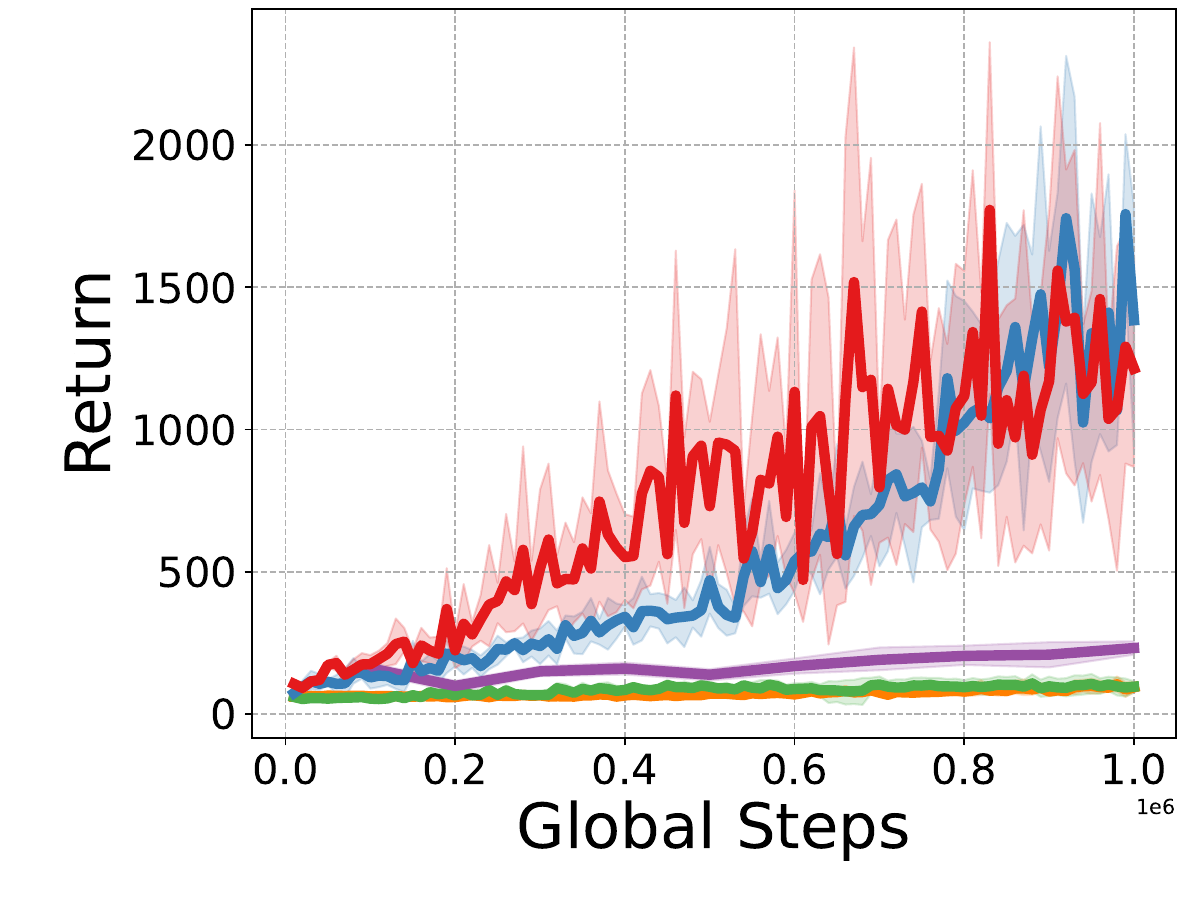}}
}
\centerline{
    \subfigure[Humanoid-v4]{\includegraphics[width=0.33\linewidth]{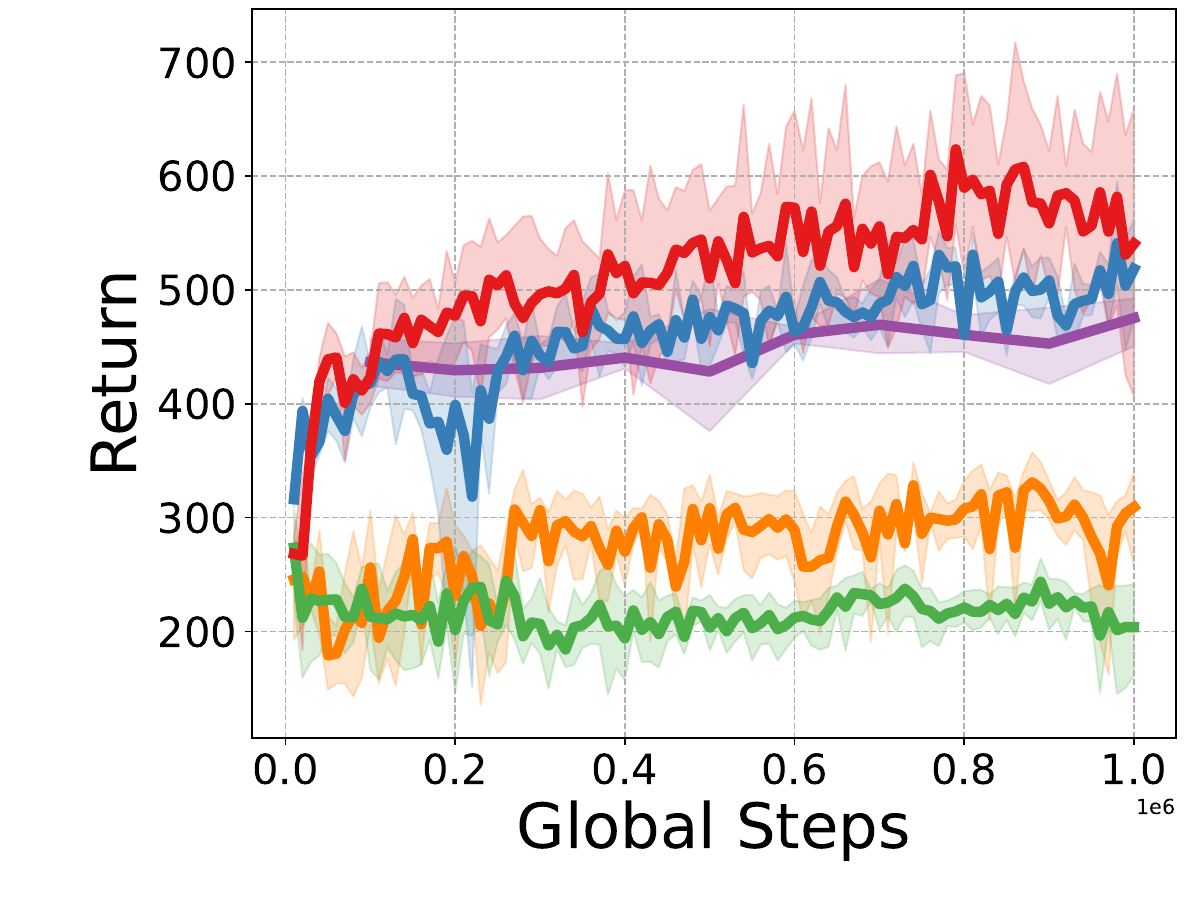}}
    \subfigure[HumanoidStandup-v4]{\includegraphics[width=0.33\linewidth]{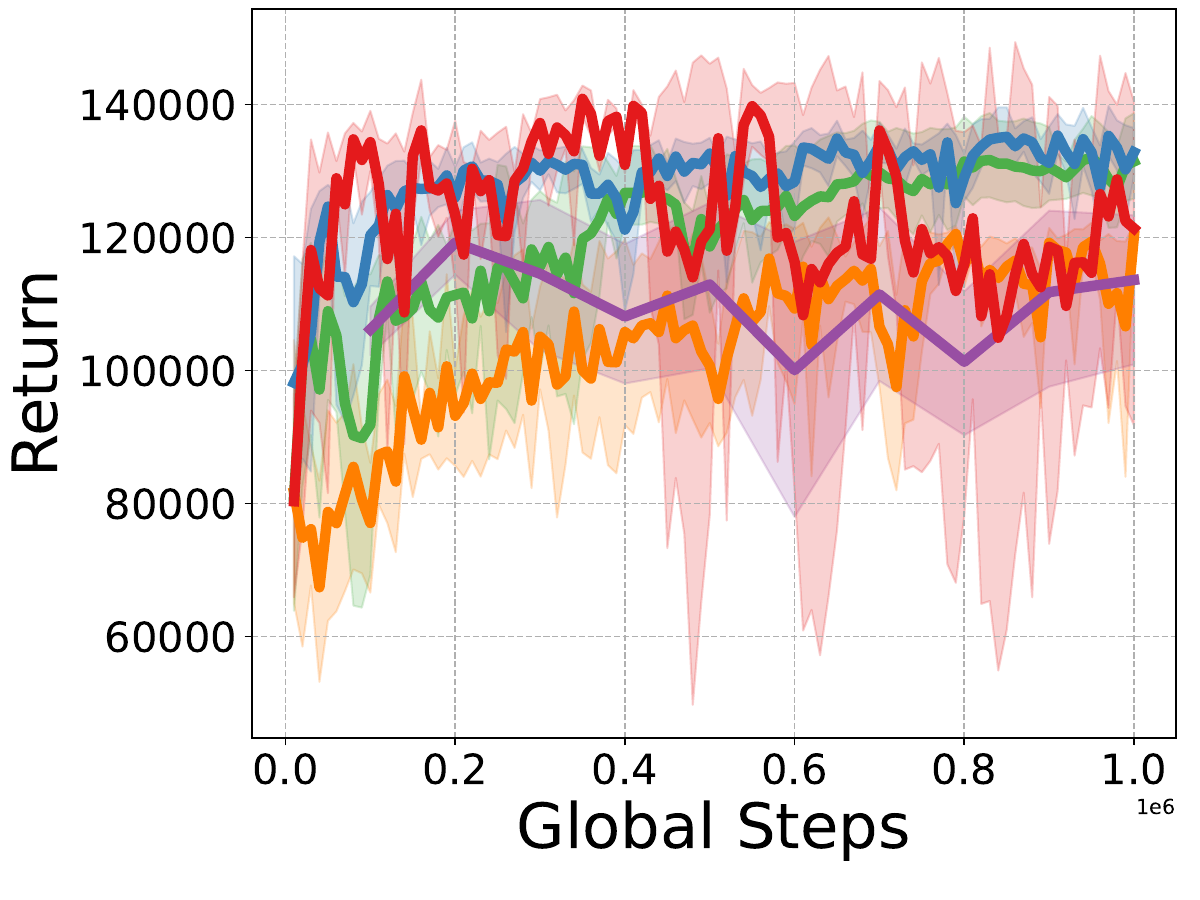}}
    \subfigure[Pusher-v4]{\includegraphics[width=0.33\linewidth]{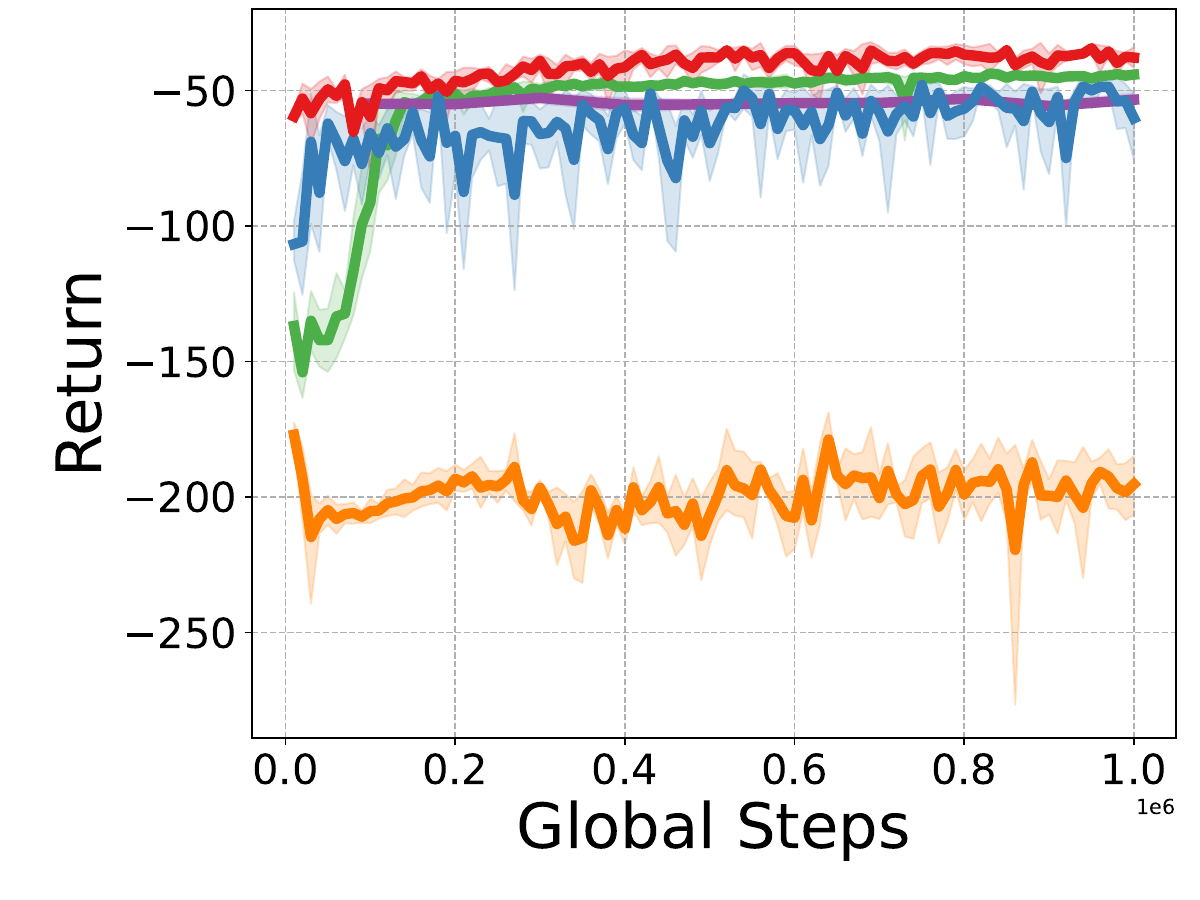}}
}
\centerline{
    \subfigure[Reacher-v4]{\includegraphics[width=0.33\linewidth]{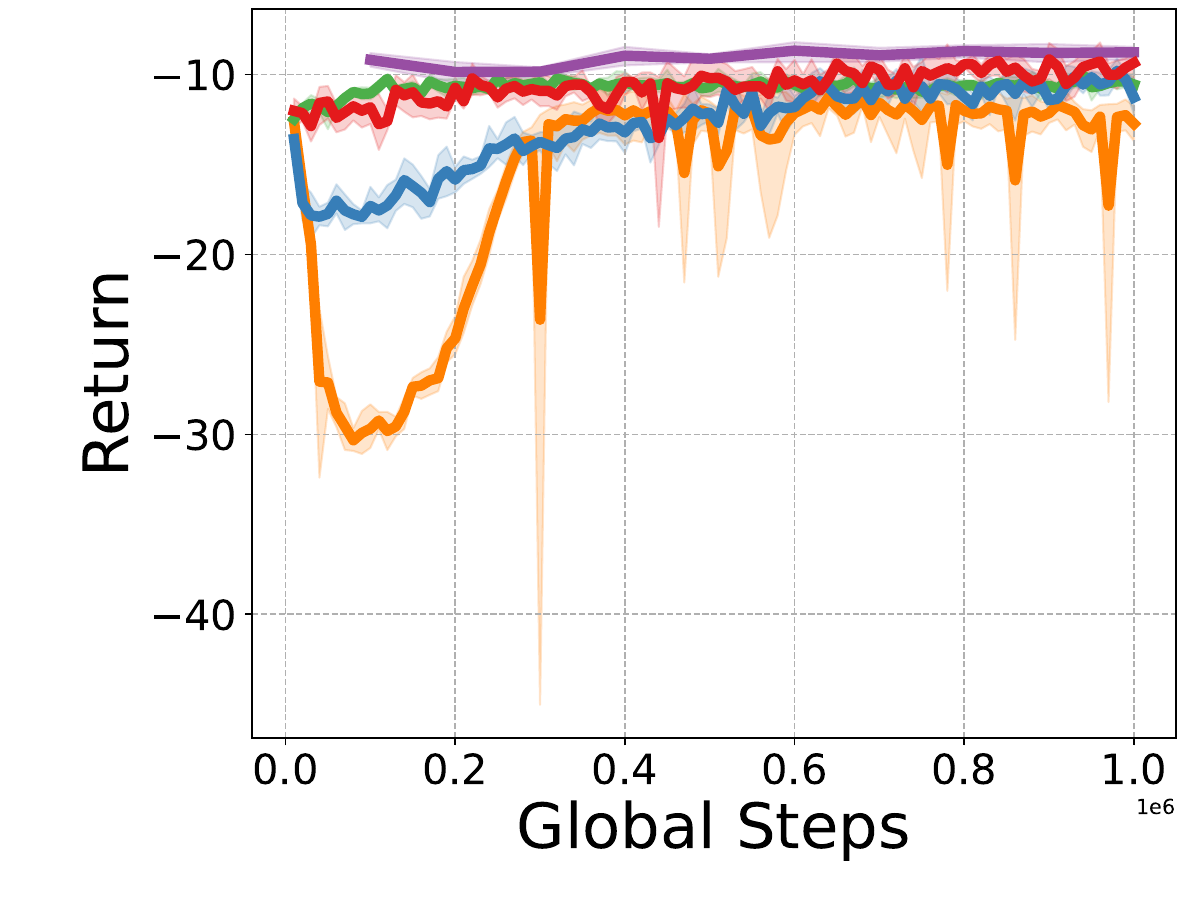}}
    \subfigure[Swimmer-v4]{\includegraphics[width=0.33\linewidth]{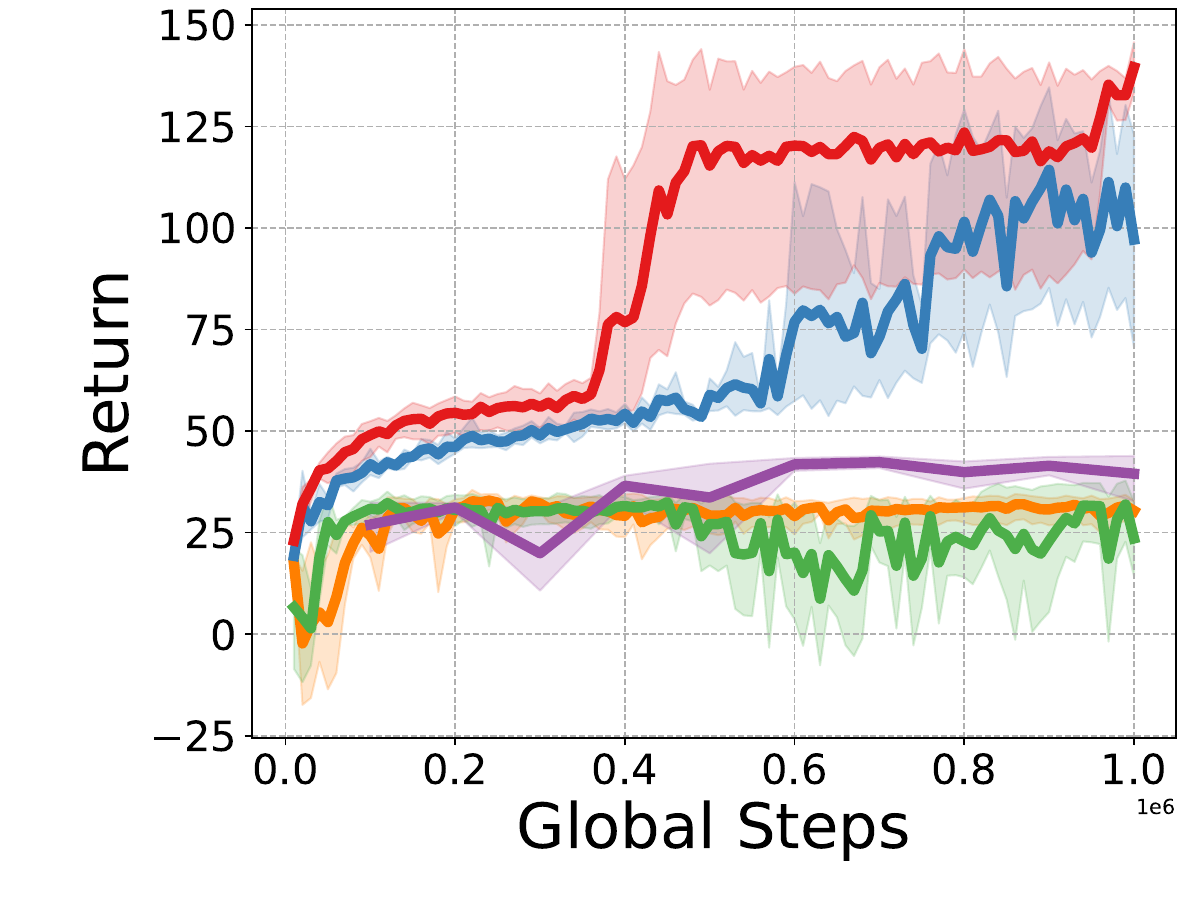}}
    \subfigure[Walker2d-v4]{\includegraphics[width=0.33\linewidth]{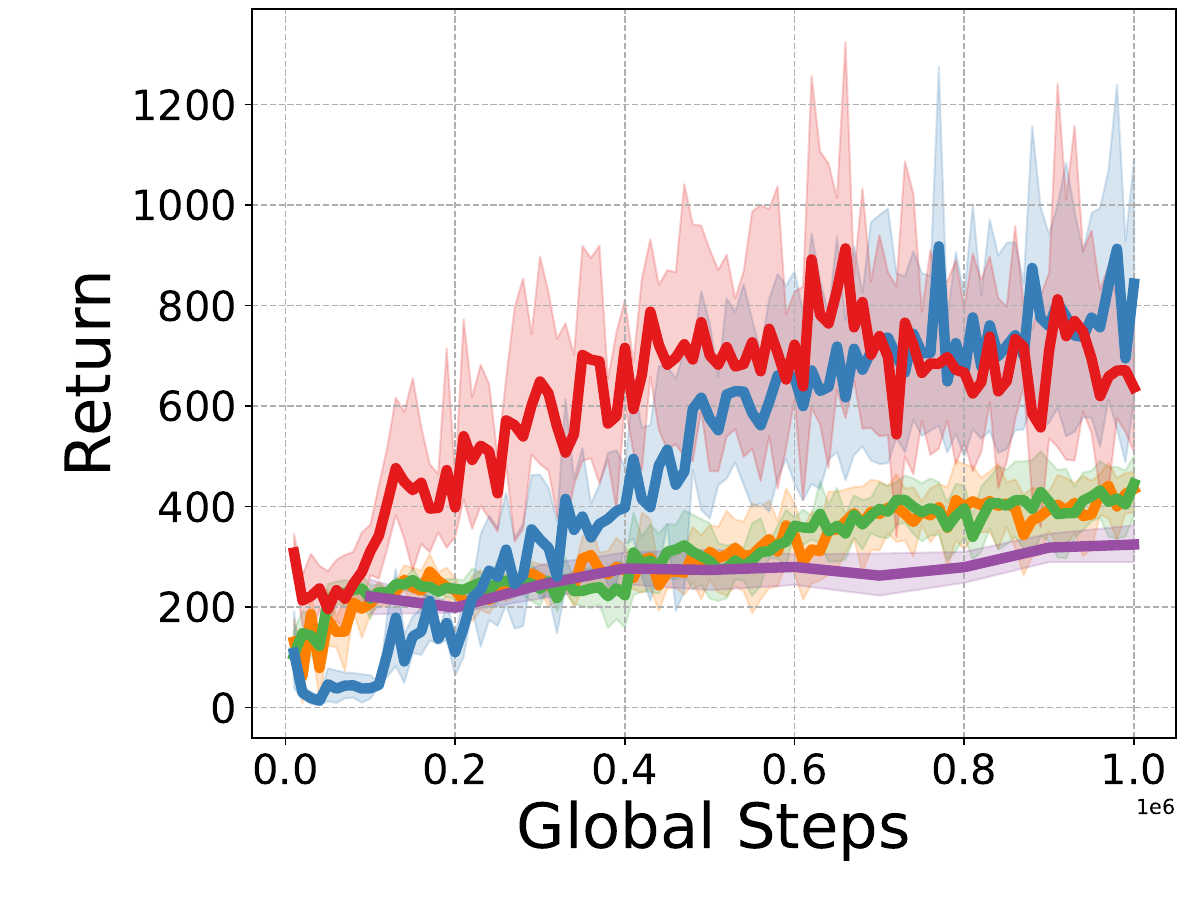}}
}
\centerline{
    \includegraphics[width=0.9\linewidth]{figs/mujoco/legend.pdf}
}
\caption{Results of MuJoCo tasks with 50 delays for learning curves. The shaded areas represented the standard deviation (10 seeds).}
\label{appendix_mujoco_50_delay_step}
\end{center}
\vskip -0.3in
\end{figure*}

\end{document}